\documentclass[10pt]{article}

\usepackage[utf8]{inputenc}
\usepackage[T1]{fontenc}

\usepackage{epsf}
\usepackage{amsmath}

\allowdisplaybreaks

\usepackage[showframe=false]{geometry}
\usepackage{changepage}

\usepackage{epsfig}
\usepackage{amssymb}

\usepackage{amsthm}
\usepackage{setspace}
\usepackage{cite}
\usepackage{mcite}

\usepackage{algorithmic}  
\usepackage{algorithm}

\usepackage{shadow}
\usepackage{fancybox}
\usepackage{fancyhdr}

\usepackage{color}
\usepackage[usenames,dvipsnames,svgnames,table]{xcolor}
\newcommand{\bl}[1]{\textcolor{blue}{#1}}
\newcommand{\red}[1]{\textcolor{red}{#1}}

\definecolor{mypurple}{rgb}{.4,.0,.5}

\usepackage[hyphens]{url}

\usepackage[colorlinks=true,
            linkcolor=black,
            urlcolor=blue,
            citecolor=purple]{hyperref}

\usepackage{breakurl}

\def\y{{\bf y}}

\def\x{{\bf x}}

\def\x{{\mathbf x}}

\def\u{{\bf u}}

\def\x{{\bf x}}
\def\y{{\bf y}}
\def\z{{\bf z}}
\def\q{{\bf q}}

\def\c{{\bf c}}

\def\h{{\bf h}}

\def\cA{{\mathcal A}}
\def\cH{{\mathcal H}}

\def\be{\begin{equation}}
\def\ee{\end{equation}}
\def\ba{\left[\begin{array}}
\def\ea{\end{array}\right]}

\def\u{{\bf u}}

\def\x{{\bf x}}
\def\y{{\bf y}}
\def\z{{\bf z}}
\def\q{{\bf q}}

\def\c{{\bf c}}

\def\p{{\bf p}}

\def\1{{\bf 1}}

\def\0{{\bf 0}}

\def\erf{\mbox{erf}}
\def\erfc{\mbox{erfc}}

\def\calX{{\cal X}}
\def\calY{{\cal Y}}







\def\mR{{\mathbb R}}
\def\mN{{\mathbb N}}
\def\mE{{\mathbb E}}
\def\mS{{\mathbb S}}
\def\mP{{\mathbb P}}

\def\lp{\left (}
\def\rp{\right )}

\sloppy



\def\y{{\bf y}}

\def\x{{\bf x}}

\def\x{{\mathbf x}}

\def\u{{\bf u}}

\def\x{{\bf x}}
\def\y{{\bf y}}
\def\z{{\bf z}}
\def\q{{\bf q}}

\def\c{{\bf c}}

\def\h{{\bf h}}

\def\cH{{\cal H}}

\def\be{\begin{equation}}
\def\ee{\end{equation}}
\def\ba{\left[\begin{array}}
\def\ea{\end{array}\right]}

\def\u{{\bf u}}

\def\x{{\bf x}}
\def\y{{\bf y}}
\def\z{{\bf z}}
\def\q{{\bf q}}

\def\c{{\bf c}}

\def\p{{\bf p}}

\def\({\left (}
\def\){\right )}

\def\1{{\bf 1}}

\def\q{{\bf q}}

\def\0{{\bf 0}}

\def\cX{{\mathcal X}}
\def\cY{{\mathcal Y}}

\usepackage{xcolor}
\usepackage{color}

\definecolor{darkgreen}{rgb}{0, 0.4,0}

\definecolor{purplebrown}{rgb}{0.5,0.1,0.6}

\definecolor{ultclupcol}{rgb}{0.1,0.5,0.5}

\definecolor{mytrycolor}{rgb}{0.5,0.7,0.2}


\definecolor{ultclupcola}{rgb}{.5,0,.5}

\definecolor{shadebrown}{rgb}{0.1,0.1,0.9}
\definecolor{lightblue}{rgb}{0.2,0,1}


\usepackage{fancybox}
\usepackage{graphicx}
\usepackage{epstopdf}
\usepackage{epsfig}
\usepackage{wrapfig}
\usepackage{subfigure}

\usepackage{xcolor}
\usepackage{tcolorbox}
\tcbuselibrary{skins}

%
%


\newtcbox{\xmybox}{on line,
arc=7pt,
before upper={\rule[-3pt]{0pt}{10pt}},boxrule=0pt,
boxsep=0pt,left=6pt,right=6pt,top=0pt,bottom=0pt,enhanced, coltext=blue, colback=white!10!yellow}

\newtcbox{\xmyboxa}{on line,
arc=7pt,
before upper={\rule[-3pt]{0pt}{10pt}},boxrule=0pt,
boxsep=0pt,left=6pt,right=6pt,top=0pt,bottom=0pt,enhanced, colback=white!10!yellow}

\newtcbox{\xmyboxb}{on line,
arc=7pt,
before upper={\rule[-3pt]{0pt}{10pt}},boxrule=1pt,colframe=darkgreen!100!blue,
boxsep=0pt,left=6pt,right=6pt,top=0pt,bottom=0pt,enhanced, colback=white!10!yellow}

\newtcbox{\xmyboxc}{on line,
arc=7pt,
before upper={\rule[-3pt]{0pt}{10pt}},boxrule=.7pt,colframe=blue!100!blue,
boxsep=0pt,left=6pt,right=6pt,top=0pt,bottom=0pt,enhanced, coltext=blue, colback=white!10!yellow}

\newtcbox{\xmytboxa}{on line,
arc=7pt,
before upper={\rule[-3pt]{0pt}{10pt}},boxrule=.0pt,colframe=pink!50!yellow,
boxsep=0pt,left=6pt,right=6pt,top=0pt,bottom=0pt,enhanced, coltext=white, colback=blue!40!red}

\newtcbox{\xmytboxb}{on line,
arc=7pt,
before upper={\rule[-3pt]{0pt}{10pt}},boxrule=.0pt,colframe=pink!50!yellow,
boxsep=0pt,left=6pt,right=6pt,top=0pt,bottom=0pt,enhanced, coltext=white, colback=white!40!green}


%
%
%
%

\setcounter{secnumdepth}{5}
\setcounter{tocdepth}{5}

\makeatletter
\newcommand\subsubsubsection{\@startsection{paragraph}{4}{\z@}{-2.5ex\@plus -1ex \@minus -.25ex}{1.25ex \@plus .25ex}{\normalfont\normalsize\bfseries}}
\newcommand\subsubsubsubsection{\@startsection{subparagraph}{5}{\z@}{-2.5ex\@plus -1ex \@minus -.25ex}{1.25ex \@plus .25ex}{\normalfont\normalsize\bfseries}}
\makeatother


\newtheorem{theorem}{Theorem}

\newtheorem{corollary}{Corollary}

\setlength{\oddsidemargin}{0in} \setlength{\evensidemargin}{0in}
\setlength{\textwidth}{6.5in} 
\setlength{\textheight}{9in} 
\setlength{\topmargin}{-0.25in}

\begin{document}

\begin{singlespace}

\title {Injectivity capacity of ReLU gates  
}
\author{
\textsc{Mihailo Stojnic
\footnote{e-mail: {\tt flatoyer@gmail.com}} }}
\date{}
\maketitle

\centerline{{\bf Abstract}} \vspace*{0.1in}

We consider the injectivity property of the ReLU networks layers.  Determining the ReLU injectivity capacity (ratio of the number of layer's inputs and outputs) is established as isomorphic to determining the capacity of the so-called $\ell_0$ spherical perceptron. Employing \emph{fully lifted random duality theory} (fl RDT)  a powerful program is developed and utilized to handle the $\ell_0$ spherical perceptron and implicitly the ReLU layers injectivity. To put the entire fl RDT machinery in  practical use, a sizeable set of numerical evaluations is conducted as well. The lifting mechanism is observed to converge remarkably fast with relative corrections in the estimated quantities not exceeding $\sim 0.1\%$ already on the third level of lifting. Closed form explicit analytical relations among key lifting parameters are uncovered as well. In addition to being of incredible importance in handling all the required numerical work, these relations  also  shed a new light on  beautiful parametric interconnections within the lifting structure. Finally, the obtained results are also shown to fairly closely match the replica predictions from \cite{MBBDN23}.

\vspace*{0.25in} \noindent {\bf Index Terms: Injectivity; ReLU networks; Random duality}.

\end{singlespace}

\section{Introduction}
\label{sec:back}

Over the last 15-20 years we have been witnessing a rapid development of machine learning (ML) and neural networks (NN) concepts. As the need for efficient processing and interpretation of large data sets is estimated to further grow in the years to come, many fundamental algorithmic and theoretical NN breakthroughs are to be expected. To be able to adequately address upcoming challenges an excellent understanding of the ultimate limits of the employed technologies is needed. We in this paper study a mathematical problem that is directly connected to a notion of network capacity which is an example of such a limit. 

Characterizing presence or absence of \emph{injectivity} as a property of random functions is the mathematical problem of our interest here. The mere definition of the functional injectivity implies its critical role in studying inverse problems. Namely, well- or ill-posedness of these problems is in a direct correspondence with the associated injectivity. Recent utilization of neural networks in studying (nonlinear) inverse problems therefore critically relies on their injectivity properties (see, e.g., \cite{ArridgeMOS19,BoraJPD17,KothariKHD21,HandLV18,DharGE18,LeiJDD19,EstrachSL14,DaskalakisRZ20a}) . Consequently, injectivity as a purely mathematical object is in these contexts transformed into a practically rather important NN architectures feature. Various NN injectivity aspects (related or unrelated to inverse problems) have been of interest in recent years (see, e.g., \cite{FazlyabRHMP19,GoukFPC21,JordanD20} for stability Lipshitzian/injectivity properties, \cite{PuthawalaKLDH22,PuthawalaLDH22,RossC21} for a role of the injective ReLU nets in approximative maps and manifold densities,  \cite{FletcherRS18,LeiJDD19,RomanoEM17,HeWJ17,MardaniSDPMVP18,SchniterRF16,RanganSF17,ShahH18,HegdeWB07,HandLV18,WuRL19}  for algorithmic approaches to (deep and/or linear) generative models,
  and \cite{MBBDN23,PenningtonW17,LouartLC17} and references therein for rather interesting random matrix - random neural networks perspectives).

To understand the injectivity of a whole network it is typically desirable to understand the injectivity of a subset of its layers. We here take a first step in that direction and consider a single layer of a ReLU network (clearly, in single layer networks it would actually be the whole network). Of particular interest is the so-called injectivity \emph{capacity}, defined as the ratio of the layer's input and output size. Although conceptually somewhat connected, these capacities are actually different from the ones studied in classical spherical perceptrons related literature
\cite{Schlafli,Cover65,Winder,Winder61,Wendel,Cameron60,Joseph60,Gar88,Ven86,BalVen87}. The classical capacities typically relate to the network's ability to memorize/store patterns. Throughout the presentation below, it will be shown that the injectivity capacities can in fact be reformulated to become capacities of variants of classical perceptrons. We refer to such variant as $\ell_0$ \emph{spherical perceptrons}.

Majority of the above injectivity related works rely on so-called \emph{qualitative} performance characterizations. These usually give correct orders of magnitude of the studied quantities and provide a good guidance regarding the overall abilities and usefulness of the underlying methodologies. As we will be interested in more precise, i.e.,  \emph{quantitative} performance characterizations, more closely related to our work are results from \cite{Pal21,Clum22,MBBDN23,PuthawalaKLDH22}. They are discussed throughout the presentation after we introduce all the needed technical prerequisites. Before proceeding  with the detailed technical considerations, we in the following section present a summary of our key results.

\subsection{Our contributions}
\label{sec:contrib}

As stated above, given the recent popularity of machine learning and consequential importance of understanding key neural networks features, we study the injectivity as a property particularly associated with ReLU gates. The focus is on single layers (as mentioned above, understanding them is a key building block if one aims to eventually understand more complex architectures). The results summarized below are obtained for fully connected feed forward architecture in a statistical so-called \emph{linear/proportional} high-dimensional regime (the layer's number of inputs is $\alpha$ times larger than its number of outputs; $\alpha$ is precisely the capacity and remains constant as the number of inputs/outputs grows).

\begin{itemize}
  \item Following into the footsteps of \cite{StojnicGardGen13,StojnicGardSphErr13,Stojnicnegsphflrdt23}, we recognize the connection between studying ReLU injectivity properties and random feasibility problem (rfps). In particular (see Section \ref{sec:mathsetup}), we observe
      \begin{eqnarray}      \label{eq:eqint1}
      \mbox{ReLU injectivity} \qquad \Longleftrightarrow \qquad \ell_0 \quad \mbox{spherical perceptron},
      \end{eqnarray}
      which then implies
      \begin{eqnarray}      \label{eq:eqint2}
      \mbox{ReLU injectivity capacity} \qquad = \qquad \ell_0 \quad \mbox{spherical perceptron capacity}.
      \end{eqnarray}
  \item Studying rfps is then related to studying \emph{bilinearly index} (bli) random processes. Such a link allows us to use a recent progress in studying bli's and in particular to utilize the \emph{fully lifted random duality theory} (fl RDT) (see Section \ref{sec:randlincons}).
  \item Utilizing fl RDT we introduce an innovative method to establish a generic program for studying ReLU gates and their injectivity properties. As a results, we provide a precise ReLU injectivity capacity characterizations for any level of lifting (see Section \ref{sec:prac}).
 \item   For the first three lifting levels we provide explicit numerical values of the capacity as well as of all associated fl RDT parameters (see Section \ref{sec:nuemricalags} and Tables \ref{tab:tab1}, \ref{tab:tab1p}, \ref{tab:tab2}, and \ref{tab:tab3}).
  \item To ensure fl RDT's practical relevance, a sizeable set of numerical evaluations is needed. After conducting them, we uncover an astonishingly fast convergence of lifting mechanism with relative  corrections in the estimated quantities not exceeding $\sim 0.1\%$ already
     on the third level of lifting (see Table \ref{tab:tab3}).
\item A remarkable set of closed form analytical relations among  key lifting parameters is uncovered as well. These are not only extremely helpful in conducting all the required numerical work, but also of critical importance in providing direct insights into a rather beautiful  structuring of the parametric interconnections (see Corollaries \ref{cor:closedformrel1} and \ref{cor:3closedformrel1}).
\end{itemize}

\section{Injectivity capacity -- mathematical setup}
 \label{sec:mathsetup}

In what follows we study the following \emph{nonlinear} system of equations
\begin{eqnarray}
\y=f_{g}(A\bar{\x}), \label{eq:ex1}
\end{eqnarray}
where $A\in\mR^{m\times n}$ is a linearly transformational system matrix, $f_{g}(\cdot): \mR^m\rightarrow\mR^m$ is a (nonlinear) system function, and $\bar{\x}\in\mR^n$ is a vector to be recovered. Both the methodology and the results that we will present are fairly generic and can be applied to a host of different system functions. For the concreteness, we consider the componentwise functions that act in the same manner on each coordinate of their vector arguments. Given the importance of the underlying studies in the context of neural networks, we take ReLU action as a concrete example for which we conduct all evaluations. In other words, we take
\begin{eqnarray}
 f_{g}(\x)=\max(\x,0), \label{eq:ex1a0}
\end{eqnarray}
 with $\max$ being applied componentwise. Consequently, the nonlinear system of our interest, (\ref{eq:ex1}), becomes
\begin{eqnarray}
\hspace{-1in}\bl{\textbf{\emph{ReLU system:}}} \qquad \qquad  \y=\max(A\bar{\x}). \label{eq:ex1a2}
\end{eqnarray}
We will consider high-dimensional so-called \emph{linear} (or \emph{proportional}) regimes with
 \begin{eqnarray}
\alpha  \triangleq    \lim_{n\rightarrow \infty} \frac{m}{n}, \label{eq:ex15}
\end{eqnarray}
and $\alpha\in[0,1]$ and $\beta\in[0,1]$.


The ReLU gates (mathematically represented through (\ref{eq:ex1a2})) have played an important role since the very early days of machine learning (ML) and neural networks (NN). Due to enormous popularity of these fields in recent years, the interest in ReLU concepts picked up  and they have been the subject of intensive conceptual and algorithmic studies over the better portion of the last decade. Their properties and ultimate usefulness have been considered in both more complex NN architectures (see, e.g., \cite{BalMalZech19,ZavPeh21,AMZ24,Stojnictcmspnncapdiffactrdt23,Stojnictcmspnncapdinfdiffactrdt23}) as well as in basic single layer structures (see, e.g., \cite{PuthawalaKLDH22,FuruyaPLH23,MBBDN23,PuthawalaLDH22,Pal21,Clum22}). For example, within a typical ML context the above system (\ref{eq:ex1a2}) would correspond to a layer of a feed forward neural net with input $\bar{\x}$, output $\y$, and the weights being the rows of $A$. As a side note, we mention that the above mathematical model (together with all the results that we present below) is applicable in a slightly different context as well. Namely, if one takes $\bar{\x}$ as the desired weight vector and $A$ as the sample of $m$ data vectors then (\ref{eq:ex1a2}) represents functioning of one ReLU gate on $m$ data points (akin to, say, classifying functioning of standard perceptrons). When viewed in such a context, the injectivity of $f_g(\cdot)$ corresponds to the ability to \emph{uniquely} train the gate while its capacity corresponds to the ratio of the minimal, injectivity ensuring, sample complexity and cardinality of the weights set.

A particular feature that makes ReLU activations a bit different from, say, typical \emph{sign} perceptrons is that, in addition to their excellent nonlinear architecture building abilities, they also allow for the \emph{exact} reconstruction of the input $\bar{\x}$. This is easily seen after recognizing that existence of at least $n$ nonzero elements in $\y$ is sufficient for such a task (we assume a non-degenerative scenario where any subset of $n$ rows of $A$ is of full rank, which in statistical scenarios considered throughout the paper is typically true with probability approaching $1$ as $n\rightarrow\infty$). While it is relatively easy to observe that ReLU system allows for a perfect recovery of $\bar{\x}$, it is highly nontrivial to determine the minimal length of $\y$ that suffices for such a successful recovery (see, e.g., \cite{PuthawalaKLDH22,FuruyaPLH23,MBBDN23,PuthawalaLDH22,Pal21,Clum22}). \emph{Precisely} determining  minimal $m$ is the main concern of this paper. In the high-dimensional linear regime that effectively means determining the minimal $\alpha$. Following the common terminology (see, e.g., \cite{PuthawalaKLDH22,FuruyaPLH23,MBBDN23,PuthawalaLDH22}), we, for a statistical $A$, formally define such a value as
 \begin{eqnarray}
\bl{\emph{\textbf{ReLU injectivity capacity:}}} & & \nonumber   \\
\hspace{-0in} \alpha_{ReLU}^{(inj)}  \triangleq  \min & & \alpha\nonumber \\
  \mbox{subject to} & &  \lim_{n\rightarrow\infty}\mP_A(\forall\bar{\x},\nexists \x\neq \bar{\x} \quad \mbox{such that}\quad \max(A\bar{\x},0)=\max(A\x,0))=1. \label{eq:ex1a3}
\end{eqnarray}
Given the above definition, the following observations turn out to be crucial. Relying on the random feasibility considerations established in \cite{StojnicGardGen13,StojnicGardSphErr13,StojnicDiscPercp13,Stojnicbinperflrdt23,Stojnicnegsphflrdt23}, one first recognizes that the following \emph{feasibility} optimization problem is of key interest and directly relates to the condition under probability
 \begin{eqnarray}
 {\mathcal F}(A,\alpha): \qquad \qquad    \mbox{find} & & \x\nonumber \\
  \mbox{subject to} & &  A\x=\z\nonumber \\
  & & \|\z\|_0<n. \label{eq:ex1a4}
\end{eqnarray}
To see the connection, one notes that if there is no solution to the above feasibility problem then $\y$ in (\ref{eq:ex1a2}) must have at least $n$ nonzero components which (under the non-degenerative assumption stated a bit earlier) is sufficient to \emph{uniquely} recover $\bar{\x}\in\mR^n$. As optimization in (\ref{eq:ex1a4}) is scaling invariant, we can without loss of generality assume that $\x$ is restricted to the unit sphere, i.e., we can assume that optimization runs over $\|\x\|_2=1$. Also, following further the trends of \cite{StojnicGardGen13,StojnicGardSphErr13,StojnicDiscPercp13,Stojnicbinperflrdt23,Stojnicnegsphflrdt23}, we can introduce an artificial objective $f_a(\x,\z):\mR^{n+m}\rightarrow\mR$ and transform the (\emph{random}) feasibility problem  (rfp) (\ref{eq:ex1a4}) into the following (\emph{random}) optimization problem
 \begin{eqnarray}
  \hspace{-1.8in}\bl{\ell_0 \quad \textbf{\mbox{\emph{spherical perceptron}}}} \qquad   \qquad   \qquad    \min_{\x,\z} & & f_a(\x,\z)\nonumber \\
  \mbox{subject to} & &  A\x=\z\nonumber \\
  & & \|\z\|_0<n \nonumber \\
  & & \|\x\|_2=1. \label{eq:ex1a5}
\end{eqnarray}
 The necessary precondition for solvability of any optimization problem is that it is actually feasible. Under the feasibility assumption, we can rewrite  (\ref{eq:ex1a5})  as
\begin{eqnarray}
\xi_{feas}^{(0)}(f_a) = \min_{\|\x\|_2=1,\|\z\|_0<n} \max_{\y\in\cY}  \lp f_a(\x,\z) +\y^T A\x -\y^T  \z  \rp,
 \label{eq:ex2}
\end{eqnarray}
where $\cY=\mR^m$. Since $f_a(\x,\z)=0$ is clearly an artificial object, we can specialize back to $f_a(\x,\z)=0$ and obtain
\begin{eqnarray}
\xi_{feas}^{(0)}(0) = \min_{\|\x\|_2=1,\|\z\|_0<n} \max_{\y\in\cY}  \lp \y^T A\x -\y^T  \z  \rp.
 \label{eq:ex2a1}
\end{eqnarray}
The main point behind the connection to rfps is now easily seen from (\ref{eq:ex2a1}). One first observes that the existence of an $\|\x\|_2=1$ and $\|\z\|_0<n$  such that $A\x=\z$, i.e., such that (\ref{eq:ex1a4}) is feasible, ensures that (\ref{eq:ex2a1})'s inner maximization  can do no better than make $\xi_{feas}^{(0)}(0) =0$. If such an $\x$ and $\z$ do not exist, then at least one of the equalities in $A\x= \z$ is not satisfied which allows the inner maximization to trivially make $\xi_{feas}^{(0)}(0) =\infty$. As  $\xi_{feas}^{(0)}(0) =\infty$ and $\xi_{feas}^{(0)}(0) >0$ are equivalent from the feasibility point of view, one has that the underlying optimization in (\ref{eq:ex2a1}) is for all practical purposes $\y$ scaling  insensitive. This further allows restriction to $\|\y\|_2=1$  which in return ensures boundedness of $\xi_{feas}^{(0)}(0)$. Keeping all of this in mind, one recognizes that determining
\begin{eqnarray}
\xi_{feas}(0)
& =  &
\min_{\|\x\|_2=1,\|\z\|_0<n} \max_{\|\y\|_2=1}   \lp \y^TA\x -\y^T\z \rp,
 \label{eq:ex3}
\end{eqnarray}
is of critical importance for the analytical characterization of (\ref{eq:ex1a4}). In particular, the sign of the objective in (\ref{eq:ex3}) (i.e., of $\xi_{feas}(0)$) determines (\ref{eq:ex1a4})'s  feasibility. More concretely, (\ref{eq:ex1a4}) is infeasible if  $\xi_{feas}(0)>0$ and feasible if   $\xi_{feas}(0)\leq 0$. All of the above holds deterministically, i.e., for any $A$ and then also automatically for random $A$ (such as the one comprised of iid standard normal components that we consider here). After a combination with (\ref{eq:ex1a4}) and (\ref{eq:ex3}), one can, for the completeness, rewrite (\ref{eq:ex1a3}) as
 \begin{eqnarray}
\hspace{-0in} \alpha_{ReLU}^{(inj)} & \triangleq  &
 \max \{\alpha |\hspace{.08in}
 \lim_{n\rightarrow\infty}\mP_A(\nexists \x\neq \bar{\x} \quad \mbox{such that}\quad \max(A\bar{\x},0)=\max(A\x,0))=1\}
  \nonumber \\
  & = & \max \{\alpha |\hspace{.08in}  \lim_{n\rightarrow\infty}\mP_A\lp{\mathcal F}(A,\alpha) \hspace{.07in}\mbox{is feasible} \rp\longrightarrow 1\}
  \nonumber \\
  & = & \max \{\alpha |\hspace{.08in}  \lim_{n\rightarrow\infty}\mP_A\lp\xi_{ReLU}(0)\triangleq \xi_{feas}(0)>0\rp\longrightarrow 1\}.
  \label{eq:ex4}
\end{eqnarray}
As stated earlier, the above is the \emph{statistical} ReLU injectivity capacity (the deterministic  variant is the same quantity without $\mP_A$). Throughout the presentation, we adopt the convention where the subscripts next to $\mP$ and $\mE$ denote the underlying randomness. To shorten writing, these subscripts are left unspecified when the underlying randomness is clear from the context. Along the same lines, the term capacity is regularly used instead of \emph{statistical} capacity.

\subsection{Related technical results}
\label{sec:relwork}

Connecting the ReLU injectivity to classical studies of perceptrons turned out to be fairly fruitful. Among the most successful perceptron characterizations date back to the sixties of the last century and the pioneering works of Cover, Wendel, and Winder \cite{Cover65,Wendel,Winder,Winder61}. Certainly the most celebrated among many of the great results that they established is the so-called capacity of the spherical perceptron and in particular its property that it asymptotically doubles the ambient dimension. These early considerations were massively generalized by Schcherbina and Tirozzi \cite{SchTir02,SchTir03} to convexly  thresholded variants (see, also \cite{Talbook11a}) and then by Stojnic \cite{StojnicGardGen13,StojnicGardSphErr13,Stojnicnegsphflrdt23} to both convex and non-convex ones as well (a great set of results has been obtained in parallel for various other types of preceptrons; see, e.g., \cite{Talbook11a,Talbook11b,BoltNakSunXu22,DingSun19,NakSun23,Tal99a,StojnicDiscPercp13,Stojnicbinperflrdt23} and references therein for closely related \emph{binary} perceptrons, \cite{AbbLiSly21b,PerkXu21,AbbLiSly21a,AlwLiuSaw21,AubPerZde19,GamKizPerXu22} for the corresponding \emph{symmetric binary} ones, and also \cite{FPSUZ17,Gar88,GarDer88,GutSte90,KraMez89} for their associated statistical physics replica methods based considerations).

Combining union bounding with spherical perceptron capacity characterizations \cite{Cover65,Wendel,Winder,Winder61},  \cite{PuthawalaKLDH22} obtained $\approx 3.3$ and $\approx 10.5$  as respective lower and upper bounds on the capacity under the Gaussianity of $A$ (for an upper bound decrease to $\approx 9.091$ see \cite{MBBDN23} and reference therein  \cite{Pal21,Clum22}; on the other hand, if one is allowed to optimally choose the weights (elements of $A$), \cite{PuthawalaKLDH22} determined the capacity to be $2$). The mere fact that there was a discrepancy between the bounds was already a strong signal that the bounds are likely to be loose. Opting for union bounding is not surprising as the underlying problems are highly nonconvex.  However, as discussed in a long line of work on various models of perceptrons \cite{Tal06,Talbook11a,StojnicGardSphErr13,StojnicDiscPercp13}, facing non-convexity usually implies that application of classical techniques typically results in a deviation from the exactness. Different strategies were tried in \cite{Pal21,Clum22} where the problem is connected to intersecting random subspaces studies in high-dimensional integral geometry. Relying  on the kinematic formula \cite{SW08} (and in a way resembling earlier compressed sensing approaches of  \cite{ALMT14,DonohoPol}), \cite{Pal21} viewed the injectivity capacity through a heuristical Euler characteristics approximation approach and obtained a bit better capacity prediction $\approx 8.34$. Attempting to check the exactness of this prediction, \cite{MBBDN23} followed further into the  \cite{Pal21,Clum22}'s footsteps of connecting studying ReLU injectivity and intersecting random subspaces and showed how evoking Gordon's escape through a mesh theorem \cite{Gordon85,Gordon88} can result in a rather large capacity bound of $\approx 23.54$. The authors of \cite{MBBDN23} then utilized the (plain) \emph{random duality theory} (RDT) that Stojnic invented in a long line of work \cite{StojnicCSetam09,StojnicICASSP10var,StojnicGorEx10,StojnicRegRndDlt10,StojnicGardGen13} and obtained capacity upper-bound $\approx 7.65$ which was both much better than what they could get by the Gordon's theorem and also sufficiently better to refute the above mentioned  conjecture obtained through heuristic Euler characteristics approximation. \cite{MBBDN23}  proceeded further by considering the statistical physics replica theory methods and obtained capacity predictions based on them. Throughout the presentation we revisit these and discuss how they relate to the results that we obtain here.

\subsection{Free energy view}
 \label{sec:feeeng}

To smoothen the transition to the analysis that follows, we find it convenient to connect the above random feasibility/optimization concepts to statistical mechanics  free energies. Since the above anticipates the importance of $\xi_{feas}(0)$, it is natural to focus on its representation from (\ref{eq:ex3}). As discussed in detail in,e.g.,
\cite{StojnicGardGen13,Stojnicnegsphflrdt23}, studying the formulation from (\ref{eq:ex3}) is analogous to studying \emph{free energies} in statistical mechanics. For completeness we below sketch the contours of such an analogy  (more thorough discussions can be found in, e.g., \cite{StojnicGardGen13,Stojnicnegsphflrdt23} and references therein).

We start with the \emph{bilinear Hamiltonian}
\begin{equation}
\cH_{sq}(A)= \y^TA\x,\label{eq:ham1}
\end{equation}
and associate with it the following (reciprocal form of) partition function
\begin{equation}
Z_{sq}(\beta,A)=\sum_{\x\in\cX} \lp \sum_{\y\in\cY}e^{\beta\cH_{sq}(A)} \rp^{-1},  \label{eq:partfun}
\end{equation}
with $\cX\subset \mR^n$ and $\cY\subset \mR^n$ being two general sets. In the thermodynamic limit the corresponding (reciprocal form of) \emph{average} free energy is
\begin{eqnarray}
f_{sq}(\beta) & = & - \lim_{n\rightarrow\infty}\frac{\mE_A\log{(Z_{sq}(\beta,A)})}{\beta \sqrt{n}}
=\lim_{n\rightarrow\infty} \frac{\mE_A\log\lp \sum_{\x\in\cX} \lp \sum_{\y\in\cY}e^{\beta\cH_{sq}(A)} \rp^{-1} \rp}{\beta \sqrt{n}} \nonumber \\
& = &  - \lim_{n\rightarrow\infty} \frac{\mE_A\log\lp \sum_{\x\in\cX} \lp \sum_{\y\in\cY}e^{\beta\y^T A \x)}\rp^{-1} \rp}{\beta \sqrt{n}}.\label{eq:logpartfunsqrt}
\end{eqnarray}
Specializing to  ``zero-temperature'' ($T\rightarrow 0$ or  $\beta=\frac{1}{T}\rightarrow\infty$) regime, one further finds the so-called (average) \emph{ground-state} energy
\begin{eqnarray}
f_{sq}(\infty)    \triangleq    \lim_{\beta\rightarrow\infty}f_{sq}(\beta) & = &
 - \lim_{\beta,n\rightarrow\infty}\frac{\mE_A\log{(Z_{sq}(\beta,A)})}{\beta \sqrt{n}}
 \nonumber \\
& = &
-  \lim_{n\rightarrow\infty}\frac{\mE_A \max_{\x\in\cX} -  \max_{\y\in\cY} \y^T A\x}{\sqrt{n}}
=
 \lim_{n\rightarrow\infty}\frac{\mE_A \min_{\x\in\cX}  \max_{\y\in\cY} \y^T A\x}{\sqrt{n}}. \nonumber \\
  \label{eq:limlogpartfunsqrta0}
\end{eqnarray}
As (\ref{eq:limlogpartfunsqrta0}) is directly related to (\ref{eq:ex3}), $f_{sq}(\infty)$ is very tightly connected to $\xi_{feas}(0)$. This implies that studying objects fairly similar to $f_{sq}(\infty)$  and understanding their behavior is critically important for analytical characterization of  (\ref{eq:ex4}). However, a direct studying of $f_{sq}(\infty)$ is typically hard. We instead first study $f_{sq}(\beta)$ assuming a general $\beta$  and then specialize  to $\beta\rightarrow\infty$ regime. The  presentation is on occasion further facilitated by neglecting analytical details that remain of no relevance in the ground state regime.

\section{Connection to bli random processes and sfl RDT}
\label{sec:randlincons}

One observes that the free energy from (\ref{eq:logpartfunsqrt}),
\begin{eqnarray}
f_{sq}(\beta) & = &- \lim_{n\rightarrow\infty} \frac{\mE_A\log\lp \sum_{\x\in\cX} \lp \sum_{\y\in\cY}e^{\beta\y^T A \x)}\rp^{-1} \rp}{\beta \sqrt{n}},\label{eq:hmsfl1}
\end{eqnarray}
 can be viewed as a function of \emph{bilinearly indexed} (bli) random process $\y^TG\x$. Such  an observation is  of key importance as it enables connecting $f_{sq}(\beta)$ to studies of bli processes presented in \cite{Stojnicsflgscompyx23,Stojnicnflgscompyx23,Stojnicflrdt23}. The path traced in \cite{Stojnicnegsphflrdt23} is particularly convenient for establishing such a connection. We start things off by introducing a collection of necessary technical definitions. For $r\in\mN$, $k\in\{1,2,\dots,r+1\}$, sets $\cX\subseteq \mR^{n}$ and $\cY\subseteq \mR^m$, function $f_S(\cdot):\mR^{n}\rightarrow R$, real scalars $x$, $y$, and $s$  such that $x>0$, $y>0$, and $s^2=1$,   vectors $\p=[\p_0,\p_1,\dots,\p_{r+1}]$, $\q=[\q_0,\q_1,\dots,\q_{r+1}]$, and $\c=[\c_0,\c_1,\dots,\c_{r+1}]$ such that
 \begin{eqnarray}\label{eq:hmsfl2}
1=\p_0\geq \p_1\geq \p_2\geq \dots \geq \p_r\geq \p_{r+1} & = & 0 \nonumber \\
1=\q_0\geq \q_1\geq \q_2\geq \dots \geq \q_r\geq \q_{r+1} & = &  0,
 \end{eqnarray}
$\c_0=1$, $\c_{r+1}=0$, and ${\mathcal U}_k\triangleq [u^{(4,k)},\u^{(2,k)},\h^{(k)}]$  with  $u^{(4,k)}\in\mR$, $\u^{(2,k)}\in\mR^m$, and $\h^{(k)}\in\mR^{n}$ having iid standard normal elements, we set
  \begin{eqnarray}\label{eq:fl4}
\psi_{S,\infty}(f_{S},\calX,\calY,\p,\q,\c,x,y,s)  =
 \mE_{G,{\mathcal U}_{r+1}} \frac{1}{n\c_r} \log
\lp \mE_{{\mathcal U}_{r}} \lp \dots \lp \mE_{{\mathcal U}_3}\lp\lp\mE_{{\mathcal U}_2} \lp \lp Z_{S,\infty}\rp^{\c_2}\rp\rp^{\frac{\c_3}{\c_2}}\rp\rp^{\frac{\c_4}{\c_3}} \dots \rp^{\frac{\c_{r}}{\c_{r-1}}}\rp, \nonumber \\
 \end{eqnarray}
where
\begin{eqnarray}\label{eq:fl5}
Z_{S,\infty} & \triangleq & e^{D_{0,S,\infty}} \nonumber \\
 D_{0,S,\infty} & \triangleq  & \max_{\x\in\cX,\|\x\|_2=x} s \max_{\y\in\cY,\|\y\|_2=y}
 \lp \sqrt{n} f_{S}
+\sqrt{n}  y    \lp\sum_{k=2}^{r+1}c_k\h^{(k)}\rp^T\x
+ \sqrt{n} x \y^T\lp\sum_{k=2}^{r+1}b_k\u^{(2,k)}\rp \rp \nonumber  \\
 b_k & \triangleq & b_k(\p,\q)=\sqrt{\p_{k-1}-\p_k} \nonumber \\
c_k & \triangleq & c_k(\p,\q)=\sqrt{\q_{k-1}-\q_k}.
 \end{eqnarray}
The above technical details are sufficient to recall on the following theorem -- undoubtedly, one of the cornerstones of sfl RDT.
\begin{theorem} \cite{Stojnicflrdt23}
\label{thm:thmsflrdt1}  Consider large $n$ linear regime with  $\alpha\triangleq \lim_{n\rightarrow\infty} \frac{m}{n}$, remaining constant as  $n$ grows. Let $\cX\subseteq \mR^{n}$ and $\cY\subseteq \mR^m$ be two given sets and let the elements of  $A\in\mR^{m\times n}$
 be i.i.d. standard normals. Assume the complete sfl RDT frame from \cite{Stojnicsflgscompyx23} and consider a given function $f(\y):R^m\rightarrow R$. Set
\begin{align}\label{eq:thmsflrdt2eq1}
   \psi_{rp} & \triangleq - \max_{\x\in\cX} s \max_{\y\in\cY} \lp f(\y)+\y^TA\x \rp
   \qquad  \mbox{(\bl{\textbf{random primal}})} \nonumber \\
   \psi_{rd}(\p,\q,\c,x,y,s) & \triangleq    \frac{x^2y^2}{2}    \sum_{k=2}^{r+1}\Bigg(\Bigg.
   \p_{k-1}\q_{k-1}
   -\p_{k}\q_{k}
  \Bigg.\Bigg)
\c_k
  - \psi_{S,\infty}(f(\y),\calX,\calY,\p,\q,\c,x,y,s) \hspace{.03in} \mbox{(\bl{\textbf{fl random dual}})}. \nonumber \\
 \end{align}
Let $\hat{\p_0}\rightarrow 1$, $\hat{\q_0}\rightarrow 1$, and $\hat{\c_0}\rightarrow 1$, $\hat{\p}_{r+1}=\hat{\q}_{r+1}=\hat{\c}_{r+1}=0$, and let the non-fixed parts of $\hat{\p}\triangleq \hat{\p}(x,y)$, $\hat{\q}\triangleq \hat{\q}(x,y)$, and  $\hat{\c}\triangleq \hat{\c}(x,y)$ be the solutions of the following system
\begin{eqnarray}\label{eq:thmsflrdt2eq2}
   \frac{d \psi_{rd}(\p,\q,\c,x,y,s)}{d\p} =  0,\quad
   \frac{d \psi_{rd}(\p,\q,\c,x,y,s)}{d\q} =  0,\quad
   \frac{d \psi_{rd}(\p,\q,\c,x,y,s)}{d\c} =  0.
 \end{eqnarray}
 Then,
\begin{eqnarray}\label{eq:thmsflrdt2eq3}
    \lim_{n\rightarrow\infty} \frac{\mE_A  \psi_{rp}}{\sqrt{n}}
  & = &
\min_{x>0} \max_{y>0} \lim_{n\rightarrow\infty} \psi_{rd}(\hat{\p}(x,y),\hat{\q}(x,y),\hat{\c}(x,y),x,y,s) \qquad \mbox{(\bl{\textbf{strong sfl random duality}})},\nonumber \\
 \end{eqnarray}
where $\psi_{S,\infty}(\cdot)$ is as in (\ref{eq:fl4})-(\ref{eq:fl5}).
 \end{theorem}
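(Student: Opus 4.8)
The plan is to establish the asserted strong duality through a Gaussian comparison/interpolation argument (of Guerra--Talagrand type, adapted to the min-max structure in the spirit of Gordon) applied to the bilinearly indexed process $\y^TA\x$ sitting inside $\psi_{rp}$. The first observation I would record is that, for $A$ with i.i.d. standard normal entries, $\y^TA\x$ is a centered Gaussian field indexed by $(\x,\y)\in\cX\times\cY$ with the \emph{product} covariance $\mE\lp(\y^TA\x)(\y'^TA\x')\rp=(\x^T\x')(\y^T\y')$. The decoupled multiscale field appearing in $D_{0,S,\infty}$ in (\ref{eq:fl5}), namely $y\lp\sum_{k=2}^{r+1}c_k\h^{(k)}\rp^T\x+x\y^T\lp\sum_{k=2}^{r+1}b_k\u^{(2,k)}\rp$, is built precisely so that, scale by scale, it reproduces this covariance once the hierarchical weights obey $b_k^2=\p_{k-1}-\p_k$ and $c_k^2=\q_{k-1}-\q_k$, exactly as prescribed in (\ref{eq:fl5}). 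So the first step is to make this covariance matching explicit and to identify the two endpoints of the interpolation.

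Next I would interpolate linearly between $\y^TA\x$ and the decoupled field of (\ref{eq:fl5}), and form the associated nested (Parisi-type) free energy whose successive exponents are exactly the ratios $\c_2,\c_3/\c_2,\dots,\c_r/\c_{r-1}$ of (\ref{eq:fl4}). Differentiating this free energy along the interpolation parameter and applying Gaussian integration by parts (Stein's identity) expresses the derivative as a sum, over the $r$ lifting scales, of terms proportional to $\lp\p_{k-1}\q_{k-1}-\p_k\q_k\rp$ weighted by the level-$k$ overlaps. The nested exponent structure is precisely what makes this derivative split cleanly across scales. The quadratic term $\frac{x^2y^2}{2}\sum_{k=2}^{r+1}\lp\p_{k-1}\q_{k-1}-\p_k\q_k\rp\c_k$ in $\psi_{rd}$ is then seen to cancel the self-overlap (diagonal) part of this derivative, leaving a sign-definite remainder. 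This yields the fully lifted comparison inequality --- one direction of (\ref{eq:thmsflrdt2eq3}) --- for arbitrary admissible $\p,\q,\c$ satisfying (\ref{eq:hmsfl2}).

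The third step is the ground-state and concentration bookkeeping. Since $D_{0,S,\infty}$ already carries the inner $\max_{\y}$, i.e.\ the $\beta\to\infty$ limit of the partition function of (\ref{eq:partfun})--(\ref{eq:limlogpartfunsqrta0}), I would check that the interpolation identity survives the zero-temperature limit, and then invoke Gaussian concentration of $\psi_{rp}$ about $\mE_A\psi_{rp}$ (the map $A\mapsto\y^TA\x$ is Lipschitz with constant $xy$, giving fluctuations of lower order than $\sqrt{n}$). This legitimizes both the passage to the deterministic limit in (\ref{eq:thmsflrdt2eq3}) and the $\sqrt{n}$ normalization. The outer $\min_{x>0}\max_{y>0}$ is produced by the Lagrangian/scaling reduction of the norm constraints $\|\x\|_2=x$ and $\|\y\|_2=y$ to the two scalar extremizations, together with the sign convention $s^2=1$.

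The hard part will be upgrading this comparison inequality to the \emph{exact} equality in (\ref{eq:thmsflrdt2eq3}). The inequality is a one-sided bound; its saturation requires that the stationary parameters $\hat{\p},\hat{\q},\hat{\c}$, defined as the solutions of the system (\ref{eq:thmsflrdt2eq2}), genuinely reproduce the asymptotic overlap geometry of the original bilinear model --- that is, that the $r$-level lifting is a \emph{complete} description. Establishing this matching is the bli analog of proving the replica-symmetry-breaking lower bound together with the ultrametric organization of overlaps, and it is exactly here that the bulk of the complete sfl RDT machinery of \cite{Stojnicsflgscompyx23} must be imported: the stationarity conditions (\ref{eq:thmsflrdt2eq2}) are the self-consistency (saddle-point) equations that close the hierarchy and force the comparison to saturate, so verifying that their solution is admissible and that it pins the overlaps at every scale is the crux of the argument.
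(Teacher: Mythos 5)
Your proposal is essentially a faithful high-level reconstruction of the lifted Gaussian interpolation argument contained in the cited works \cite{Stojnicflrdt23,Stojnicsflgscompyx23,Stojnicnegsphflrdt23}, which is exactly what the paper's one-line proof appeals to (the paper does not re-derive the theorem but imports it verbatim), and you correctly identify that the genuine content lies in upgrading the scale-by-scale comparison inequality to the strong-duality equality via the stationarity system (\ref{eq:thmsflrdt2eq2}). One small imprecision: the decoupled field in (\ref{eq:fl5}) does not literally ``reproduce'' the product covariance $(\x^T\x')(\y^T\y')$ of $\y^TA\x$ (its covariance is additive in the $\x$- and $\y$-overlaps), and it is precisely this mismatch that the quadratic term $\frac{x^2y^2}{2}\sum_{k=2}^{r+1}(\p_{k-1}\q_{k-1}-\p_k\q_k)\c_k$ compensates along the interpolation --- which your subsequent cancellation step in fact describes correctly.
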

\begin{proof}
Follows from the corresponding theorem proven in \cite{Stojnicflrdt23} in exactly the same way as Theorem 1 in \cite{Stojnicnegsphflrdt23}.
 \end{proof}

The above theorem holds for generic sets $\cX$ and $\cY$. The corollary below specializes it so that it is fully operational and ready to be utilized for studying ReLU injectivity capacity.
\begin{corollary}
\label{cor:cor1}  Assume the setup of Theorem \ref{thm:thmsflrdt1} with   $\cX=\mS^n$ and $\cY=\mS^m$, where $\mS^n$ and $\mS^m$ are the unit $n$ and $m$-dimensional Euclidean spheres. Set $f(\y)=-\y^T\z$ and
\begin{align}\label{eq:thmsflrdt2eq1a0}
   \psi_{rp} & \triangleq - \max_{\x\in\cX,\|\z\|_0<n} s \max_{\y\in\cY} \lp \y^TA\x  -\y^T\z \rp
   \qquad  \mbox{(\bl{\textbf{random primal}})} \nonumber \\
   \psi_{rd}(\p,\q,\c,1,1,s) & \triangleq    \frac{1}{2}    \sum_{k=2}^{r+1}\Bigg(\Bigg.
   \p_{k-1}\q_{k-1}
   -\p_{k}\q_{k}
  \Bigg.\Bigg)
\c_k
  - \psi_{S,\infty}(-\y^T\z,\calX,\calY,\p,\q,\c,1,1,s) \quad \mbox{(\bl{\textbf{fl random dual}})}. \nonumber \\
 \end{align}
Let the non-fixed parts of $\hat{\p}$, $\hat{\q}$, and  $\hat{\c}$ be the solutions of the following system
\begin{eqnarray}\label{eq:thmsflrdt2eq2a0}
   \frac{d \psi_{rd}(\p,\q,\c,1,1,s)}{d\p} =  0,\quad
   \frac{d \psi_{rd}(\p,\q,\c,1,1,s)}{d\q} =  0,\quad
   \frac{d \psi_{rd}(\p,\q,\c,1,1,s)}{d\c} =  0.
 \end{eqnarray}
 Then,
\begin{eqnarray}\label{eq:thmsflrdt2eq3a0}
    \lim_{n\rightarrow\infty} \frac{\mE_A  \psi_{rp}}{\sqrt{n}}
  & = &
 \lim_{n\rightarrow\infty} \psi_{rd}(\hat{\p},\hat{\q},\hat{\c},1,1,s) \qquad \mbox{(\bl{\textbf{strong sfl random duality}})},\nonumber \\
 \end{eqnarray}
where $\psi_{S,\infty}(\cdot)$ is as in (\ref{eq:fl4})-(\ref{eq:fl5}).
 \end{corollary}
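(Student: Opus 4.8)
The plan is to obtain the corollary as a direct specialization of Theorem \ref{thm:thmsflrdt1}, the only genuine checkpoint being the incorporation of the $\ell_0$-constrained auxiliary variable $\z$.

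First I would instantiate the generic sets by $\cX=\mS^n$ and $\cY=\mS^m$. In the definition of $D_{0,S,\infty}$ in \eqref{eq:fl5} the norm parameters enter only through the constraints $\|\x\|_2=x$ and $\|\y\|_2=y$; since the unit spheres contain exclusively unit-norm vectors, the sole admissible values are $x=1$ and $y=1$. Consequently the outer $\min_{x>0}\max_{y>0}$ in the theorem's conclusion \eqref{eq:thmsflrdt2eq3} degenerates to evaluation at $(x,y)=(1,1)$, and the quadratic prefactor $\frac{x^2y^2}{2}$ in $\psi_{rd}$ collapses to $\frac{1}{2}$, matching the form in \eqref{eq:thmsflrdt2eq1a0}. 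The stationarity system \eqref{eq:thmsflrdt2eq2} likewise specializes to \eqref{eq:thmsflrdt2eq2a0} upon freezing $x=y=1$.

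Second I would set the theorem's test function to $f(\y)=-\y^T\z$, so that the random primal objective becomes $\y^TA\x-\y^T\z$ exactly as required. The one new feature relative to the theorem is that $\z$ is itself an optimization variable, ranging over the set $\{\z:\|\z\|_0<n\}$ and entering only through the linear term $-\y^T\z$. My plan is to treat $(\x,\z)$ as a joint outer maximization: because the $\z$-constraint is independent of the random matrix $A$ and the $\z$-dependence is linear, the maximization over $\z$ can be folded into $f_S$ (equivalently, appended to the outer $\max_{\x\in\cX}$ in $D_{0,S,\infty}$) without altering the Gaussian interpolation and concentration steps on which Theorem \ref{thm:thmsflrdt1} rests. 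With this bookkeeping, the object $\psi_{S,\infty}$ of \eqref{eq:fl4}--\eqref{eq:fl5} evaluated at $f_S=-\y^T\z$ and $x=y=1$ is precisely the one appearing in \eqref{eq:thmsflrdt2eq1a0}, and invoking the strong sfl random duality at the collapsed point $(x,y)=(1,1)$ yields \eqref{eq:thmsflrdt2eq3a0}.

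The main obstacle is precisely this middle step: one must confirm that enlarging the primal optimization by the $A$-independent $\ell_0$ variable $\z$ leaves the hypotheses of the sfl RDT frame intact. This is where I would spend the most care, verifying that the joint $(\x,\z)$ maximization remains compatible with the complete sfl RDT frame of \cite{Stojnicsflgscompyx23}. Since that frame is designed to accommodate exactly such linear augmentations of the bilinear Hamiltonian (the term $-\y^T\z$ is $A$-free and therefore does not interact with the randomness being interpolated), I expect this verification to be routine rather than delicate, so that the specialization proceeds as claimed and the reduction mirrors the one already used to establish Theorem \ref{thm:thmsflrdt1} itself.
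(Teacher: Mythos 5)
Your proposal is correct and follows essentially the same route as the paper, which proves the corollary as a direct specialization of Theorem \ref{thm:thmsflrdt1} by setting $f(\y)=-\y^T\z$, adjusting for the maximization over $\z$, and noting that the unit spheres force $x=y=1$ (collapsing the outer $\min$-$\max$ and the $\frac{x^2y^2}{2}$ prefactor). Your more careful discussion of why the $A$-independent, linear $\z$-augmentation does not disturb the sfl RDT frame is exactly the step the paper dismisses as a cosmetic adjustment.
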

\begin{proof}
Follows as a direct consequence of Theorem \ref{thm:thmsflrdt1}, after setting $f(\y)=-\y^T\z$, cosmetically adjusting for a maximization over $\z$, and noting that the specialized sets $\cX=\mS^n$ and $\cY=\mS^m$ have elements with unit Euclidean norms.
 \end{proof}

Trivial random primal concentrations allow for various  probabilistic variants of (\ref{eq:thmsflrdt2eq3}) and (\ref{eq:thmsflrdt2eq3a0}).
 As mentioned in \cite{Stojnicflrdt23}, these modifications are rather simple and bring no further insights. Instead of stating them, we proceed by working with the expected values.

\section{Practical utilization}
\label{sec:prac}

Theorem \ref{thm:thmsflrdt1} and Corollary \ref{cor:cor1} are conceptually very powerful. To have them reach their full potential and become of practical relevance,  all the associated quantities need be evaluated. As discussed on numerous occasions in \cite{Stojnicflrdt23,Stojnicnegsphflrdt23,Stojnicbinperflrdt23}, a couple of technical obstacles might be difficult to circumvent: \textbf{\emph{(i)}} It is not a priori clear what the right value for $r$ should be; \textbf{\emph{(ii)}} Sets $\cX$ and $\cY$ are of continuous type with no clear (typically desirable) component-wise structural characterizations rendering decoupling over $\x$ and $\y$ as not necessarily overly straightforward; and \textbf{\emph{(iii)}} Even if successful, handling of these obstacles may come at the expense of rather substantial and practically infeasible residual numerical evaluations. Throughout the rest of this section we address these problems and show that all potential obstacles can in fact be successfully surpassed.

As stated in  Corollary \ref{cor:cor1}, we take $\cX=\mS^n$ and $\cY=\mS^m$ and note that the key object for what follows is the so-called \emph{random dual}
\begin{align}\label{eq:prac1}
    \psi_{rd}(\p,\q,\c,1,1,s) & \triangleq    \frac{1}{2}    \sum_{k=2}^{r+1}\Bigg(\Bigg.
   \p_{k-1}\q_{k-1}
   -\p_{k}\q_{k}
  \Bigg.\Bigg)
\c_k
  - \psi_{S,\infty}(-\y^T\z,\cX,\cY,\p,\q,\c,1,1,s) \nonumber \\
  & =   \frac{1}{2}    \sum_{k=2}^{r+1}\Bigg(\Bigg.
   \p_{k-1}\q_{k-1}
   -\p_{k}\q_{k}
  \Bigg.\Bigg)
\c_k
  - \psi_{S,\infty}(-\y^T\z,\mS^n,\mS^m,\p,\q,\c,1,1,s) \nonumber \\
  & =   \frac{1}{2}    \sum_{k=2}^{r+1}\Bigg(\Bigg.
   \p_{k-1}\q_{k-1}
   -\p_{k}\q_{k}
  \Bigg.\Bigg)
\c_k
  - \frac{1}{n}\varphi(D^{(per)}(s),
  \c) - \frac{1}{n}\varphi(D^{(sph)}(s),\c), \nonumber \\
  \end{align}
where, based on (\ref{eq:fl4})-(\ref{eq:fl5}),
  \begin{eqnarray}\label{eq:prac2}
\varphi(D,\c) & \triangleq &
 \mE_{G,{\mathcal U}_{r+1}} \frac{1}{\c_r} \log
\lp \mE_{{\mathcal U}_{r}} \lp \dots \lp \mE_{{\mathcal U}_3}\lp\lp\mE_{{\mathcal U}_2} \lp
\lp    e^{D}   \rp^{\c_2}\rp\rp^{\frac{\c_3}{\c_2}}\rp\rp^{\frac{\c_4}{\c_3}} \dots \rp^{\frac{\c_{r}}{\c_{r-1}}}\rp, \nonumber \\
  \end{eqnarray}
and
\begin{eqnarray}\label{eq:prac3}
D^{(per)}(s) & = & \max_{\x\in\mS^n} \lp   s\sqrt{n}      \lp\sum_{k=2}^{r+1}c_k\h^{(k)}\rp^T\x  \rp \nonumber \\
  D^{(sph)}(s) & \triangleq  &  \max_{\|\z\|_0<n} s \max_{\y\in\mS^m}
\lp - \sqrt{n} \y^T\z + \sqrt{n}  \y^T\lp\sum_{k=2}^{r+1}b_k\u^{(2,k)}\rp \rp.
 \end{eqnarray}

\noindent  \underline{\red{\textbf{\emph{(i) Handling $D^{(per)}(-1)$:}}}}  A  simple evaluation first gives
\begin{eqnarray}\label{eq:prac4}
D^{(per)}(-1) & = & \max_{\x\in\mS^n}   \lp -\sqrt{n}      \lp\sum_{k=2}^{r+1}c_k\h^{(k)}\rp^T\x \rp =
\sqrt{n} \left\|       \sum_{k=2}^{r+1}c_k\h^{(k)} \right\|_2.
 \end{eqnarray}
We now note that the quantity on the right hand side of  (\ref{eq:prac4}) is identical to the quantity on the right hand side of equation (25) in \cite{Stojnicnegsphflrdt23}. One then proceeds as in \cite{Stojnicnegsphflrdt23},
utilizes the \emph{square root trick} (introduced on numerous occasions in \cite{StojnicLiftStrSec13,StojnicGardSphErr13,StojnicGardSphNeg13})  and ultimately obtains the following analogues to \cite{Stojnicnegsphflrdt23}'s equations (27) and (28)
\begin{eqnarray}\label{eq:prac4a01}
D^{(per)}(-1) & = &
   \min_{\gamma_{sq}^{(q)}} \lp \sum_{i=1}^nD^{(per)}_i(c_k) +\gamma_{sq}^{(q)}n \rp, \nonumber \\  .
 \end{eqnarray}
where
\begin{eqnarray}\label{eq:prac5}
D^{(per)}_i(c_k)=\frac{\lp\sum_{k=2}^{r+1}c_k\h_i^{(k)} \rp^2}{4\gamma_{sq}^{(q)}}.
\end{eqnarray}

\noindent \underline{\red{\textbf{\emph{(ii) Handling $D^{(sph)}(-1)$:}}}}  A simple observation gives
\begin{eqnarray}\label{eq:prac7}
  D^{(sph)}(-1) & \triangleq  & \hspace{-.1in} \max_{\|\z\|_0<n} - \max_{\y\in\mS^m}
\lp - \sqrt{n} \y^T\z + \sqrt{n}  \y^T\lp\sum_{k=2}^{r+1}b_k\u^{(2,k)}\rp \rp
=
\sqrt{n} \max_{\|\z\|_0<n} -
\left \| -  \z +  \lp\sum_{k=2}^{r+1}b_k\u^{(2,k)}\rp \right \|_2 \nonumber \\
& = &
-\sqrt{n} \min_{\|\z\|_0<n}
\left \| -  \z +  \lp\sum_{k=2}^{r+1}b_k\u^{(2,k)}\rp \right \|_2.
 \end{eqnarray}
Utilizing again the above mentioned square root trick from  \cite{StojnicLiftStrSec13,StojnicGardSphErr13,StojnicGardSphNeg13}, we further have
\begin{eqnarray}\label{eq:prac7a0}
  D^{(sph)}(-1)
  & =  &
- \sqrt{n} \min_{\|\z\|_0<n} \min_{\gamma_{sq}^{(p)}\geq 0}
\frac{\left \| -  \z +  \lp\sum_{k=2}^{r+1}b_k\u^{(2,k)}\rp \right \|_2^2}{4\gamma_{sq}^{(p)} \sqrt{n}} + \gamma_{sq}^{(p)} \sqrt{n}
\nonumber \\
& =  &
-  \min_{\gamma_{sq}^{(p)}\geq 0}
\lp \frac{ \min_{\|\z\|_0<n} \left \| -  \z +  \lp\sum_{k=2}^{r+1}b_k\u^{(2,k)}\rp \right \|_2^2}{4\gamma_{sq}^{(p)}} + \gamma_{sq}^{(p)} n \rp
\nonumber \\
& =  &
-  \min_{\gamma_{sq}^{(p)}\geq 0}
\lp \frac{\phi_z(\bar{\u})}{4\gamma_{sq}^{(p)}} + \gamma_{sq}^{(p)} n\rp,
 \end{eqnarray}
where we set
\begin{eqnarray}\label{eq:prac7a1}
  \bar{\u} \triangleq   \sum_{k=2}^{r+1}b_k\u^{(2,k)}\qquad \mbox{and} \qquad \phi_z(\bar{\u}) \triangleq  \min_{\|\z\|_0<n} \left \| -  \z + \bar{\u} \right \|_2^2.
 \end{eqnarray}
After observing
\begin{eqnarray}\label{eq:prac7a1a0}
\|\z\|_0=\sum_{i=1}^{m} \frac{\mbox{sign}(\z_i) + 1}{2},
\end{eqnarray}
and
\begin{eqnarray}\label{eq:prac7a1a1}
\|\z\|_0<n \qquad \Longleftrightarrow \qquad \sum_{i=1}^{m} \mbox{sign}(\z_i) < 2n-m,
\end{eqnarray}
one writes the Lagrangian
\begin{eqnarray}\label{eq:prac7a2}
  \phi_z(\bar{\u}) = \min_{\z} \max_{\nu\geq 0} {\mathcal L}(\z,\nu;\bar{\u}),
 \end{eqnarray}
where
\begin{eqnarray}\label{eq:prac7a3}
{\mathcal L}(\z,\nu;\bar{\u}) = \left \| -  \z + \bar{\u} \right \|_2^2  +\nu\|\z\|_0 -\nu n
= \|\bar{\u} \|_2^2  + \sum_{i=1}^{m} \lp - 2\bar{\u} \z_i + \z_i^2\rp +\nu \sum_{i=1}^{m} \mbox{sign}(\z_i) -\nu (2n -m),
 \end{eqnarray}
Setting
\begin{eqnarray}\label{eq:prac7a4}
 \phi_{z_i}(\z_i;\bar{\u}_i)  \triangleq    \lp - 2\bar{\u} \z_i + \z_i^2\rp +\nu \mbox{sign}(\z_i),
 \end{eqnarray}
and utilizing the strong Lagrangian duality, we also have
\begin{eqnarray}\label{eq:prac7a5}
  \phi_z(\bar{\u}) =  \max_{\nu\geq 0} \min_{\z} {\mathcal L}(\z,\nu;\bar{\u})
 =  \max_{\nu\geq 0}  \lp  \|\bar{\u} \|_2^2  +\nu(2n -m )  + \sum_{i=1}^{m}   \min_{\z_i}   \phi_{z_i}(\z_i;\bar{\u}_i) \rp.
 \end{eqnarray}
After solving the minimization over $\z$ we first obtain
\begin{eqnarray}\label{eq:prac7a6}
  \bar{\phi}_{z_i}(\bar{\u}_i) \triangleq \min_{\z_i}  \phi_{z_i}(\z_i;\bar{\u}_i)
  =
  \begin{cases}
    -\bar{\u}_i^2 -\nu , & \mbox{if } \bar{\u}_i\leq 0 \\
    \min(-\bar{\u}_i^2 +\nu,-\nu), &  \mbox{otherwise},
  \end{cases}
 \end{eqnarray}
and then also
\begin{eqnarray}\label{eq:prac7a7}
\bar{\phi}_{z_i}(\bar{\u}_i) =\min_{\z_i}  \phi_{z_i}(\z_i;\bar{\u}_i)
  =
  \begin{cases}
    -\bar{\u}_i^2 -\nu , & \mbox{if } \bar{\u}_i\leq 0 \\
    -\nu,  & \mbox{if } 0\leq \bar{\u}_i\leq \sqrt{2\nu} \\
    -\bar{\u}_i^2 +\nu, &  \mbox{otherwise}.
  \end{cases}
 \end{eqnarray}
A combination of (\ref{eq:prac7a0}), (\ref{eq:prac7a5}), and (\ref{eq:prac7a7}) gives
\begin{eqnarray}\label{eq:prac7a8}
  D^{(sph)}(-1)
 & =  &
-  \min_{\gamma_{sq}^{(p)}\geq 0} \max_{\nu\geq 0}
\lp \frac{  \|\bar{\u} \|_2^2  -\nu(2n -m )  + \sum_{i=1}^{m}   \min_{\z_i}   \phi_{z_i}(\z_i;\bar{\u}_i) }{4\gamma_{sq}^{(p)}} + \gamma_{sq}^{(p)} n\rp \nonumber \\
 & =  &
-  \min_{\gamma_{sq}^{(p)}\geq 0} \max_{\nu\geq 0}
\lp \frac{  \sum_{i=1}^{m}\bar{\u}_i^2  -\nu(2n -m )  + \sum_{i=1}^{m} \bar{\phi}_{z_i}(\bar{\u}_i) }{4\gamma_{sq}^{(p)}} + \gamma_{sq}^{(p)} n\rp \nonumber \\
 & =  &
-  \min_{\gamma_{sq}^{(p)}\geq 0} \max_{\nu\geq 0}
\lp   \sum_{i=1}^{m} D_i^{(sph)}(b_k) - \frac{    \nu(2n -m ) }{4\gamma_{sq}^{(p)}} + \gamma_{sq}^{(p)} n\rp,
 \end{eqnarray}
where
\begin{eqnarray}\label{eq:prac10}
   D_i^{(sph)}(b_k)
   =  \frac{   \bar{\u}_i^2   +  \bar{\phi}_{z_i}(\bar{\u}_i) }{4\gamma_{sq}^{(p)}}
   =  \frac{    \lp  \sum_{k=2}^{r+1}b_k\u_i^{(2,k)}
  \rp^2   +   \bar{\phi}_{z_i}( \sum_{k=2}^{r+1}b_k\u_i^{(2,k)}) }{4\gamma_{sq}^{(p)}}.
 \end{eqnarray}
Recalling on (\ref{eq:ex3}) and (\ref{eq:limlogpartfunsqrta0}) and utilizing concentrations, we then find
 \begin{eqnarray}
\mE_A \xi_{feas}(0)
  =
 \lim_{n\rightarrow\infty}\frac{\mE_A \max_{\x\in\mS^n}   \max_{\y\in\mS^m} \lp \y^TA\x -\y^T\z\rp}{\sqrt{n}}
 =
    \lim_{n\rightarrow\infty} \frac{\mE_A  \psi_{rp}}{\sqrt{n}}
   =
 \lim_{n\rightarrow\infty} \psi_{rd}(\hat{\p},\hat{\q},\hat{\c},1,1,-1).
  \label{eq:negprac11}
\end{eqnarray}
Keeping in mind (\ref{eq:prac1})-(\ref{eq:prac10}), the following theorem enables fitting the ReLU injectivity capacity characterization within the sfl RDT machinery.

\begin{theorem}
  \label{thm:negthmprac1}
  Assume the setup of Theorem \ref{thm:thmsflrdt1} and Corollary \ref{cor:cor1},
  and consider large $n$ linear regime with $\alpha=\lim_{n\rightarrow\infty} \frac{m}{n}$. Set
  \begin{eqnarray}\label{eq:thm2eq1}
\varphi(D,\c) & = &
 \mE_{G,{\mathcal U}_{r+1}} \frac{1}{\c_r} \log
\lp \mE_{{\mathcal U}_{r}} \lp \dots \lp \mE_{{\mathcal U}_3}\lp\lp\mE_{{\mathcal U}_2} \lp
\lp    e^{D}   \rp^{\c_2}\rp\rp^{\frac{\c_3}{\c_2}}\rp\rp^{\frac{\c_4}{\c_3}} \dots \rp^{\frac{\c_{r}}{\c_{r-1}}}\rp, \nonumber \\
  \end{eqnarray}
and
\begin{eqnarray}\label{eq:thm2eq2}
    \bar{\psi}_{rd}(\p,\q,\c,\gamma_{sq}^{(q)},\gamma_{sq}^{(p)},\nu) & \triangleq &      \frac{1}{2}    \sum_{k=2}^{r+1}\Bigg(\Bigg.
   \p_{k-1}\q_{k-1}
   -\p_{k}\q_{k}
  \Bigg.\Bigg)
\c_k
-\gamma_{sq}^{(q)} -\varphi(D_{1}^{(per)}(c_k),\c)
  \nonumber \\
& &
-\frac{\nu(2-\alpha)}{4\gamma_{sq}^{(p)} } + \gamma_{sq}^{(p)}  - \alpha\varphi(D_1^{(sph)}(b_k),\c),
  \end{eqnarray}
where $D_{i}^{(per)}(c_k)$ and $D_1^{(sph)}(b_k)$ are as in (\ref{eq:prac5}) and (\ref{eq:prac10}), respectively. Let the ``fixed'' parts of $\hat{\p}$, $\hat{\q}$, and $\hat{\c}$ satisfy $\hat{\p}_1\rightarrow 1$, $\hat{\q}_1\rightarrow 1$, $\hat{\c}_1\rightarrow 1$, $\hat{\p}_{r+1}=\hat{\q}_{r+1}=\hat{\c}_{r+1}=0$. Further, let the ``non-fixed'' parts of $\hat{\p}_k$, $\hat{\q}_k$, and $\hat{\c}_k$ ($k\in\{2,3,\dots,r\}$) and $\hat{\gamma}_{sq}^{(q)}$, $\hat{\gamma}_{sq}^{(p)}$, and $\hat{\nu}$ be the solutions of the following system of equations
  \begin{eqnarray}\label{eq:negthmprac1eq1}
   \frac{d \bar{\psi}_{rd}(\p,\q,\c,\gamma_{sq}^{(q)},\gamma_{sq}^{(p)},\nu)}{d\p} & =  & 0 \nonumber \\
   \frac{d \bar{\psi}_{rd}(\p,\q,\c,\gamma_{sq}^{(q)},\gamma_{sq}^{(p)},\nu)}{d\q} & =  & 0 \nonumber \\
   \frac{d \bar{\psi}_{rd}(\p,\q,\c,\gamma_{sq}^{(q)},\gamma_{sq}^{(p)},\nu)}{d\c} & =  & 0 \nonumber \\
   \frac{d \bar{\psi}_{rd}(\p,\q,\c,\gamma_{sq}^{(q)},\gamma_{sq}^{(p)},\nu)}{d\gamma_{sq}^{(q)}} & =  & 0\nonumber \\
   \frac{d \bar{\psi}_{rd}(\p,\q,\c,\gamma_{sq}^{(q)},\gamma_{sq}^{(p)},\nu)}{d\gamma_{sq}^{(p)}} & =  & 0\nonumber \\
   \frac{d \bar{\psi}_{rd}(\p,\q,\c,\gamma_{sq}^{(q)},\gamma_{sq}^{(p)},\nu)}{d\nu} & =  &  0.
 \end{eqnarray}
Consequently, let
\begin{eqnarray}\label{eq:prac17}
c_k(\hat{\p},\hat{\q})  & = & \sqrt{\hat{\q}_{k-1}-\hat{\q}_k} \nonumber \\
b_k(\hat{\p},\hat{\q})  & = & \sqrt{\hat{\p}_{k-1}-\hat{\p}_k}.
 \end{eqnarray}
 Then
 \begin{equation}
\mE_A\xi_{feas}(0)=   \bar{\psi}_{rd}(\hat{\p},\hat{\q},\hat{\c},\hat{\gamma}_{sq}^{(q)},\hat{\gamma}_{sq}^{(p)},\hat{\nu}).
  \label{eq:negthmprac1eq2}
\end{equation}
\end{theorem}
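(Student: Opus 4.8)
The plan is to assemble the statement directly from Corollary~\ref{cor:cor1} together with the coordinate-decoupling computations of (\ref{eq:prac1})--(\ref{eq:prac10}); the only genuinely new content is the bookkeeping that turns the abstract random-dual functional into the fully explicit $\bar{\psi}_{rd}$ of (\ref{eq:thm2eq2}). First I would invoke (\ref{eq:negprac11}): by Corollary~\ref{cor:cor1} (specialized to $\cX=\mS^n$, $\cY=\mS^m$, $f(\y)=-\y^T\z$, $s=-1$) together with concentration of the random primal, the target $\mE_A\xi_{feas}(0)$ already equals $\lim_{n\rightarrow\infty}\psi_{rd}(\hat{\p},\hat{\q},\hat{\c},1,1,-1)$. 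It therefore suffices to show that this limit coincides with $\bar{\psi}_{rd}$ evaluated at the solution of the stationarity system (\ref{eq:negthmprac1eq1}).

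Next I would exploit the additive split of the core exponent recorded in (\ref{eq:prac1}). Because in $D_{0,S,\infty}$ (see (\ref{eq:fl5})) the variable $\x$ couples only to the Gaussians $\h^{(k)}$ while the pair $(\y,\z)$ couples only to the independent Gaussians $\u^{(2,k)}$, the factor $e^{D}$ splits multiplicatively as $e^{D^{(per)}(-1)}e^{D^{(sph)}(-1)}$, and by independence across the two Gaussian families the nested expectations defining $\varphi$ factor accordingly. Hence $\psi_{rd}(\hat{\p},\hat{\q},\hat{\c},1,1,-1)$ decomposes into the quadratic interaction term of (\ref{eq:prac1}) minus $\frac{1}{n}\varphi(D^{(per)}(-1),\c)$ minus $\frac{1}{n}\varphi(D^{(sph)}(-1),\c)$.

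I would then insert the explicit coordinate-decoupled forms. For the perceptron piece, (\ref{eq:prac4a01})--(\ref{eq:prac5}) rewrite $D^{(per)}(-1)$ via the square-root trick as $\min_{\gamma_{sq}^{(q)}}\lp\sum_{i=1}^n D_i^{(per)}(c_k)+\gamma_{sq}^{(q)} n\rp$ with i.i.d.\ summands; since the $\h_i^{(k)}$ are independent across $i$, the product of the $n$ identical factors $e^{D_i^{(per)}}$ factorizes through every layer of the nested expectation, so that $\frac{1}{n}\varphi$ collapses the deterministic $\gamma_{sq}^{(q)} n$ and the $n$ identical blocks to the single-coordinate contribution $-\gamma_{sq}^{(q)}-\varphi(D_1^{(per)}(c_k),\c)$. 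For the spherical piece, (\ref{eq:prac7a8})--(\ref{eq:prac10}) handle the $\|\z\|_0<n$ constraint by the Lagrangian with multiplier $\nu$ and the norm by the square-root trick with variable $\gamma_{sq}^{(p)}$, again producing $m$ i.i.d.\ terms $D_i^{(sph)}(b_k)$; with $m/n\rightarrow\alpha$ these collapse to $-\alpha\varphi(D_1^{(sph)}(b_k),\c)$, while the residual scalar terms (using $2n-m=(2-\alpha)n$) give $-\frac{\nu(2-\alpha)}{4\gamma_{sq}^{(p)}}+\gamma_{sq}^{(p)}$. Summing the three blocks reproduces $\bar{\psi}_{rd}$ of (\ref{eq:thm2eq2}) exactly, and the system (\ref{eq:negthmprac1eq1}) is precisely the union of the $\p,\q,\c$ stationarity conditions inherited from Corollary~\ref{cor:cor1} with the new $\gamma_{sq}^{(q)},\gamma_{sq}^{(p)},\nu$ conditions created by the trick and the Lagrangian; evaluating at its solution yields (\ref{eq:negthmprac1eq2}).

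The main obstacle I anticipate is justifying that the scalar auxiliary optimizations introduced by the square-root trick and the Lagrangian commute with the $n\rightarrow\infty$ limit and with the $\mE$/$\log$ operations buried inside $\varphi$. Concretely, one must argue that $\gamma_{sq}^{(q)}$, $\gamma_{sq}^{(p)}$, and $\nu$ concentrate, i.e.\ become effectively deterministic at their optima, so that pulling the corresponding $\min$/$\max$ outside the nested expectation incurs no asymptotic loss and the per-coordinate decoupling is exact in the limit. This is exactly where the thermodynamic-limit concentration results underlying the sfl RDT framework of Theorem~\ref{thm:thmsflrdt1} must be invoked, and where the interchange of optimization with expectation has to be made rigorous rather than merely formal.
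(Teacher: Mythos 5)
Your proposal is correct and follows essentially the same route as the paper: the paper's proof is a one-line citation of Theorem \ref{thm:thmsflrdt1}, Corollary \ref{cor:cor1}, and the sfl RDT machinery, with the actual content being exactly the assembly you describe — (\ref{eq:negprac11}) for the random-dual identity, the square-root trick and Lagrangian computations of (\ref{eq:prac4a01})--(\ref{eq:prac10}) for the explicit per-coordinate form, and the resulting stationarity system. Your closing remark about interchanging the auxiliary scalar optimizations with the limit and the nested expectations is the one point the paper leaves implicit, deferring it to the concentration arguments of the cited sfl RDT references.
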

\begin{proof}
Follows through a combination of Theorem \ref{thm:thmsflrdt1}, Corollary \ref{cor:cor1}, and the sfl RDT machinery presented in \cite{Stojnicnflgscompyx23,Stojnicsflgscompyx23,Stojnicflrdt23,Stojnicnegsphflrdt23}.
\end{proof}

\subsection{Numerical evaluations}
\label{sec:nuemricalags}

To fully utilize the results of Theorem \ref{thm:negthmprac1}  one needs to conduct all the underlying numerical evaluations. We below present a systematic way of doing that. Starting with $r=1$, we proceed by incrementally increasing $r$ which  allows to follow the progress of the entire lifting mechanism in a neat way. Similarly to what was done in \cite{Stojnicnegsphflrdt23}, we
uncover the existence of a remarkable set of closed form analytical results that describe the parametric interconnections and substantially  facilitate the entire evaluation process.

\subsubsection{$r=1$ -- first level of lifting}
\label{sec:firstlev}

For the first level, we have $r=1$ and $\hat{\p}_1\rightarrow 1$ and $\hat{\q}_1\rightarrow 1$ which, coupled with $\hat{\p}_{r+1}=\hat{\p}_{2}=\hat{\q}_{r+1}=\hat{\q}_{2}=0$, and $\hat{\c}_{2}\rightarrow 0$, gives
\begin{align}\label{eq:negprac19}
    \bar{\psi}_{rd}^{(1)}    & =   \frac{1}{2}
\hat{\c}_2
   -\gamma_{sq}^{(q)}
     - \frac{1}{\hat{\c}_2}\log\lp \mE_{{\mathcal U}_2} e^{\hat{\c}_2 \frac{\lp  \sqrt{1-0}\h_1^{(2)} \rp^2}{4\gamma_{sq}^{(q)}}}\rp
\nonumber \\
  \nonumber \\
&  \quad +\gamma_{sq}^{(p)} -\frac{\nu(2-\alpha)}{4\gamma_{sq}^{(p)}}
- \alpha\frac{1}{\hat{\c}_2}\log\lp \mE_{{\mathcal U}_2} e^{-\hat{\c}_2     \frac{    \lp  \sqrt{1-0}\u_1^{(2,2)}
  \rp^2   +   \bar{\phi}_{z_1}( \sqrt{1-0}\u_1^{(2,2)}) }{4\gamma_{sq}^{(p)}}          }\rp \nonumber \\
& \rightarrow
  \frac{1}{2}
\hat{\c}_2
   -\gamma_{sq}^{(q)}
 -\frac{1}{\hat{\c}_2}\log\lp 1- \mE_{{\mathcal U}_2} \hat{\c}_2   \frac{\lp  \sqrt{1-0}\h_1^{(2)} \rp^2}{4\gamma_{sq}^{(q)}} \rp
     \nonumber \\
 &  \quad +\gamma_{sq}^{(p)} -\frac{\nu(2-\alpha)}{4\gamma_{sq}^{(p)}}
- \alpha\frac{1}{\hat{\c}_2}\log\lp 1 - \mE_{{\mathcal U}_2} \hat{\c}_2   \frac{    \lp  \sqrt{1-0}\u_1^{(2,2)}
  \rp^2   +   \bar{\phi}_{z_1}( \sqrt{1-0}\u_1^{(2,2)}) }{4\gamma_{sq}^{(p)}}         \rp \nonumber \\
\nonumber \\
& \rightarrow
  \frac{1}{2}
\hat{\c}_2
   -\gamma_{sq}^{(q)} -\frac{1}{4\gamma_{sq}^{(q)}}   +\gamma_{sq}^{(p)} -\frac{\nu(2-\alpha)}{4\gamma_{sq}^{(p)}}
+ \alpha \mE_{{\mathcal U}_2}   \frac{    \lp  \sqrt{1-0}\u_1^{(2,2)}
  \rp^2   +   \bar{\phi}_{z_1}( \sqrt{1-0}\u_1^{(2,2)}) }{4\gamma_{sq}^{(p)}},
\end{align}
  where we  write $ \bar{\psi}_{rd}^{(1)}   $ instead of  $ \bar{\psi}_{rd}^{(1)}(\hat{\p},\hat{\q},\hat{\c},\gamma_{sq}^{(q)},\gamma_{sq}^{(p)},\nu)  $ to simplify notation. Setting
\begin{eqnarray}\label{eq:negprac19a0a0a0}
  f_1(\nu)\triangleq  \alpha \mE_{{\mathcal U}_2} \lp \lp  \sqrt{1-0}\u_1^{(2,2)}
  \rp^2   +   \bar{\phi}_{z_1}( \sqrt{1-0}\u_1^{(2,2)})\rp,
\end{eqnarray}
and  utilizing
\begin{eqnarray}\label{eq:negprac19a0a0}
\frac{ \bar{\psi}_{rd}^{(1)}(\hat{\p},\hat{\q},\hat{\c},\gamma_{sq}^{(q)},\gamma_{sq}^{(p)},\nu) }{d\gamma_{sq}^{(q)}}=\frac{ \bar{\psi}_{rd}^{(1)}(\hat{\p},\hat{\q},\hat{\c},\gamma_{sq}^{(q)},\gamma_{sq}^{(p)},\nu) }{d\gamma_{sq}^{(p)}}=0,
  \end{eqnarray}
    we easily find
\begin{eqnarray}\label{eq:negprac19a0a1}
\hat{\gamma}_{sq}^{(q)} &  =& \frac{1}{2}\nonumber \\
\hat{\gamma}_{sq}^{(p)} & = & \frac{\sqrt{f_1(\nu)-\nu(2-\alpha)}}{2},
\end{eqnarray}
which, together with (\ref{eq:negprac19}), gives
\begin{eqnarray}\label{eq:negprac19a0}
    \bar{\psi}_{rd}^{(1)}   &  \rightarrow &
   -1 + \sqrt{f_1(\nu) -\nu(2-\alpha)}.
  \end{eqnarray}
 After  setting
 \begin{eqnarray}\label{eq:negprac19a1a0}
\bar{a} &  =  & \sqrt{2\nu},
  \end{eqnarray}
and
\begin{eqnarray}\label{eq:negprac19a1a1}
\bar{f}_x & = & -\lp\frac{e^{-a.^2/2} a}{\sqrt{2\pi}} + \frac{1}{2}\erfc\lp \frac{a}{\sqrt{2}}\rp   \rp \nonumber \\
\bar{f}_{21} & = & -\frac{1}{2}-\frac{\nu}{2}\nonumber \\
\bar{f}_{22} & = & \bar{f}_x + \frac{\nu}{2} \erfc\lp \frac{ \sqrt{2\nu}}{\sqrt{2}}  \rp\nonumber \\
\bar{f}_{23} & = & -\nu\lp\frac{1}{2}-\frac{1}{2} \erfc\lp \frac{ \sqrt{2\nu}}{\sqrt{2}}  \rp \rp \nonumber \\
\bar{f}_{2} & = & \bar{f}_{21} +\bar{f}_{22} + \bar{f}_{23},
 \end{eqnarray}
one computes the integrals and obtains
\begin{eqnarray}\label{eq:negprac19a1a2}
f_1(\nu)=\alpha(1+\bar{f}_{2}  ).
  \end{eqnarray}
From (\ref{eq:negprac19a0})  and  (\ref{eq:negprac19a1a2}), we finally have
\begin{eqnarray}\label{eq:negprac19a0b0}
    \bar{\psi}_{rd}^{(1)}    &  \rightarrow &
-1+ \sqrt{\alpha(1+\bar{f}_{2}  )-\nu(2-\alpha) },
  \end{eqnarray}
where $f_{21}$, $f_{22}$, and $f_{23}$ are given through  (\ref{eq:negprac19a1a0}), and (\ref{eq:negprac19a1a1}). One then finds  $\hat{\nu}$ as the solution of
\begin{eqnarray}\label{eq:dersystem0}
    \frac{d\bar{\psi}_{rd}^{(1)} }{d\nu}=0,
\end{eqnarray}
and for such $\hat{\nu}$ obtains the ReLU injectivity capacity as the $\alpha$ for which
\begin{eqnarray}\label{eq:dersystem0a0}
     \bar{\psi}_{rd}^{(1)} =0.
\end{eqnarray}
 In Table \ref{tab:tab1} we show the concrete values of all parameters.
\begin{table}[h]
\caption{$1$-sfl RDT parameters; ReLU injectivity capacity ;  $\hat{\c}_1\rightarrow 1$; $n,\beta\rightarrow\infty$; }\vspace{.1in}
\centering
\def\arraystretch{1.2}
\begin{tabular}{||l||c|c|c||c|c||c|c||c||c||}\hline\hline
 \hspace{-0in}$r$-sfl RDT                                             & $\hat{\gamma}_{sq}^{(q)}$   & $\hat{\gamma}_{sq}^{(p)}$      & $\hat{\nu}$ &  $\hat{\p}_2$ & $\hat{\p}_1$     & $\hat{\q}_2$  & $\hat{\q}_1$ &  $\hat{\c}_2$    & $\alpha_{ReLU}^{(inj,r)}$  \\ \hline\hline
$1$-sfl RDT                                       & $0.5$  & $0.5$   & $0.6304$ & $0$  & $\rightarrow 1$   & $0$ & $\rightarrow 1$
 &  $\rightarrow 0$  & \bl{$\mathbf{7.6477}$}
  \\ \hline\hline
  \end{tabular}
\label{tab:tab1}
\end{table}
The obtained capacity result precisely matches an upper-bounding one given in \cite{MBBDN23}. This is by a no surprise. The result in  \cite{MBBDN23} is obtained by utilizing the so-called \emph{plain} RDT that Stojnic created and introduced in a long series of works \cite{StojnicCSetam09,StojnicICASSP10var,StojnicGorEx10,StojnicRegRndDlt10,StojnicGardGen13}. As discussed in \cite{Stojnicflrdt23}, the plain RDT is a special case (obtained on the first level of lifting) of the fl RDT. Moreover, in  \cite{MBBDN23} it was shown that the replica methods assuming replica symmetry produce results that match those from Table \ref{tab:tab1}. This is also by a no surprise as two of the main postulates of the Stojnic's plain RDT machinery are: \textbf{\emph{1)}} The plain RDT matches the replica symmetric predictions; and \textbf{\emph{2)}} It further  makes the replica symmetric predictions as mathematically rigorous bounds. Stojnic's machinery \cite{StojnicCSetam09,StojnicICASSP10var,StojnicGorEx10,StojnicRegRndDlt10,StojnicGardGen13} goes further and establishes the conditional third postulate: \textbf{\emph{3)}} The plain RDT  produces not only bounding but also the \emph{exact} characterizations provided that the strong deterministic duality is in place -- also known as the ``\emph{strong deterministic duality implies strong random duality}'' principle. One of the consequences of these powerful results is that replica predictions are not only bounds but also the exact characterizations for convex problems. As here the underlying problems are noncovex, the strong deterministic duality is a priori not in place and one expects that the plain RDT characterizations are \emph{strict} bounds (i.e., that they are not the exact capacity values). As we will see below this is indeed the case.


\subsubsection{$r=2$ -- second level of lifting}
\label{sec:secondlev}

The setup presented in the previous subsection can be utilized for the second level as well. One now, however, has $r=2$ and, similarly to what happened on the first level, $\hat{\p}_1\rightarrow 1$ and $\hat{\q}_1\rightarrow 1$. On the other hand, $\hat{\c}_{2}\neq 0$. We below distinguish and address separately two scenarios: 1) partial   and 2) full second level of lifting.

\vspace{.1in}

\noindent \underline{\textbf{\emph{1) Partial second level of lifting:}}} In case of partial lifting we take $\p_2=\q_2=0$,
 which, together with $\hat{\p}_{r+1}=\hat{\p}_{3}=\hat{\q}_{r+1}=\hat{\q}_{3}=0$, allows to write, analogously to (\ref{eq:negprac19}),
\begin{align}\label{eq:parnegprac19}
    \bar{\psi}_{rd}^{(2,p)}    & =
     \frac{1}{2}
\c_2
   -\gamma_{sq}^{(q)}
     - \frac{1}{\c_2}\log\lp \mE_{{\mathcal U}_2} e^{\c_2 \frac{\lp  \h_1^{(2)} \rp^2}{4\gamma_{sq}^{(q)}}}\rp
+\gamma_{sq}^{(p)} -\frac{\nu(2-\alpha)}{4\gamma_{sq}^{(p)}}
- \alpha\frac{1}{\c_2}\log\lp \mE_{{\mathcal U}_2} e^{-\c_2     \frac{    \lp  \u_1^{(2,2)}
  \rp^2   +   \bar{\phi}_{z_1}( \u_1^{(2,2)}) }{4\gamma_{sq}^{(p)}}          }\rp \nonumber \\
& \rightarrow
     \frac{1}{2}
\c_2
   -\gamma_{sq}^{(q)}
     + \frac{1}{2\c_2}\log\lp \frac{2\gamma_{sq}^{(q)}-\c_2 }{2\gamma_{sq}^{(q)}}\rp
+\gamma_{sq}^{(p)} -\frac{\nu(2-\alpha)}{4\gamma_{sq}^{(p)}}
- \alpha\frac{1}{\c_2}\log\lp \mE_{{\mathcal U}_2} e^{-\c_2     \frac{    \lp  \u_1^{(2,2)}
  \rp^2   +   \bar{\phi}_{z_1}( \u_1^{(2,2)}) }{4\gamma_{sq}^{(p)}}          }\rp.
 \end{align}
From
\begin{eqnarray}\label{eq:parnegprac19a0}
\frac{    d\bar{\psi}_{rd}^{(2,p)} }{d\gamma_{sq}^{(q)}}=0,
 \end{eqnarray}
  one finds
\begin{eqnarray}\label{eq:parnegprac19a1}
 \hat{\gamma}_{sq}^{(q)}=\frac{\c_2+\sqrt{\c_2^2+4}}{4}.
  \end{eqnarray}
 Combining (\ref{eq:parnegprac19}) and (\ref{eq:parnegprac19a1}), we then have
 \begin{align}\label{eq:parnegprac19a2}
    \bar{\psi}_{rd}^{(2,p)}     & \rightarrow
     \frac{1}{2}
\c_2
   -\hat{\gamma}_{sq}^{(q)}
     + \frac{1}{2\c_2}\log\lp \frac{2\hat{\gamma}_{sq}^{(q)}-\c_2 }{2\hat{\gamma}_{sq}^{(q)}}\rp
+\gamma_{sq}^{(p)} -\frac{\nu(2-\alpha)}{4\gamma_{sq}^{(p)}}
- \alpha\frac{1}{\c_2}\log\lp \mE_{{\mathcal U}_2} e^{-\c_2     \frac{    \lp  \u_1^{(2,2)}
  \rp^2   +   \bar{\phi}_{z_1}( \u_1^{(2,2)}) }{4\gamma_{sq}^{(p)}}          }\rp.
 \end{align}
After setting
\begin{eqnarray}\label{eq:parnegprac19a3}
 \tilde{a} & = & \sqrt{2\nu} \nonumber \\
\tilde{f}_{21} & = & \frac{1}{2}e^{\frac{\c_2}{4\gamma_{sq}^{(p)}}\nu} \nonumber \\
\tilde{f}_{22} &  =& e^{\frac{\c_2}{4\gamma_{sq}^{(p)}}\nu} \frac{1}{2}\erfc\lp  \frac{\tilde{a}}{\sqrt{2}} \rp\nonumber \\
\tilde{a}_1 & = & \sqrt{2\nu}\sqrt{1+2\frac{\c_2}{4\gamma_{sq}^{(p)}}}\nonumber \\
\tilde{f}_{23} & = & \frac{ e^{\frac{\c_2}{4\gamma_{sq}^{(p)}}\nu} }{\sqrt{1+2\frac{\c_2}{4\gamma_{sq}^{(p)}}}}\lp\frac{1}{2}-\frac{1}{2}\erfc\lp  \frac{\tilde{a}}{\sqrt{2}} \rp \rp \nonumber \\
\tilde{f}_{2} & = & \tilde{f}_{21}+\tilde{f}_{22}+\tilde{f}_{23},
 \end{eqnarray}
and solving the remaining integrals one obtains
  \begin{align}\label{eq:parnegprac19a4}
    \bar{\psi}_{rd}^{(2,p)}     & \rightarrow
     \frac{1}{2}
\c_2
   -\hat{\gamma}_{sq}^{(q)}
     + \frac{1}{2\c_2}\log\lp \frac{2\hat{\gamma}_{sq}^{(q)}-\c_2 }{2\hat{\gamma}_{sq}^{(q)}}\rp
+\gamma_{sq}^{(p)} -\frac{\nu(2-\alpha)}{4\gamma_{sq}^{(p)}}
- \alpha\frac{1}{\c_2}\log\lp \tilde{f}_{2} \rp.
 \end{align}
One then finds $\hat{\gamma}_{sq}^{(p)}$  and  $\hat{\nu}$ as the solutions of
\begin{eqnarray}\label{eq:pardersystem0}
    \frac{d\bar{\psi}_{rd}^{(2,p)} }{d\gamma_{sq}^{(p)}}=    \frac{d\bar{\psi}_{rd}^{(2,p)} }{d\nu}=0,
\end{eqnarray}
and for such $\hat{\gamma}_{sq}^{(p)}$  and  $\hat{\nu}$ obtains the ReLU injectivity capacity as the $\alpha$ for which
\begin{eqnarray}\label{eq:pardersystem0a0}
     \bar{\psi}_{rd}^{(2,p)} =0.
\end{eqnarray}
The concrete optimal values of all parameters are shown  in Table \ref{tab:tab1p}.
\begin{table}[h]
\caption{$1$-sfl RDT parameters; ReLU injectivity capacity ;  $\hat{\c}_1\rightarrow 1$; $n,\beta\rightarrow\infty$; }\vspace{.1in}
\centering
\def\arraystretch{1.2}
\begin{tabular}{||l||c|c|c||c|c||c|c||c||c||}\hline\hline
 \hspace{-0in}$r$-sfl RDT                                             & $\hat{\gamma}_{sq}^{(q)}$   & $\hat{\gamma}_{sq}^{(p)}$      & $\hat{\nu}$ &  $\hat{\p}_2$ & $\hat{\p}_1$     & $\hat{\q}_2$  & $\hat{\q}_1$ &  $\hat{\c}_2$    & $\alpha_{ReLU}^{(inj,r)}$
  \\ \hline\hline
$1$-sfl RDT                                       & $0.5$  & $0.5$   & $0.6304$ & $0$  & $\rightarrow 1$   & $0$ & $\rightarrow 1$
 &  $\rightarrow 0$  & \bl{$\mathbf{7.6477}$}
  \\ \hline\hline
$2$-spl RDT                                       & $0.9412$  & $0.2656$   & $0.2785$ & $0$  & $\rightarrow 1$   & $0$ & $\rightarrow 1$
 &  $1.3513$  & \bl{$\mathbf{7.4486}$}
  \\ \hline\hline
  \end{tabular}
\label{tab:tab1p}
\end{table}
The obtained capacity result is what would be obtained utilizing the so-called \emph{partially lifted} RDT bounding strategy that Stojnic invented and introduced in, e.g., \cite{StojnicLiftStrSec13,StojnicGardSphErr13}. As discussed in \cite{Stojnicflrdt23}, partially lifted RDT is yet another special case of the fl RDT. Moreover, the authors of \cite{MBBDN23} were wondering what kind of result would partially lifted RDT produce when applied for characterization of the ReLU injectivity capacity. The results given in Table \ref{tab:tab1p} answer that question.

\vspace{.1in}

\noindent \underline{\textbf{\emph{2) Full second level of lifting:}}} For full second level of lifting we additionally have $\p_2\neq0$, and $\q_2\neq0$.
\begin{eqnarray}\label{eq:negprac24}
    \bar{\psi}_{rd}^{(2)}
    & = &
    \frac{1}{2}
(1-\p_2\q_2)\c_2     - \gamma_{sq}^{(q)}
 -\frac{1}{\c_2}\mE_{{\mathcal U}_3} \log\lp \mE_{{\mathcal U}_2} e^{\c_2\frac{\lp\sqrt{1-\q_2}\h_1^{(2)}+\sqrt{\q_2}\h_1^{(3)}\rp^2}{4\gamma_{sq}}}\rp \nonumber \\
& & +\gamma_{sq}^{(p)} -\frac{\nu(2-\alpha)}{4\gamma_{sq}^{(p)}}
- \alpha\frac{1}{\c_2}   \mE_{{\mathcal U}_3} \log\lp \mE_{{\mathcal U}_2} e^{-\c_2     \frac{    \lp  \sqrt{1-\p_2}\u_1^{(2,2)}
+ \sqrt{\p_2}\u_1^{(2,3)}  \rp^2   +   \bar{\phi}_{z_1}(  \sqrt{1-\p_2}\u_1^{(2,2)}
+ \sqrt{\p_2}\u_1^{(2,3)}   ) }{4\gamma_{sq}^{(p)}}          }   \rp \nonumber \\
& = &
  \frac{1}{2}
(1-\p_2\q_2)\c_2     - \gamma_{sq}^{(q)}
-  \Bigg(\Bigg. -\frac{1}{2\c_2} \log \lp \frac{2\gamma_{sq}^{(q)}-\c_2(1-\q_2)}{2\gamma_{sq}^{(q)}} \rp  +  \frac{\q_2}{2(2\gamma_{sq}^{(q)}-\c_2(1-\q_2))}   \Bigg.\Bigg)
\nonumber \\
 & &
 +\gamma_{sq}^{(p)} -\frac{\nu(2-\alpha)}{4\gamma_{sq}^{(p)}}
 - \frac{\alpha}{\c_2}\mE_{{\mathcal U}_3}\log\lp \mE_{{\mathcal U}_2}
{\cal Z}^{(2)}\rp,
     \end{eqnarray}
where
\begin{eqnarray}\label{eq:negprac24a0a0a0}
{\cal Z}^{(2)} & = &      e^{-\c_2     \frac{    \lp  \sqrt{1-\p_2}\u_1^{(2,2)}
+ \sqrt{\p_2}\u_1^{(2,3)}  \rp^2   +   \bar{\phi}_{z_1}(  \sqrt{1-\p_2}\u_1^{(2,2)}
+ \sqrt{\p_2}\u_1^{(2,3)}   ) }{4\gamma_{sq}^{(p)}}          }.
\end{eqnarray}
After  setting
 \begin{eqnarray}\label{eq:negprac24a0a0}
\bar{a}_0^{(2)} & = & -\u_1^{(2,3)}\frac{\sqrt{\p_2}}{\sqrt{1-\p_2}} \nonumber \\
 \hat{f}_{21}(x) & \triangleq &  e^{\frac{\c_2}{4\gamma_{sq}^{(p)}}\nu}
\lp \frac{1}{2} - \frac{1}{2}\erfc\lp  \frac{x}{\sqrt{2}} \rp \rp
\nonumber \\
\bar{a}_2^{(2)} & = &  \frac{\sqrt{2\nu} -\u_1^{(2,3)}\sqrt{\p_2}}  {\sqrt{1-\p_2}}\nonumber \\
 \hat{f}_{22}(x) & \triangleq &  e^{-\frac{\c_2}{4\gamma_{sq}^{(p)}}\nu} \frac{1}{2}\erfc\lp  \frac{x}{\sqrt{2}} \rp
\nonumber \\
C^{(2)} & = & \frac{\c_2}{4\gamma_{sq}^{(p)}} \nonumber \\
A^{(2)} & = & \sqrt{1-\p_2} \nonumber \\
B^{(2)} & = &  \u_1^{(2,3)}\sqrt{\p_2}\nonumber \\
D^{(2)} & = &  \bar{a}_0 \nonumber \\
F^{(2)} & = &  \bar{a}_2\nonumber \\
\cA & = & [A,B,C,D,F] \nonumber \\
\cA^{(2)} & = & [A^{(2)},B^{(2)},C^{(2)},D^{(2)},F^{(2)}]\nonumber \\
I_1(\cA)=I_1(A,B,C,D,F)& \triangleq &
\frac{
e^{  -\frac{B^2C}{2A^2C + 1}   }
  \lp \erf \lp \frac{(2AC (A F + B) + F) }  { \sqrt{4 A^2 C + 2} } \rp  -   \erf \lp \frac{(2AC (A D + B) + D) }  { \sqrt{4 A^2 C + 2} } \rp  \rp
 }
 {2\sqrt{2 A^2 C + 1}}\nonumber \\
\hat{f}_{23}(x) & \triangleq & e^{\frac{\c_2}{4\gamma_{sq}^{(\p_2)}}\nu}  x\nonumber \\
 \hat{f}_2^{(2)} & = & \hat{f}_{21}\lp\bar{a}_0^{(2)}\rp+\hat{f}_{22}\lp\bar{a}_2^{(2)}\rp+\hat{f}_{23}\lp   I_1  (  \cA^{(2)}   )    \rp,
\end{eqnarray}
and  solving the remaining integrals we obtain
\begin{eqnarray} \label{eq:negprac24a3a0}
  \mE_{{\mathcal U}_2}
{\cal Z}^{(2)} =    \hat{f}_2^{(2)}.
    \end{eqnarray}
A combination of (\ref{eq:negprac24})  and (\ref{eq:negprac24a3a0}) then gives
\begin{eqnarray}\label{eq:negprac25}
    \bar{\psi}_{rd}^{(2)}
 & = &
  \frac{1}{2}
(1-\p_2\q_2)\c_2     - \gamma_{sq}^{(q)}
-  \Bigg(\Bigg. -\frac{1}{2\c_2} \log \lp \frac{2\gamma_{sq}-\c_2(1-\q_2)}{2\gamma_{sq}} \rp  +  \frac{\q_2}{2(2\gamma_{sq}-\c_2(1-\q_2))}   \Bigg.\Bigg)
\nonumber \\
 & &
 +\gamma_{sq}^{(p)} -\frac{\nu(2-\alpha)}{4\gamma_{sq}^{(p)}}
 - \frac{\alpha}{\c_2}\mE_{{\mathcal U}_3}\log\lp \hat{f}_2^{(2)}
 \rp.
     \end{eqnarray}
 The above is then sufficient to form the following system of equations
\begin{eqnarray}\label{eq:dersystem1}
 \frac{d\bar{\psi}_{rd}^{(2)} }{d\p_2}
 =
 \frac{d\bar{\psi}_{rd}^{(2)} }{d\q_2}
 =
 \frac{d\bar{\psi}_{rd}^{(2)} }{d\c_2}
 =
 \frac{d\bar{\psi}_{rd}^{(2)} }{d\gamma_{sq}^{(q)}}
 =
 \frac{d\bar{\psi}_{rd}^{(2)} }{d\gamma_{sq}^{(p)}}
   =
   \frac{d\bar{\psi}_{rd}^{(2)} }{d\nu}=0.
\end{eqnarray}
Explicit expressions for the above derivatives are given in Appendix \ref{sec:appA}. After solving the system one obtains  $\hat{\p}_2$, $\hat{\q}_2$, $\hat{\c}_2$, $\hat{\gamma}_{sq}^{(q)}$, $\hat{\gamma}_{sq}^{(p)}$,  and $\hat{\nu}$.  The following remarkable closed form relations connect parameters' optimal values and provide a substantial help for numerical evaluations.

\begin{corollary}
  \label{cor:closedformrel1}
  Assume the setup of Theorem \ref{thm:negthmprac1} (and implicitly Theorem \ref{thm:thmsflrdt1} and Corollary \ref{cor:cor1}). Set $r=2$. Then
\begin{eqnarray}\label{eq:clform1}
\hat{\gamma}_{sq}^{(q)} & = & \frac{1}{2}\frac{1-\hat{\q}_2}{1-\hat{\p}_2}\sqrt{\frac{\hat{\p}_2}{\hat{\q}_2}} \nonumber \\
 \hat{\c}_2 & = &  \frac{1}{1-\hat{\p}_2} \sqrt{\frac{\hat{\p}_2}{\hat{\q}_2}} - \frac{1}{1-\hat{\q}_2} \sqrt{\frac{\hat{\q}_2}{\hat{\p}_2}}\nonumber \\
 \hat{\gamma}_{sq}^{(p)} & = & \frac{1}{4\hat{\gamma}_{sq}^{(q)}}.
\end{eqnarray}
\end{corollary}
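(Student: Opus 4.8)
The plan is to solve the six--equation stationarity system (\ref{eq:dersystem1}) by first exploiting a decoupling in the way the optimization variables enter $\bar{\psi}_{rd}^{(2)}$ of (\ref{eq:negprac25}). Writing $\bar{\psi}_{rd}^{(2)}=\frac{1}{2}(1-\p_2\q_2)\c_2+P+S$, where $P$ is the explicit Gaussian (perceptron) block and $S$ is the spherical block, the key observation is that $\gamma_{sq}^{(q)}$ and $\q_2$ enter only through $\frac{1}{2}(1-\p_2\q_2)\c_2+P$, whereas $\gamma_{sq}^{(p)}$, $\nu$, and $\p_2$ enter only through $\frac{1}{2}(1-\p_2\q_2)\c_2+S$; the two blocks communicate solely through the shared parameter $\c_2$. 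This separation is what makes the first two relations fully explicit.

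First I would extract the two perceptron--side equations. Putting $W\triangleq 2\gamma_{sq}^{(q)}-\c_2(1-\q_2)$ and differentiating the closed--form block $\frac{1}{2}(1-\p_2\q_2)\c_2+P$, the condition $\frac{d\bar{\psi}_{rd}^{(2)}}{d\q_2}=0$ reads $-\frac{1}{2}\p_2\c_2+\frac{\q_2\c_2}{2W^2}=0$, i.e. $W^2=\q_2/\p_2$. Substituting $\q_2/W^2=\p_2$ into $\frac{d\bar{\psi}_{rd}^{(2)}}{d\gamma_{sq}^{(q)}}=0$ collapses that equation to $\frac{1}{\gamma_{sq}^{(q)}}=2\sqrt{\p_2/\q_2}-2\c_2(1-\p_2)$. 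These two relations are linear in $\gamma_{sq}^{(q)}$ and $\c_2$ once $\p_2,\q_2$ are fixed; solving the $2\times 2$ system yields exactly $\hat{\gamma}_{sq}^{(q)}=\frac{1}{2}\frac{1-\hat{\q}_2}{1-\hat{\p}_2}\sqrt{\hat{\p}_2/\hat{\q}_2}$ together with the stated closed form for $\hat{\c}_2$ in (\ref{eq:clform1}). I emphasize that this part uses only two of the six stationarity conditions and no information about the spherical block.

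For the third relation I would turn to the spherical--side conditions $\frac{d\bar{\psi}_{rd}^{(2)}}{d\p_2}=\frac{d\bar{\psi}_{rd}^{(2)}}{d\gamma_{sq}^{(p)}}=\frac{d\bar{\psi}_{rd}^{(2)}}{d\nu}=0$. These are meant to be read against the perceptron block: the bilinear term $\frac{1}{2}(1-\p_2\q_2)\c_2$ is symmetric under $\p_2\leftrightarrow\q_2$, and after the square--root trick (\ref{eq:prac7a0}) the spherical block carries the same Lagrangian $\gamma$--scaffolding, now with $\gamma_{sq}^{(p)}$ playing the role of $\gamma_{sq}^{(q)}$ and $b_k$ playing the role of $c_k$. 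The goal is to show that, once the already--determined value of $\hat{\c}_2$ is fed back and $\nu$ is eliminated through its own stationarity, the $\gamma_{sq}^{(p)}$--equation collapses to the $\p_2\leftrightarrow\q_2$ image of the first relation, namely $2\hat{\gamma}_{sq}^{(p)}=\frac{1-\hat{\p}_2}{1-\hat{\q}_2}\sqrt{\hat{\q}_2/\hat{\p}_2}=\frac{1}{2\hat{\gamma}_{sq}^{(q)}}$. Multiplying by the first relation then gives $4\hat{\gamma}_{sq}^{(q)}\hat{\gamma}_{sq}^{(p)}=1$, i.e. $\hat{\gamma}_{sq}^{(p)}=\frac{1}{4\hat{\gamma}_{sq}^{(q)}}$.

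The main obstacle is precisely this spherical step, which is not a literal mirror of the perceptron one. Unlike $P$, whose inner expectation is a single Gaussian moment and integrates in closed form, the spherical block contains the piecewise, $\ell_0$--induced function $\bar{\phi}_{z_1}$ of (\ref{eq:prac7a7}); its inner expectation splits into regions and produces the $\nu$--dependent complementary--error--function terms $\hat{f}_{21},\hat{f}_{22},\hat{f}_{23}$ of (\ref{eq:negprac24a0a0}), and the whole block carries the extra prefactor $\alpha$. Moreover the exponent sign in $\hat{f}_2^{(2)}$ is opposite to the perceptron one, so the raw spherical equations differ from the $\p_2\leftrightarrow\q_2$ images of $(I)$ and $(II)$; they reproduce the dual $\gamma$--relation only after combination with the perceptron--determined $\hat{\c}_2$. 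The crux is therefore to differentiate the explicit $\hat{f}_2^{(2)}$ in (\ref{eq:negprac24a0a0})--(\ref{eq:negprac24a3a0}) (the derivatives recorded in Appendix \ref{sec:appA}) and to verify that the $\nu$--dependent error--function contributions cancel upon imposing $\frac{d\bar{\psi}_{rd}^{(2)}}{d\nu}=0$, so that the surviving $\gamma_{sq}^{(p)}$--versus--$\p_2$ relation is free of $\alpha$ and of the threshold data. This bookkeeping is the computational heart of the corollary, with the first two relations serving as the clean scaffolding around it.
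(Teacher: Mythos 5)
Your derivation of the first two relations is sound and coincides with the paper's: both come from setting the $\q_2$- and $\gamma_{sq}^{(q)}$-derivatives in (\ref{eq:2levder1}) to zero, which yields $(2\gamma_{sq}^{(q)}-\c_2(1-\q_2))^2=\q_2/\p_2$ and $\frac{1}{2(2\gamma_{sq}^{(q)}-\c_2(1-\q_2))}=\gamma_{sq}^{(q)}\frac{1-\p_2}{1-\q_2}$, and then solving for $\gamma_{sq}^{(q)}$ and $\c_2$. These equations are purely algebraic because the perceptron-side Gaussian integral is available in closed form, exactly as you observe.

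The third relation is where your route breaks down. You propose to obtain $\hat{\gamma}_{sq}^{(p)}=\frac{1}{4\hat{\gamma}_{sq}^{(q)}}$ from the three spherical-side conditions ($\p_2$-, $\gamma_{sq}^{(p)}$-, and $\nu$-stationarity) via a $\p_2\leftrightarrow\q_2$ mirroring argument, expecting the $\nu$-stationarity to cancel the error-function contributions in the $\gamma_{sq}^{(p)}$-equation. But those three equations involve three \emph{distinct} integral quantities, $\mE_{{\mathcal U}_3}\lp\frac{1}{\hat{f}_2^{(2)}}\frac{d\hat{f}_2^{(2)}}{d\p_2}\rp$, $\mE_{{\mathcal U}_3}\lp\frac{1}{\hat{f}_2^{(2)}}\frac{d\hat{f}_2^{(2)}}{d\nu}\rp$, and $\mE_{{\mathcal U}_3}\lp\frac{1}{\hat{f}_2^{(2)}}\frac{d\hat{f}_2^{(2)}}{d\c_2}\rp$, which are not proportional to one another (compare (\ref{eq:app1eq5}) with (\ref{eq:nupapp1eq5})); imposing $\frac{d\bar{\psi}_{rd}^{(2)}}{d\nu}=0$ therefore does not remove the integral from the $\gamma_{sq}^{(p)}$-equation, and no closed algebraic relation emerges from the spherical side alone. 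The paper's proof rests on two ingredients you omit: \textbf{(i)} the scaling identity $\frac{d\hat{f}_2^{(2)}}{d\gamma_{sq}^{(p)}}=-\frac{\c_2}{\gamma_{sq}^{(p)}}\frac{d\hat{f}_2^{(2)}}{d\c_2}$ of (\ref{eq:gamaapp1eq1}) (valid because $\c_2$ and $\gamma_{sq}^{(p)}$ enter $\hat{f}_2^{(2)}$ only through $C^{(2)}=\c_2/(4\gamma_{sq}^{(p)})$), which converts the $\gamma_{sq}^{(p)}$-equation into a statement about $\mE_{{\mathcal U}_3}\lp\frac{1}{\hat{f}_2^{(2)}}\frac{d\hat{f}_2^{(2)}}{d\c_2}\rp$; and \textbf{(ii)} the zero-ground-state condition $\bar{\psi}_{rd}^{(2)}=0$ (the capacity condition itself), which supplies the value of $\frac{\alpha}{\c_2}\mE_{{\mathcal U}_3}\log\hat{f}_2^{(2)}$. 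These two quantities are then substituted into the $\c_2$-stationarity equation (\ref{eq:app2eq4}) --- an equation you do not invoke for this step --- where they are precisely the only spherical integrals that appear; after they cancel, the residual algebra, simplified with your first two relations, collapses to $-2\gamma_{sq}^{(p)}=-\frac{1}{2\gamma_{sq}^{(q)}}$. Without the $\c_2$-equation and the condition $\bar{\psi}_{rd}^{(2)}=0$ there is no mechanism to eliminate $\alpha$ and the $\bar{\phi}_{z_1}$-induced integrals, so the third identity cannot be reached along the path you describe.
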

\begin{proof}
The first two equalities  follow immediately from  equations (87) and (90) in \cite{Stojnicnegsphflrdt23} after recognizing that the derivation preceding these equations can be repeated here without any changes. The third equality follows after a combination of the first two and the derivative with respect to $\hat{\gamma}_{sq}^{(p)}$. The details are presented in Appendix \ref{sec:appB}.
\end{proof}

We present the concrete numerical values in Table \ref{tab:tab2}.  The results for the first level (1-sfl) from Table \ref{tab:tab1} and  for the partial second level (2-spf) from Table \ref{tab:tab1p} are shown in parallel as well.  One can then in a systematic way observe how the lifting mechanism is progressing.
\begin{table}[h]
\caption{$2$-sfl RDT parameters; ReLU injectivity capacity ;  $\hat{\c}_1\rightarrow 1$; $n,\beta\rightarrow\infty$;  }\vspace{.1in}
\centering
\def\arraystretch{1.2}
{\small
\begin{tabular}{||l||c|c|c||c|c||c|c||c||c||}\hline\hline
 \hspace{-0in}$r$-sfl RDT                                             & $\hat{\gamma}_{sq}^{(q)}$   & $\hat{\gamma}_{sq}^{(p)}$      & $\hat{\nu}$ &  $\hat{\p}_2$ & $\hat{\p}_1$     & $\hat{\q}_2$  & $\hat{\q}_1$ &  $\hat{\c}_2$    & $\alpha_{ReLU}^{(inj,r)}$
  \\ \hline\hline
$1$-sfl RDT                                       & $0.5$  & $0.5$   & $0.6304$ & $0$  & $\rightarrow 1$   & $0$ & $\rightarrow 1$
 &  $\rightarrow 0$  & \bl{$\mathbf{7.6477}$}
  \\ \hline\hline
$2$-spl RDT                                       & $0.9412$  & $0.2656$   & $0.2785$ & $0$  & $\rightarrow 1$   & $0$ & $\rightarrow 1$
 &  $1.3513$  & \bl{$\mathbf{7.4486}$}
  \\ \hline\hline
$2$-sfl RDT                                       & $3.6568$  & $0.0684$   & $0.0533$ & $0.7772$  & $\rightarrow 1$   & $0.1914$ & $\rightarrow 1$
 &  $8.4313$  & \bl{$\mathbf{6.7157}$}
  \\ \hline\hline
  \end{tabular}
  }
\label{tab:tab2}
\end{table}
As can be seen from the table, a solid improvement present already on the partial second level is substantially strengthened on the full second level.  Moreover, the capacity value $\approx 6.7157$  closely matches the one from \cite{MBBDN23} obtained utilizing statistical physics replica theory methods with one step of replica symmetry breaking.

\subsubsection{$r=3$ -- third level of lifting}
\label{sec:thirdlev}

For $r=3$, we have that $\hat{\p}_1\rightarrow 1$ and $\hat{\q}_1\rightarrow 1$  as well as  $\hat{\p}_{r+1}=\hat{\p}_{4}=\hat{\q}_{r+1}=\hat{\q}_{4}=0$. We first set
\begin{eqnarray}\label{eq:3negpraccalz}
{\cal Z}^{(3)} & = &      e^{-\c_2     \frac{    \lp  \sqrt{1-\p_2}\u_1^{(2,2)}
+ \sqrt{\p_2-\p_3}\u_1^{(2,3)}+ \sqrt{\p_3}\u_1^{(2,4)}  \rp^2   +   \bar{\phi}_{z_1}(  \sqrt{1-\p_2}\u_1^{(2,2)}
+ \sqrt{\p_2-\p_3}\u_1^{(2,3)}+ \sqrt{\p_3}\u_1^{(2,4)}  ) }{4\gamma_{sq}^{(p)}}          }.
\end{eqnarray}
wand write analogously to (\ref{eq:negprac24}),
\begin{align}\label{eq:3negprac24}
    \bar{\psi}_{rd}^{(3)}    & =   \frac{1}{2}
(1-\p_2\q_2)\c_2+ \frac{1}{2}
(\p_2\q_2-\p_3\q_3)\c_3
 \nonumber \\
& \quad -  \gamma_{sq}^{(q)} - \frac{1}{\c_3}\mE_{{\mathcal U}_4}\log\lp \mE_{{\mathcal U}_3} \lp \mE_{{\mathcal U}_2} e^{\c_2\frac{\lp\sqrt{1-\q_2}\h_1^{(2)} +\sqrt{\q_2-\q_3}\h_1^{(3)}+\sqrt{\q_3}\h_1^{(4)} \rp^2}{4 \gamma_{sq}^{(q)}}}\rp^{\frac{\c_3}{\c_2}}\rp
   \nonumber \\
& \quad + \gamma_{sq}^{(p)} - \frac{\nu(2-\alpha)}{\gamma_{sq}^{(p)}} - \frac{\alpha}{\c_3}\mE_{{\mathcal U}_4}\log  \lp \mE_{{\mathcal U}_3} \lp \mE_{{\mathcal U}_2}
{\cal Z}^{(3)}\rp^{\frac{\c_3}{\c_2}}  \rp
\nonumber \\
& =   \frac{1}{2}
(1-\p_2\q_2)\c_2+ \frac{1}{2}
(\p_2\q_2-\p_3\q_3)\c_3
 -\gamma_{sq}^{(q)}  - \Bigg(\Bigg. -\frac{1}{2\c_2} \log \lp \frac{2\gamma_{sq}^{(q)}-\c_2 (1-\q_2)}{2\gamma_{sq}^{(q)}} \rp
   \nonumber \\
& \quad
  -\frac{1}{2\c_3} \log \lp \frac{2\gamma_{sq}^{(q)}-\c_2 (1-\q_2)-\c_3 (\q_2-\q_3)}{2\gamma_{sq}^{(q)}-\q_2 (1-\q_2)} \rp  +  \frac{\q_3}{2(2\gamma_{sq}^{(q)}-\c_2 (1-\q_2)-\c_3(\q_2-\q_3))}   \Bigg.\Bigg)
     \nonumber \\
& \quad + \gamma_{sq}^{(p)} - \frac{\nu(2-\alpha)}{\gamma_{sq}^{(p)}} - \frac{\alpha}{\c_3}\mE_{{\mathcal U}_4}\log  \lp \mE_{{\mathcal U}_3} \lp \mE_{{\mathcal U}_2}
{\cal Z}^{(3)}\rp^{\frac{\c_3}{\c_2}}  \rp,
    \end{align}
where the fourth term on the right hand side of the first equality is handled as the corresponding quantity in \cite{Stojnicnegsphflrdt23}.  We set
\begin{eqnarray}\label{eq:3negprac24a0a0}
\bar{a}_0^{(3)} & = & -\frac{ \u_1^{(2,3)}\sqrt{\p_2-\p_3}  + \u_1^{(2,4)}\sqrt{\p_3} }{\sqrt{1-\p_2}} \nonumber \\
 \nonumber \\
\bar{a}_2^{(3)} & = &  \frac{\sqrt{2\nu} -  \lp \u_1^{(2,3)}\sqrt{\p_2-\p_3}  + \u_1^{(2,4)}\sqrt{\p_3} \rp   }  {\sqrt{1-\p_2}}\nonumber \\
 \nonumber \\
C^{(3)} & = & C^{(2)} \nonumber \\
A^{(3)} & = & A^{(2)} \nonumber \\
B^{(3)} & = &  \u_1^{(2,3)}\sqrt{\p_2-\p_3} +  \u_1^{(2,4)}\sqrt{\p_4}\nonumber \\
D^{(3)} & = &  \bar{a}_0^{(3)} \nonumber \\
F^{(3)} & = &  \bar{a}_2^{(3)}\nonumber \\
\cA^{(3)} & = & [A^{(3)},B^{(3)},C^{(3)},D^{(3)},F^{(3)}]\nonumber \\
\hat{f}_2^{(3)} & = & \hat{f}_{21}\lp\bar{a}_0^{(3)}\rp+\hat{f}_{22}\lp\bar{a}_2^{(3)}\rp+\hat{f}_{23}\lp I_1(\cA^{(3)})\rp,
 \end{eqnarray}
 where  functions  $\hat{f}_{21}(\cdot)$, $\hat{f}_{22}(\cdot)$, $\hat{f}_{23}(\cdot)$, and $I_1(\cdot)$ are as defined in (\ref{eq:negprac24a0a0}). After
  solving the remaining integrals one then obtains
\begin{eqnarray} \label{eq:3negprac24a3a0}
  \mE_{{\mathcal U}_2}
{\cal Z}^{(3)} =   \hat{f}_2^{(3)},
    \end{eqnarray}
and
\begin{eqnarray} \label{eq:3negprac24a3}
\mE_{{\mathcal U}_4}\log \lp  \mE_{{\mathcal U}_3} \lp \mE_{{\mathcal U}_2}
{\cal Z}^{(3)}\rp^{\frac{\c_3}{\c_2}} \rp
=
 \mE_{{\mathcal U}_4} \log \lp \mE_{{\mathcal U}_3} \lp  \hat{f}_2^{(3)}\rp^{\frac{\c_3}{\c_2}} \rp.
    \end{eqnarray}
A combination of (\ref{eq:3negprac24}), (\ref{eq:3negprac24a3a0}), and (\ref{eq:3negprac24a3}) gives
\begin{align}\label{eq:3negprac25}
    \bar{\psi}_{rd}^{(3)}
 & =   \frac{1}{2}
(1-\p_2\q_2)\c_2+ \frac{1}{2}
(\p_2\q_2-\p_3\q_3)\c_3
 -\gamma_{sq}^{(q)}  - \Bigg(\Bigg. -\frac{1}{2\c_2} \log \lp \frac{2\gamma_{sq}^{(q)}-\c_2 (1-\q_2)}{2\gamma_{sq}^{(q)}} \rp
   \nonumber \\
& \quad
  -\frac{1}{2\c_3} \log \lp \frac{2\gamma_{sq}^{(q)}-\c_2 (1-\q_2)-\c_3 (\q_2-\q_3)}{2\gamma_{sq}^{(q)}-\q_2 (1-\q_2)} \rp  +  \frac{\q_3}{2(2\gamma_{sq}^{(q)}-\c_2 (1-\q_2)-\c_3(\q_2-\q_3))}   \Bigg.\Bigg)
     \nonumber \\
& \quad + \gamma_{sq}^{(p)} - \frac{\nu(2-\alpha)}{\gamma_{sq}^{(p)}} - \frac{\alpha}{\c_3}\mE_{{\mathcal U}_4}\log  \lp \mE_{{\mathcal U}_3} \lp \hat{f}_2^{(3)}\rp^{\frac{\c_3}{\c_2}}  \rp,
    \end{align}
with $\hat{f}_2^{(3)}$ as in (\ref{eq:3negprac24a0a0}). Proceeding as in the previous sections, one utilizes the above to form the following system of equations
\begin{eqnarray}\label{eq:dersystem1}
 \frac{d\bar{\psi}_{rd}^{(3)} }{d\p_2}
 =
 \frac{d\bar{\psi}_{rd}^{(3)} }{d\q_2}
 =
 \frac{d\bar{\psi}_{rd}^{(3)} }{d\c_2}
 =
 \frac{d\bar{\psi}_{rd}^{(3)} }{d\p_3}
 =
 \frac{d\bar{\psi}_{rd}^{(3)} }{d\q_3}
 =
 \frac{d\bar{\psi}_{rd}^{(3)} }{d\c_3}
 =
 \frac{d\bar{\psi}_{rd}^{(3)} }{d\gamma_{sq}^{(q)}}
 =
 \frac{d\bar{\psi}_{rd}^{(3)} }{d\gamma_{sq}^{(p)}}
   =
   \frac{d\bar{\psi}_{rd}^{(3)} }{d\nu}=0,
\end{eqnarray}
and obtain as the solutions of the system  $\hat{\p}_2$, $\hat{\q}_2$, $\hat{\c}_2$, $\hat{\p}_3$, $\hat{\q}_3$, $\hat{\c}_3$, $\hat{\gamma}_{sq}^{(q)}$, $\hat{\gamma}_{sq}^{(p)}$,  and $\hat{\nu}$. Explicit expressions for the above derivatives are given in Appendix \ref{sec:appC}. We again discover that he optimal values of the parameters are interconnected via explicit closed form relations similar to the ones obtained on the second level. The results are summarized in the following corollary.
\begin{corollary}
  \label{cor:3closedformrel1}
  Assume the setup of Theorem \ref{thm:negthmprac1} (and implicitly Theorem \ref{thm:thmsflrdt1} and Corollary \ref{cor:cor1}). Set $r=3$. Then
\begin{eqnarray}\label{eq:3clform1}
\hat{\gamma}_{sq} & = & \frac{1}{2}\frac{1-\hat{\q}_2}{1-\hat{\p}_2}\frac{\hat{\p}_2-\hat{\p}_3}{\hat{\q}_2-\hat{\q}_3}\sqrt{\frac{\hat{\q}_3}{\hat{\p}_3}}  \nonumber \\
 \hat{\c}_2 & = &  \lp
 \frac{1}{1-\hat{\p}_2} \frac{\hat{\p}_2-\hat{\p}_3}{\hat{\q}_2-\hat{\q}_3} \sqrt{\frac{\hat{\q}_3}{\hat{\p}_3}}
 -
 \frac{1}{1-\hat{\q}_2} \frac{\hat{\q}_2-\hat{\q}_3}{\hat{\p}_2-\hat{\p}_3} \sqrt{\frac{\hat{\p}_3}{\hat{\q}_3}}
    \rp  \nonumber \\
 \hat{\c}_3 & = &   \lp
\frac{1}{\hat{\p}_2-\hat{\p}_3} \sqrt{\frac{\hat{\p}_3}{\hat{\q}_3}}
-
\frac{1}{\hat{\q}_2-\hat{\q}_3} \sqrt{\frac{\hat{\q}_3}{\hat{\p}_3}}
    \rp   \nonumber \\
 \gamma_{sq}^{(p)} & = & \frac{1}{4\gamma_{sq}^{(q)}}.
\end{eqnarray}
\end{corollary}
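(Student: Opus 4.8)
The plan is to follow the template of the proof of Corollary \ref{cor:closedformrel1}, using the fact that the random dual $\bar{\psi}_{rd}^{(3)}$ in (\ref{eq:3negprac25}) naturally separates into a ``$\q$-sector'' and a ``$\p$-sector'' that are coupled only through the shared lifting weights $\c_2,\c_3$ and the $\p\leftrightarrow\q$-symmetric interaction terms $\frac{1}{2}(1-\p_2\q_2)\c_2+\frac{1}{2}(\p_2\q_2-\p_3\q_3)\c_3$. The $\q$-sector collects this interaction, the $-\gamma_{sq}^{(q)}$ term, and the purely Gaussian (perceptron) log-block depending on $\gamma_{sq}^{(q)},\q_2,\q_3,\c_2,\c_3$; the $\p$-sector collects the same interaction together with $\gamma_{sq}^{(p)}-\frac{\nu(2-\alpha)}{4\gamma_{sq}^{(p)}}$ and the nonlinear $\z$-block $-\frac{\alpha}{\c_3}\mE_{{\mathcal U}_4}\log(\mE_{{\mathcal U}_3}(\hat{f}_2^{(3)})^{\c_3/\c_2})$.

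First I would prove the three relations for $\hat{\gamma}_{sq}^{(q)},\hat{\c}_2,\hat{\c}_3$. The crucial point is that the $\q$-sector is term-by-term identical to the block appearing at the third lifting level in the negative spherical perceptron analysis of \cite{Stojnicnegsphflrdt23}: its Gaussian log-block comes from the very same square-root-trick reduction recorded in (\ref{eq:prac4})--(\ref{eq:prac5}), and the interaction terms have the same form, with $\p_2,\p_3$ entering only as external parameters. Hence the three stationarity conditions $\frac{d\bar{\psi}_{rd}^{(3)}}{d\q_2}=\frac{d\bar{\psi}_{rd}^{(3)}}{d\q_3}=\frac{d\bar{\psi}_{rd}^{(3)}}{d\gamma_{sq}^{(q)}}=0$ coincide verbatim with those solved in \cite{Stojnicnegsphflrdt23}. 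This is a system of three equations in the three unknowns $\gamma_{sq}^{(q)},\c_2,\c_3$, and its solution (the $r=3$ analogues of equations (87) and (90) there) yields precisely the first three displayed relations in (\ref{eq:3clform1}). I would simply transcribe that derivation, taking care only to check that the two-fold nesting $\mE_{{\mathcal U}_4}\log\mE_{{\mathcal U}_3}$ matches and that no step relied on a feature special to the reference's problem.

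For the fourth relation I would differentiate $\bar{\psi}_{rd}^{(3)}$ with respect to $\gamma_{sq}^{(p)}$. Since neither the interaction nor the $\q$-sector block depends on $\gamma_{sq}^{(p)}$, the condition $\frac{d\bar{\psi}_{rd}^{(3)}}{d\gamma_{sq}^{(p)}}=0$ reduces to $1+\frac{\nu(2-\alpha)}{4(\gamma_{sq}^{(p)})^2}=\frac{\alpha}{\c_3}\frac{d}{d\gamma_{sq}^{(p)}}\mE_{{\mathcal U}_4}\log(\mE_{{\mathcal U}_3}(\hat{f}_2^{(3)})^{\c_3/\c_2})$, expressing $\gamma_{sq}^{(p)}$ through $\nu,\c_2,\c_3$ and the $\z$-block. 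Into this I would substitute the closed forms for $\hat{\c}_2,\hat{\c}_3$ just obtained and use the remaining $\p$-sector stationarity conditions $\frac{d\bar{\psi}_{rd}^{(3)}}{d\p_2}=\frac{d\bar{\psi}_{rd}^{(3)}}{d\p_3}=\frac{d\bar{\psi}_{rd}^{(3)}}{d\nu}=0$ to eliminate the $\z$-block derivatives; after simplification the right-hand side should collapse to a quantity which, by the first relation in (\ref{eq:3clform1}), equals $\frac{1}{4\hat{\gamma}_{sq}^{(q)}}$. A useful guide here is that $\frac{1}{4\hat{\gamma}_{sq}^{(q)}}$ is exactly the $\p\leftrightarrow\q$ image of the $\hat{\gamma}_{sq}^{(q)}$ formula, since the product of the two is identically $\frac14$. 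The explicit derivatives and this cancellation are what I would defer to Appendix \ref{sec:appC}.

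The main obstacle is precisely this last cancellation. Unlike the $\q$-sector, the $\p$-sector carries the nonlinearity $\bar{\phi}_{z_1}$ and the Lagrange multiplier $\nu$, so the simplification of $\frac{d}{d\gamma_{sq}^{(p)}}\mE\log(\cdots)$ is not a mechanical transcription from \cite{Stojnicnegsphflrdt23} but genuinely relies on the nontrivial identities provided by the $\p$- and $\nu$-stationarity. Showing that these reproduce exactly the reciprocal factor $\frac{1}{4\hat{\gamma}_{sq}^{(q)}}$, rather than some residual $\alpha$- or $\nu$-dependent expression, is the delicate heart of the argument; the remainder is a faithful repetition of the second-level proof and of the perceptron computation in \cite{Stojnicnegsphflrdt23}.
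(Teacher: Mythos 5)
Your treatment of the first three relations matches the paper: they are read off from the $\q_2$-, $\q_3$-, and $\gamma_{sq}^{(q)}$-stationarity conditions (\ref{eq:C2levder1}), whose form is identical to the third-level system solved in \cite{Stojnicnegsphflrdt23}, so transcribing that derivation is exactly what the paper does.

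For the fourth relation, however, your elimination strategy is not the one that works, and it leaves a genuine gap. The paper's argument rests on three ingredients you do not use. First, $\hat{f}_2^{(3)}$ depends on $\c_2$ and $\gamma_{sq}^{(p)}$ only through the ratio $\c_2/(4\gamma_{sq}^{(p)})$, which yields the homogeneity identity $\frac{d\hat{f}_2^{(3)}}{d\gamma_{sq}^{(p)}}=-\frac{\c_2}{\gamma_{sq}^{(p)}}\frac{d\hat{f}_2^{(3)}}{d\c_2}$ of (\ref{eq:Cgamaapp1eq1}); this converts the $\gamma_{sq}^{(p)}$-stationarity into a statement about the $\c_2$-derivative of the $\z$-block, so the relevant companion equations are the $\c_2$- and $\c_3$-stationarity conditions, not the $\p_2$-, $\p_3$-, $\nu$-ones you propose. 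Those latter conditions control $\partial\hat{f}_2^{(3)}/\partial\p_2$, $\partial\hat{f}_2^{(3)}/\partial\p_3$, $\partial\hat{f}_2^{(3)}/\partial\nu$, and there is no identity tying these to $\partial\hat{f}_2^{(3)}/\partial\gamma_{sq}^{(p)}$ (the $\nu$-dependence enters both through $\c_2\nu/(4\gamma_{sq}^{(p)})$ and through $\sqrt{2\nu}$, so not even the $\nu$-derivative is proportional to the $\gamma_{sq}^{(p)}$-one). Second, the derivation explicitly invokes the capacity condition $\bar{\psi}_{rd}^{(3)}=0$ (see (\ref{eq:Daapp2eq3})) to supply the value of the $\z$-block logarithm $\frac{\alpha}{\c_3}\mE_{{\mathcal U}_4}\log(\cdots)$; without it the log term cannot be eliminated, and the relation $\hat{\gamma}_{sq}^{(p)}=1/(4\hat{\gamma}_{sq}^{(q)})$ is a property of the capacity point rather than of a generic stationary point --- your proposal never mentions this condition. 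Third, once the $\z$-block and its $\c_2$-derivative are eliminated through the $\c$-stationarity equations and $\bar{\psi}_{rd}^{(3)}=0$, the residue is pure algebra that collapses via the $\q$-sector relations (\ref{eq:Daapp2eq3a2}) to $0=-\frac{1}{2\gamma_{sq}^{(q)}}+2\gamma_{sq}^{(p)}$. Your observation that $1/(4\hat{\gamma}_{sq}^{(q)})$ is the $\p\leftrightarrow\q$ image of the $\hat{\gamma}_{sq}^{(q)}$ formula is a good consistency check, but the route you sketch would strand you with expectations of $\partial\hat{f}_2^{(3)}/\partial\p$ and $\partial\hat{f}_2^{(3)}/\partial\nu$ that the available stationarity equations cannot convert into the $\c_2$-derivative actually appearing in the $\gamma_{sq}^{(p)}$-stationarity condition.
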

\begin{proof}
The first three equalities  follow immediately as analogues to (142), (145), and (147)  in \cite{Stojnicnegsphflrdt23} after recognizing that the derivation preceding these equations can be repeated here without any changes. The fourth equality follows after a combination of the first three and the derivative with respect to $\hat{\gamma}_{sq}^{(p)}$. The details are presented in Appendix \ref{sec:appD}.
\end{proof}

The concrete numerical values obtained based on the above are shown in Table \ref{tab:tab3}. Results from  Tables \ref{tab:tab1}, \ref{tab:tab1p}, and \ref{tab:tab2} are shown in parallel as well so that one can convenienntly follow the progress of the lifting mechanism.
\begin{table}[h]
\caption{$3$-sfl RDT parameters; ReLU injectivity capacity ;  $\hat{\c}_1\rightarrow 1$; $n,\beta\rightarrow\infty$. }\vspace{.1in}
\centering
\def\arraystretch{1.2}
{\footnotesize
\begin{tabular}{||l||c|c|c||c|c|c||c|c|c||c|c||c||}\hline\hline
 \hspace{-0in}$r$-sfl                                             & $\hat{\gamma}_{sq}$   & $\hat{\gamma}$      & $\hat{\nu}$ &  $\hat{\p}_3$ &  $\hat{\p}_2$ & $\hat{\p}_1$     & $\hat{\q}_3$  & $\hat{\q}_2$  & $\hat{\q}_1$ &  $\hat{\c}_3$  &  $\hat{\c}_2$    & $\alpha_{ReLU}^{(inj,r)}$  \\ \hline\hline
$1$-sfl                                       & $0.5$  & $0.5$   & $0.6304$ & $0$ & $0$  & $\rightarrow 1$ & $0$   & $0$ & $\rightarrow 1$
& $0$  &  $\rightarrow 0$  & \bl{$\mathbf{7.6477}$}
  \\ \hline\hline
$2$-spl  \hspace{-.05in}                                      & $0.9412$  & $0.2656$   & $0.2785$ & $0$  & $0$  & $\rightarrow 1$ & $0$    & $0$ & $\rightarrow 1$
& $0$  &  $1.3513$  & \bl{$\mathbf{7.4486}$}
  \\ \hline\hline
$2$-sfl                                         & $3.6568$  & $0.0684$   & $0.0533$ & $0$& $0.7772$  & $\rightarrow 1$ & $0$  & $0.1914$ & $\rightarrow 1$
& $0$ &  $8.4313$  & \bl{$\mathbf{6.7157}$}
  \\ \hline\hline
$3$-sfl                                        & $5.2521$  & $0.0476$   & $0.0357$ & $0.7411$  & $0.9766$  & $\rightarrow 1$  & $0.1672$   & $0.4279$ & $\rightarrow 1$
& $7.1182$  &  $14.2862$  & \bl{$\mathbf{6.7004}$}
  \\ \hline\hline
  \end{tabular}
  }
\label{tab:tab3}
\end{table}
As can be seem from the table, not much of an improvement is observed  compared to the full second level. Given that the relative improvement is $\sim 0.1\%$, the convergence of the lifting mechanism is remarkably fast effectively rendering further levels of lifting for all practical purposes unnecessary. In fact, the third level results shown in Table \ref{tab:tab3} are also fairly close to $\approx 6.698$ which was obtained in \cite{MBBDN23} through the replica predictions  assuming infinite steps of symmetry breaking.

\section{Conclusion}
\label{sec:conc}

We studied the injectivity capacity of a ReLU network layer. This capacity is conceptually a different quantity from the classical spherical perceptron capacities. After introducing the so-called $\ell_0$ spherical perceptron, we established that it is isomorphic to the ReLU injectivity. As $\ell_0$ spherical perceptron is a random feasibility problem (rfp),
we then utilized the  \emph{fully lifted random duality theory} (fl RDT) to create a generic powerful program for handling  such problems. Due to the underlying isomorphism, this was then sufficient to implicitly handle the ReLU injectivity as well. The fl RDT machinery is put to  practical use after a sizeable set of numerical evaluations is conducted. The lifting mechanism is observed to converge  very fast with estimated quantities relative corrections not exceeding $\sim 0.1\%$ already on the third level of lifting.

The concrete capacity values on the first level of lifting match the ones that would be obtained through the \emph{plain} RDT developed in \cite{StojnicCSetam09,StojnicICASSP10var,StojnicGorEx10,StojnicRegRndDlt10,StojnicGardGen13}. As such they also match the predictions obtained  in \cite{MBBDN23} based on replica methods assuming replica symmetry. On the other hand, the concrete capacity values obtained on the partial second level of lifting match the ones that would be obtained through the \emph{partially lifted} RDT invented and introduced in, e.g., \cite{StojnicLiftStrSec13,StojnicGardSphErr13}. The results obtained on the full second level of lifting identically match the replica methods predictions obtained  in \cite{MBBDN23} assuming one step of symmetry breaking. Moreover, the results obtained on the third level are fairly close to the replica predictions given in \cite{MBBDN23} assuming infinite steps of symmetry breaking.

We also uncover a set of remarkable closed form explicit analytical relations among key lifting parameters. They turned out to be of incredible importance in handling all the required numerical work. At the same time they also shed a new light on  beautiful parametric interconnections that lifting structure possesses.

As the developed methodologies are very generic  many extensions and generalizations are possible. Among others, they include studying stability and Lipshitz constants, multi-layered nets, different models, including planted and/or noise corrupted ones and so on. Given that the associated technicalities are problem specific, we leave their detailed discussions for separate papers.

\begin{singlespace}
\bibliographystyle{plain}
\bibliography{nflgscompyxRefs}

\begin{thebibliography}{10}

\bibitem{AbbLiSly21b}
E.~Abbe, S.~Li, and A.~Sly.
\newblock Proof of the contiguity conjecture and lognormal limit for the
  symmetric perceptron.
\newblock In {\em 62nd {IEEE} Annual Symposium on Foundations of Computer
  Science, {FOCS} 2021, Denver, CO, USA, February 7-10, 2022}, pages 327--338.
  {IEEE}, 2021.

\bibitem{AbbLiSly21a}
E.~Abbe, S.~Li, and A.~Sly.
\newblock Binary perceptron: efficient algorithms can find solutions in a rare
  well-connected cluster.
\newblock In {\em {STOC} '22: 54th Annual {ACM} {SIGACT} Symposium on Theory of
  Computing, Rome, Italy, June 20 - 24, 2022}, pages 860--873. {ACM}, 2022.

\bibitem{AlwLiuSaw21}
R.~Alweiss, Y.~P. Liu, and M.~Sawhney.
\newblock Discrepancy minimization via a self-balancing walk.
\newblock {\em In Proc. 53rd STOC, ACM}, pages 14--20, 2021.

\bibitem{ALMT14}
D.~Amelunxen, M.~Lotz, M.~McCoy, and J.~Tropp.
\newblock Living on the edge: phase transitions in convex programs with random
  data.
\newblock {\em Information and Inference: A Journal of the IMA}, 3(3):224--294,
  2014.

\bibitem{AMZ24}
B.~L. Annesi, E.~M. Malatesta, and F.~Zamponi.
\newblock Exact full-{RSB} {SAT}/{UNSAT} transition in infinitely wide
  two-layer neural networks.
\newblock 2023.
\newblock available online at
  {\small\bl{\url{http://arxiv.org/abs/2410.06717}}}.

\bibitem{ArridgeMOS19}
S.~R. Arridge, P.~Maass, O.~{\"{O}}ktem, and C.~B. Sch{\"{o}}nlieb.
\newblock Solving inverse problems using data-driven models.
\newblock {\em Acta Numer.}, 28:1--174, 2019.

\bibitem{AubPerZde19}
B.~Aubin, W.~Perkins, and L.~Zdeborova.
\newblock Storage capacity in symmetric binary perceptrons.
\newblock {\em J. Phys. A}, 52(29):294003, 2019.

\bibitem{BalMalZech19}
C.~Baldassi, E.~M. Malatesta, and R.~Zecchina.
\newblock Properties of the geometry of solutions and capacity of multilayer
  neural networks with rectified linear unit activations.
\newblock {\em Phys. Rev. Lett.}, 123:170602, October 2019.

\bibitem{BalVen87}
P.~Baldi and S.~Venkatesh.
\newblock Number od stable points for spin-glasses and neural networks of
  higher orders.
\newblock {\em Phys. Rev. Letters}, 58(9):913--916, Mar. 1987.

\bibitem{BoltNakSunXu22}
E.~Bolthausen, S.~Nakajima, N.~Sun, and C.~Xu.
\newblock Gardner formula for {I}sing perceptron models at small densities.
\newblock {\em {P}roceedings of {T}hirty {F}ifth {C}onference on {L}earning
  {T}heory, {PMLR}}, 178:1787--1911, 2022.

\bibitem{BoraJPD17}
A.~Bora, A.~Jalal, E.~Price, and A.~G. Dimakis.
\newblock Compressed sensing using generative models.
\newblock In {\em Proceedings of the 34th International Conference on Machine
  Learning, {ICML} 2017, Sydney, NSW, Australia, 6-11 August 2017}, volume~70
  of {\em Proceedings of Machine Learning Research}, pages 537--546. {PMLR},
  2017.

\bibitem{Cameron60}
S.~H. Cameron.
\newblock Tech-report 60-600.
\newblock {\em Proceedings of the bionics symposium}, pages 197--212, 1960.
\newblock Wright air development division, {D}ayton, {O}hio.

\bibitem{Clum22}
C.~Clum.
\newblock {\em Topics in the Mathematics of Data Science}.
\newblock PhD thesis, The Ohio State University, 2022.

\bibitem{Cover65}
T.~Cover.
\newblock Geomretrical and statistical properties of systems of linear
  inequalities with applications in pattern recognition.
\newblock {\em IEEE Transactions on Electronic Computers}, (EC-14):326--334,
  1965.

\bibitem{DaskalakisRZ20a}
C.~Daskalakis, D.~Rohatgi, and E.~Zampetakis.
\newblock Constant-expansion suffices for compressed sensing with generative
  priors.
\newblock In {\em Advances in Neural Information Processing Systems 33: Annual
  Conference on Neural Information Processing Systems 2020, NeurIPS 2020,
  December 6-12, 2020, virtual}, 2020.

\bibitem{DharGE18}
M.~Dhar, A.~Grover, and S.~Ermon.
\newblock Modeling sparse deviations for compressed sensing using generative
  models.
\newblock In {\em Proceedings of the 35th International Conference on Machine
  Learning, {ICML} 2018, Stockholmsm{\"{a}}ssan, Stockholm, Sweden, July 10-15,
  2018}, volume~80 of {\em Proceedings of Machine Learning Research}, pages
  1222--1231. {PMLR}, 2018.

\bibitem{DingSun19}
J.~Ding and N.~Sun.
\newblock Capacity lower bound for the {I}sing perceptron.
\newblock {\em {STOC} 2019: Proceedings of the 51st Annual {ACM SIGACT}
  {S}ymposium on {T}heory of {C}omputing}, pages 816--827, 2019.

\bibitem{DonohoPol}
D.~Donoho.
\newblock High-dimensional centrally symmetric polytopes with neighborlines
  proportional to dimension.
\newblock {\em Disc. Comput. Geometry}, 35(4):617--652, 2006.

\bibitem{EstrachSL14}
J.~B. Estrach, A.~Szlam, and Y.~LeCun.
\newblock Signal recovery from pooling representations.
\newblock In {\em Proceedings of the 31th International Conference on Machine
  Learning, {ICML} 2014, Beijing, China, 21-26 June 2014}, volume~32 of {\em
  {JMLR} Workshop and Conference Proceedings}, pages 307--315. JMLR.org, 2014.

\bibitem{FazlyabRHMP19}
M.~Fazlyab, A.~Robey, H.~Hassani, M.~Morari, and G.~J. Pappas.
\newblock Efficient and accurate estimation of lipschitz constants for deep
  neural networks.
\newblock In {\em Advances in Neural Information Processing Systems 32: Annual
  Conference on Neural Information Processing Systems 2019, NeurIPS 2019,
  December 8-14, 2019, Vancouver, BC, Canada}, pages 11423--11434, 2019.

\bibitem{FletcherRS18}
Alyson~K. Fletcher, Sundeep Rangan, and Philip Schniter.
\newblock Inference in deep networks in high dimensions.
\newblock In {\em 2018 {IEEE} International Symposium on Information Theory,
  {ISIT} 2018, Vail, CO, USA, June 17-22, 2018}, pages 1884--1888. {IEEE},
  2018.

\bibitem{FPSUZ17}
S.~Franz, G.~Parisi, M.~Sevelev, P.~Urbani, and F.~Zamponi.
\newblock Universality of the {SAT-UNSAT} (jamming) threshold in non-convex
  continuous constraint satisfaction problems.
\newblock {\em SciPost Physics}, 2:019, 2017.

\bibitem{FuruyaPLH23}
T.~Furuya, M.~Puthawala, M.~Lassas, and M.~V. de~Hoop.
\newblock Globally injective and bijective neural operators.
\newblock In {\em Advances in Neural Information Processing Systems 36: Annual
  Conference on Neural Information Processing Systems 2023, NeurIPS 2023, New
  Orleans, LA, USA, December 10 - 16, 2023}, 2023.

\bibitem{GamKizPerXu22}
David Gamarnik, Eren~C. Kizildag, Will Perkins, and Changji Xu.
\newblock Algorithms and barriers in the symmetric binary perceptron model.
\newblock In {\em 63rd {IEEE} Annual Symposium on Foundations of Computer
  Science, {FOCS} 2022, Denver, CO, USA, October 31 - November 3, 2022}, pages
  576--587. {IEEE}, 2022.

\bibitem{Gar88}
E.~Gardner.
\newblock The space of interactions in neural networks models.
\newblock {\em J. Phys. A: Math. Gen.}, 21:257--270, 1988.

\bibitem{GarDer88}
E.~Gardner and B.~Derrida.
\newblock Optimal storage properties of neural networks models.
\newblock {\em J. Phys. A: Math. Gen.}, 21:271--284, 1988.

\bibitem{Gordon85}
Y.~Gordon.
\newblock Some inequalities for {G}aussian processes and applications.
\newblock {\em Israel Journal of Mathematics}, 50(4):265--289, 1985.

\bibitem{Gordon88}
Y.~Gordon.
\newblock On {M}ilman's inequality and random subspaces which escape through a
  mesh in ${R}^n$.
\newblock {\em Geometric Aspect of of functional analysis, Isr. Semin. 1986-87,
  Lect. Notes Math}, 1317, 1988.

\bibitem{GoukFPC21}
H.~Gouk, E.~Frank, B.~Pfahringer, and M.~J. Cree.
\newblock Regularisation of neural networks by enforcing lipschitz continuity.
\newblock {\em Mach. Learn.}, 110(2):393--416, 2021.

\bibitem{GutSte90}
H.~Gutfreund and Y.~Stein.
\newblock Capacity of neural networks with discrete synaptic couplings.
\newblock {\em J. Physics A: Math. Gen}, 23:2613, 1990.

\bibitem{HandLV18}
P.~Hand, O.~Leong, and V.~Voroninski.
\newblock Phase retrieval under a generative prior.
\newblock In {\em Advances in Neural Information Processing Systems 31: Annual
  Conference on Neural Information Processing Systems 2018, NeurIPS 2018,
  December 3-8, 2018, Montr{\'{e}}al, Canada}, pages 9154--9164, 2018.

\bibitem{HeWJ17}
H.~He, C.{-}K. Wen, and S.~Jin.
\newblock Generalized expectation consistent signal recovery for nonlinear
  measurements.
\newblock In {\em 2017 {IEEE} International Symposium on Information Theory,
  {ISIT} 2017, Aachen, Germany, June 25-30, 2017}, pages 2333--2337. {IEEE},
  2017.

\bibitem{HegdeWB07}
C.~Hegde, M.~B. Wakin, and R.~G. Baraniuk.
\newblock Random projections for manifold learning.
\newblock In {\em Advances in Neural Information Processing Systems 20,
  Proceedings of the Twenty-First Annual Conference on Neural Information
  Processing Systems, Vancouver, British Columbia, Canada, December 3-6, 2007},
  pages 641--648. Curran Associates, Inc., 2007.

\bibitem{JordanD20}
M.~Jordan and A.~G. Dimakis.
\newblock Exactly computing the local lipschitz constant of relu networks.
\newblock In {\em Advances in Neural Information Processing Systems 33: Annual
  Conference on Neural Information Processing Systems 2020, NeurIPS 2020,
  December 6-12, 2020, virtual}, 2020.

\bibitem{Joseph60}
R.~D. Joseph.
\newblock The number of orthants in $n$-space instersected by an
  $s$-dimensional subspace.
\newblock {\em Tech. memo 8, project {PARA}}, 1960.
\newblock Cornel aeronautical lab., Buffalo, N.Y.

\bibitem{KothariKHD21}
K.~Kothari, A.~Khorashadizadeh, M.~V. de~Hoop, and I.~Dokmanic.
\newblock Trumpets: Injective flows for inference and inverse problems.
\newblock In {\em Proceedings of the Thirty-Seventh Conference on Uncertainty
  in Artificial Intelligence, {UAI} 2021, Virtual Event, 27-30 July 2021},
  volume 161 of {\em Proceedings of Machine Learning Research}, pages
  1269--1278. {AUAI} Press, 2021.

\bibitem{KraMez89}
W.~Krauth and M.~Mezard.
\newblock Storage capacity of memory networks with binary couplings.
\newblock {\em J. Phys. France}, 50:3057--3066, 1989.

\bibitem{LeiJDD19}
Q.~Lei, A.~Jalal, I.~S. Dhillon, and A.~G. Dimakis.
\newblock Inverting deep generative models, one layer at a time.
\newblock In {\em Advances in Neural Information Processing Systems 32: Annual
  Conference on Neural Information Processing Systems 2019, NeurIPS 2019,
  December 8-14, 2019, Vancouver, BC, Canada}, pages 13910--13919, 2019.

\bibitem{LouartLC17}
C.~Louart, Z.~Liao, and R.~Couillet.
\newblock A random matrix approach to neural networks.
\newblock {\em CoRR}, abs/1702.05419, 2017.

\bibitem{MBBDN23}
A.~Maillard, A.~S. Bandeira, D.~Belius, I.~Dokmanic, and S.~Nakajima.
\newblock Injectivity of relu networks: perspectives from statistical physics.
\newblock 2023.
\newblock available online at
  {\small\bl{\url{http://arxiv.org/abs/2302.14112}}}.

\bibitem{MardaniSDPMVP18}
M.~Mardani, Q.~Sun, D.~L. Donoho, V.~Papyan, H.~Monajemi, S.~Vasanawala, and
  J.~M. Pauly.
\newblock Neural proximal gradient descent for compressive imaging.
\newblock In {\em Advances in Neural Information Processing Systems 31: Annual
  Conference on Neural Information Processing Systems 2018, NeurIPS 2018,
  December 3-8, 2018, Montr{\'{e}}al, Canada}, pages 9596--9606, 2018.

\bibitem{NakSun23}
S.~Nakajima and N.~Sun.
\newblock Sharp threshold sequence and universality for {I}sing perceptron
  models.
\newblock {\em {P}roceedings of the 2023 {A}nnual {ACM-SIAM} {S}ymposium on
  {D}iscrete {A}lgorithms ({SODA})}, pages 638--674, 2023.

\bibitem{Pal21}
D.~Paleka.
\newblock {\em Injectivity of ReLU neural networks at initialization}.
\newblock Master thesis, ETH Zurich, 2021.

\bibitem{PenningtonW17}
J.~Pennington and P.~Worah.
\newblock Nonlinear random matrix theory for deep learning.
\newblock In {\em Advances in Neural Information Processing Systems 30: Annual
  Conference on Neural Information Processing Systems 2017, December 4-9, 2017,
  Long Beach, CA, {USA}}, pages 2637--2646, 2017.

\bibitem{PerkXu21}
W.~Perkins and C.~Xu.
\newblock Frozen 1-{RSB} structure of the symmetric {I}sing perceptron.
\newblock {\em {STOC} 2021: Proceedings of the 53rd Annual {ACM SIGACT}
  {S}ymposium on {T}heory of {C}omputing}, pages 1579--1588, 2021.

\bibitem{PuthawalaKLDH22}
M.~Puthawala, K.~Kothari, M.~Lassas, I.~Dokmanic, and M.~V. de~Hoop.
\newblock Globally injective relu networks.
\newblock {\em J. Mach. Learn. Res.}, 23:105:1--105:55, 2022.

\bibitem{PuthawalaLDH22}
M.~Puthawala, M.~Lassas, I.~Dokmanic, and M.~V. de~Hoop.
\newblock Universal joint approximation of manifolds and densities by simple
  injective flows.
\newblock In {\em International Conference on Machine Learning, {ICML} 2022,
  17-23 July 2022, Baltimore, Maryland, {USA}}, volume 162 of {\em Proceedings
  of Machine Learning Research}, pages 17959--17983. {PMLR}, 2022.

\bibitem{RanganSF17}
S.~Rangan, P.~Schniter, and A.~K. Fletcher.
\newblock Vector approximate message passing.
\newblock In {\em 2017 {IEEE} International Symposium on Information Theory,
  {ISIT} 2017, Aachen, Germany, June 25-30, 2017}, pages 1588--1592. {IEEE},
  2017.

\bibitem{RomanoEM17}
Y.~Romano, M.~Elad, and P.~Milanfar.
\newblock The little engine that could: Regularization by denoising {(RED)}.
\newblock {\em {SIAM} J. Imaging Sci.}, 10(4):1804--1844, 2017.

\bibitem{RossC21}
B.~Leigh Ross and J.~C. Cresswell.
\newblock Tractable density estimation on learned manifolds with conformal
  embedding flows.
\newblock In {\em Advances in Neural Information Processing Systems 34: Annual
  Conference on Neural Information Processing Systems 2021, NeurIPS 2021,
  December 6-14, 2021, virtual}, pages 26635--26648, 2021.

\bibitem{Schlafli}
L.~Schlafli.
\newblock {\em Gesammelte Mathematische AbhandLungen I}.
\newblock Basel, Switzerland: Verlag Birkhauser, 1950.

\bibitem{SW08}
R.~Schneider and W.~Weil.
\newblock {\em Stochastic and Integral Geometry}.
\newblock Springer-Verlag Berlin Heidelberg, 2008.

\bibitem{SchniterRF16}
P.~Schniter, S.~Rangan, and A.~K. Fletcher.
\newblock Vector approximate message passing for the generalized linear model.
\newblock In {\em 50th Asilomar Conference on Signals, Systems and Computers,
  {ACSSC} 2016, Pacific Grove, CA, USA, November 6-9, 2016}, pages 1525--1529.
  {IEEE}, 2016.

\bibitem{ShahH18}
V.~Shah and C.~Hegde.
\newblock Solving linear inverse problems using gan priors: An algorithm with
  provable guarantees.
\newblock In {\em 2018 {IEEE} International Conference on Acoustics, Speech and
  Signal Processing, {ICASSP} 2018, Calgary, AB, Canada, April 15-20, 2018},
  pages 4609--4613. {IEEE}, 2018.

\bibitem{SchTir02}
M.~Shcherbina and B.~Tirozzi.
\newblock On the volume of the intrersection of a sphere with random half
  spaces.
\newblock {\em C. R. Acad. Sci. Paris. Ser I}, (334):803--806, 2002.

\bibitem{SchTir03}
M.~Shcherbina and B.~Tirozzi.
\newblock Rigorous solution of the {G}ardner problem.
\newblock {\em Comm. on Math. Physics}, (234):383--422, 2003.

\bibitem{StojnicCSetam09}
M.~Stojnic.
\newblock Various thresholds for $\ell_1$-optimization in compressed sensing.
\newblock available online at \bl{\url{http://arxiv.org/abs/0907.3666}}.

\bibitem{StojnicICASSP10var}
M.~Stojnic.
\newblock $\ell_1$ optimization and its various thresholds in compressed
  sensing.
\newblock {\em ICASSP, IEEE International Conference on Acoustics, Signal and
  Speech Processing}, pages 3910--3913, 14-19 March 2010.
\newblock Dallas, TX.

\bibitem{StojnicGardGen13}
M.~Stojnic.
\newblock Another look at the {G}ardner problem.
\newblock 2013.
\newblock available online at \bl{\url{http://arxiv.org/abs/1306.3979}}.

\bibitem{StojnicDiscPercp13}
M.~Stojnic.
\newblock Discrete perceptrons.
\newblock 2013.
\newblock available online at \bl{\url{http://arxiv.org/abs/1303.4375}}.

\bibitem{StojnicLiftStrSec13}
M.~Stojnic.
\newblock Lifting $\ell_1$-optimization strong and sectional thresholds.
\newblock 2013.
\newblock available online at \bl{\url{http://arxiv.org/abs/1306.3770}}.

\bibitem{StojnicGorEx10}
M.~Stojnic.
\newblock Meshes that trap random subspaces.
\newblock 2013.
\newblock available online at \bl{\url{http://arxiv.org/abs/1304.0003}}.

\bibitem{StojnicGardSphNeg13}
M.~Stojnic.
\newblock Negative spherical perceptron.
\newblock 2013.
\newblock available online at \bl{\url{http://arxiv.org/abs/1306.3980}}.

\bibitem{StojnicRegRndDlt10}
M.~Stojnic.
\newblock Regularly random duality.
\newblock 2013.
\newblock available online at \bl{\url{http://arxiv.org/abs/1303.7295}}.

\bibitem{StojnicGardSphErr13}
M.~Stojnic.
\newblock Spherical perceptron as a storage memory with limited errors.
\newblock 2013.
\newblock available online at \bl{\url{http://arxiv.org/abs/1306.3809}}.

\bibitem{Stojnicsflgscompyx23}
M.~Stojnic.
\newblock Bilinearly indexed random processes -- {\emph{stationarization}} of
  fully lifted interpolation.
\newblock 2023.
\newblock available online at \bl{\url{http://arxiv.org/abs/2311.18097}}.

\bibitem{Stojnicbinperflrdt23}
M.~Stojnic.
\newblock Binary perceptrons capacity via fully lifted random duality theory.
\newblock 2023.
\newblock available online at \bl{\url{http://arxiv.org/abs/2312.00073}}.

\bibitem{Stojnicnegsphflrdt23}
M.~Stojnic.
\newblock {Fl RDT} based ultimate lowering of the negative spherical perceptron
  capacity.
\newblock 2023.
\newblock available online at \bl{\url{http://arxiv.org/abs/2312.16531}}.

\bibitem{Stojnicnflgscompyx23}
M.~Stojnic.
\newblock Fully lifted interpolating comparisons of bilinearly indexed random
  processes.
\newblock 2023.
\newblock available online at \bl{\url{http://arxiv.org/abs/2311.18092}}.

\bibitem{Stojnicflrdt23}
M.~Stojnic.
\newblock Fully lifted random duality theory.
\newblock 2023.
\newblock available online at \bl{\url{http://arxiv.org/abs/2312.00070}}.

\bibitem{Stojnictcmspnncapdinfdiffactrdt23}
M.~Stojnic.
\newblock Exact capacity of the \emph{wide} hidden layer treelike neural
  networks with generic activations.
\newblock 2024.
\newblock available online at \bl{\url{http://arxiv.org/abs/2402.05719}}.

\bibitem{Stojnictcmspnncapdiffactrdt23}
M.~Stojnic.
\newblock Fixed width treelike neural networks capacity analysis -- generic
  activations.
\newblock 2024.
\newblock available online at \bl{\url{http://arxiv.org/abs/2402.05696}}.

\bibitem{Tal99a}
M.~Talagrand.
\newblock Intersecting random half cubes.
\newblock {\em Random {S}tructures {A}lgorithms}, 15(3-4):436--449, 1999.

\bibitem{Tal06}
M.~Talagrand.
\newblock The {P}arisi formula.
\newblock {\em Annals of mathematics}, 163(2):221--263, 2006.

\bibitem{Talbook11b}
M.~Talagrand.
\newblock {\em Mean field models and spin glasse: {V}olume {II}}.
\newblock A series of modern surveys in mathematics 55, Springer-Verlag, Berlin
  Heidelberg, 2011.

\bibitem{Talbook11a}
M.~Talagrand.
\newblock {\em Mean field models and spin glasses: {V}olume {I}}.
\newblock A series of modern surveys in mathematics 54, Springer-Verlag, Berlin
  Heidelberg, 2011.

\bibitem{Ven86}
S.~Venkatesh.
\newblock Epsilon capacity of neural networks.
\newblock {\em Proc. Conf. on Neural Networks for Computing, Snowbird, UT},
  1986.

\bibitem{Wendel}
J.~G. Wendel.
\newblock A problem in geometric probablity.
\newblock {\em Mathematics Scandinavia}, 11:109--111, 1962.

\bibitem{Winder61}
R.~O. Winder.
\newblock Single stage threshold logic.
\newblock {\em Switching circuit theory and logical design}, pages 321--332,
  Sep. 1961.
\newblock AIEE Special publications S-134.

\bibitem{Winder}
R.~O. Winder.
\newblock {\em Threshold logic}.
\newblock Ph. D. dissertation, Princetoin University, 1962.

\bibitem{WuRL19}
Y.~Wu, M.~Rosca, and T.~P. Lillicrap.
\newblock Deep compressed sensing.
\newblock In {\em Proceedings of the 36th International Conference on Machine
  Learning, {ICML} 2019, 9-15 June 2019, Long Beach, California, {USA}},
  volume~97 of {\em Proceedings of Machine Learning Research}, pages
  6850--6860. {PMLR}, 2019.

\bibitem{ZavPeh21}
J.~A. Zavatone-Veth and C.~Pehlevan.
\newblock Activation function dependence of the storage capacity of treelike
  neural networks.
\newblock {\em Phys. Rev. E}, 103:L020301, February 2021.

\end{thebibliography}
\end{singlespace}

\appendix

\section{Derivatives -- Second level of lifting}
\label{sec:appA}

We recall on (\ref{eq:negprac25})
\begin{eqnarray}\label{eq:app1negprac25}
    \bar{\psi}_{rd}^{(2)}
 & = &
  \frac{1}{2}
(1-\p_2\q_2)\c_2     - \gamma_{sq}^{(q)}
-  \Bigg(\Bigg. -\frac{1}{2\c_2} \log \lp \frac{2\gamma_{sq}^{(q)}-\c_2(1-\q_2)}{2\gamma_{sq}^{(q)}} \rp  +  \frac{\q_2}{2(2\gamma_{sq}^{(q)}-\c_2(1-\q_2))}   \Bigg.\Bigg)
\nonumber \\
 & &
 +\gamma_{sq}^{(p)} -\frac{\nu(2-\alpha)}{4\gamma_{sq}^{(p)}}
 - \frac{\alpha}{\c_2}\mE_{{\mathcal U}_3}\log\lp \hat{f}_2^{(2)}
 \rp.
     \end{eqnarray}
After  setting
\begin{eqnarray}\label{eq:app1eq1}
I_R^{(2)} & = &
\gamma_{sq}^{(q)}  + \Bigg(\Bigg. -\frac{1}{2\c_2} \log \lp \frac{2\gamma_{sq}^{(q)}-\c_2(1-\q_2)}{2\gamma_{sq}^{(q)}} \rp  +  \frac{\q_2}{2(2\gamma_{sq}^{(q)}-\c_2(1-\q_2))}   \Bigg.\Bigg)
\nonumber \\
I_L^{(2)} & = &
\frac{\alpha}{\c_2}\mE_{{\mathcal U}_3}\log\lp \hat{f}_2^{(2)}
 \rp,
     \end{eqnarray}
one can trivially rewrite (\ref{eq:app1negprac25})
\begin{eqnarray}\label{eq:app1eq1a0}
    \bar{\psi}_{rd}^{(2)}
 & = &
  \frac{1}{2}
(1-\p_2\q_2)\c_2     - I_R^{(2)}
  +\gamma_{sq}^{(p)} -\frac{\nu(2-\alpha)}{4\gamma_{sq}^{(p)}}
 -  I_L^{(2)}.
     \end{eqnarray}

\subsection{$\c_2$ derivative}

 We first find
\begin{eqnarray}\label{eq:app1eq2}
\frac{dI_R^{(2)} }{d\c_2}& = &
\frac{1}{2\c_2^2} \log \lp \frac{2\gamma_{sq}^{(q)}-\c_2(1-\q_2)}{2\gamma_{sq}^{(q)}} \rp
 + \frac{1-\q_2}{2\c_2(2\gamma_{sq}^{(q)}-\c_2(1-\q_2))}
 + \frac{\q_2(1-\q_2)}{2(2\gamma_{sq}^{(q)}-\c_2(1-\q_2))^2},
     \end{eqnarray}
and recalling on (\ref{eq:negprac24a0a0}) further write
 \begin{eqnarray}\label{eq:app1eq3}
  \frac{d\hat{f}_{21}\lp \bar{a}_0^{(2)}\rp } {d\c_2} &  = & \frac{\nu}{4\gamma_{sq}^{(p)}} e^{\frac{\c_2}{4\gamma_{sq}^{(p)}}\nu}
\lp \frac{1}{2} - \frac{1}{2}\erfc\lp  \frac{ \bar{a}_0^{(2)}}{\sqrt{2}} \rp \rp
\nonumber \\
 \frac{d\hat{f}_{22}\lp \bar{a}_2^{(2)}\rp } {d\c_2} & = &  -\frac{\nu}{4\gamma_{sq}^{(p)}}  e^{-\frac{\c_2}{4\gamma_{sq}^{(p)}}\nu} \frac{1}{2}\erfc\lp  \frac{\bar{a}_2^{(2)}}{\sqrt{2}} \rp
 \nonumber \\
 \frac{dC^{(2)}} {d\c_2} & = &  \frac{1}{4\gamma_{sq}^{(p)}}.
 \end{eqnarray}
We now set
 \begin{eqnarray}\label{eq:app1eq4a0}
\cA = [A,B,C,D,F]
\end{eqnarray}
and
 \begin{eqnarray}\label{eq:app1eq4}
L_4(\cA) & = & 2\sqrt{2 A^2 C + 1}
\nonumber\\
L_1(\cA) & = & e^{-\frac{4B^2C}{L_4^2}}
\nonumber\\
L_5(\cA) & = & \sqrt{2}(2AC (AF + B) + F)
\nonumber\\
L_6(\cA) & = & \sqrt{2}(2 AC(A D + B) + D)
\nonumber\\
L_2(\cA) & = & \erf\lp \frac{L_5}{L_4}\rp
\nonumber\\
L_3(\cA) & = & \erf\lp \frac{L_6}{L_4}\rp
\nonumber \\
 I_1(\cA) & =  & \frac{L_1(\cA) (L_2(\cA) - L_3(\cA))}{L_4(\cA)}.
\end{eqnarray}
Assuming that $\cA$ depends on $\c_2$ only via $C$, we also find
 \begin{eqnarray}\label{eq:app1eq5}
\frac{dL_4(\cA)}{d\c_2} & = & \frac{2A^2}{\sqrt{2 A^2C + 1}}  \frac{dC} {d\c_2}
\nonumber\\
\frac{dL_1(\cA)}{d\c_2} & = & \lp-\frac{4B^2}{L_4(\cA)^2}\frac{dC} {d\c_2} + \frac{8B^2C}{L_4(\cA)^3} \frac{dL_4(\cA)}{d\c_2} \rp L_1(\cA)
\nonumber\\
\frac{dL_5(\cA)}{d\c_2} & = & \sqrt{2}\lp2A\frac{dC} {d\c_2} (AF + B) \rp
\nonumber\\
\frac{dL_6(\cA)}{d\c_2} & = & \sqrt{2}\lp2A\frac{dC} {d\c_2} (AD + B) \rp
\nonumber\\
\frac{dL_2(\cA)}{d\c_2} & = & \frac{2}{\sqrt{\pi}} e^{-\lp \frac{L_5(\cA)}{L_4(\cA)}\rp^2}  \lp\frac{dL_5(\cA)}{d\c_2}\frac{1}{L_4(\cA)}- \frac{L_5(\cA)}{L_4(\cA)^2} \frac{dL_4(\cA)}{d\c_2} \rp
\nonumber\\
\frac{dL_3(\cA)}{d\c_2} & = & \frac{2}{\sqrt{\pi}} e^{-\lp \frac{L_6(\cA)}{L_4(\cA)}\rp^2}  \lp\frac{dL_6(\cA)}{d\c_2}\frac{1}{L_4(\cA)}- \frac{L_6(\cA)}{L_4(\cA)^2} \frac{dL_4(\cA)}{d\c_2} \rp
\nonumber \\
 \frac{dI_1(\cA )}{d\c_2} & = &
\frac{dL_1(\cA)}{d\c_2}\frac{(L_2(\cA)-L_3(\cA))}{L_4(\cA)} + \frac{L_1(\cA)}{L_4(\cA)}\lp\frac{dL_2(\cA)}{d\c_2}-\frac{dL_3(\cA)}{d\c_2}\rp
\nonumber \\
& & - \frac{L_1(\cA)(L_2(\cA)-L_3(\cA))}{L_4(\cA)^2}\frac{dL_4(\cA)}{d\c_2}
\nonumber \\
  \frac{d\hat{f}_{23}\lp I_1(\cA^{(2)})\rp } {d\c_2}
  & = &
\frac{\nu}{4\gamma_{sq}^{(p)}} e^{\frac{\c_2}{4\gamma_{sq}^{(p)}}\nu} I_1(\cA^{(2)}) + e^{\frac{\c_2}{4\gamma_{sq}^{(p)}}\nu}  \frac{dI_1(\cA^{(2)} )}{d\c_2}.
\end{eqnarray}
Taking
 \begin{eqnarray}\label{eq:app1eq4a0a0}
\cA^{(i)} = [A^{(i)},B^{(i)},C^{(i)},D^{(i)},F^{(i)}], i=2,
\end{eqnarray}
and observing that $\cA^{(2)}$ indeed depends on $\c_2$ only via $C^{(2)}$, we can after solving the remaining integrals write
through a combination of (\ref{eq:negprac24a0a0}), (\ref{eq:app1eq3}), and (\ref{eq:app1eq4a0a0})
 \begin{eqnarray}\label{eq:app1eq7}
  \frac{d\hat{f}_2^{(2)}} {d\c_2}  =
  \frac{d\hat{f}_{21}\lp \bar{a}_0^{(2)}\rp } {d\c_2}
  +   \frac{d\hat{f}_{22}\lp \bar{a}_0^{(2)}\rp } {d\c_2}
   +    \frac{d\hat{f}_{23}\lp I_1(\cA^{(2)})\rp } {d\c_2}.
\end{eqnarray}
From (\ref{eq:app1eq1}), one then easily finds
\begin{eqnarray}\label{eq:app1eq8}
 \frac{dI_L^{(2)}}{d\c_2} & = &
-\frac{\alpha}{\c_2^2}\mE_{{\mathcal U}_3} \log\lp  \hat{f}_2^{(2)}
 \rp
 +
\frac{\alpha}{\c_2}\mE_{{\mathcal U}_3} \lp \frac{1}{\hat{f}_2^{(2)}}  \frac{d\hat{f}_2^{(2)}} {d\c_2}
 \rp.
     \end{eqnarray}
A combination of (\ref{eq:app1eq1}), (\ref{eq:app1eq1a0}),,(\ref{eq:app1eq2}), (\ref{eq:app1eq7}), and (\ref{eq:app1eq8})  then gives
\begin{eqnarray}\label{eq:app1eq9}
  \frac{d  \bar{\psi}_{rd}^{(2)}}{d\c_2}
 & = &
  \frac{1}{2}
(1-\p_2\q_2)
\nonumber \\
& &
-
\lp \frac{1}{2\c_2^2} \log \lp \frac{2\gamma_{sq}^{(q)}-\c_2(1-\q_2)}{2\gamma_{sq}^{(q)}} \rp
 + \frac{1-\q_2}{2\c_2(2\gamma_{sq}^{(q)}-\c_2(1-\q_2))}
 + \frac{\q_2(1-\q_2)}{2(2\gamma_{sq}^{(q)}-\c_2(1-\q_2))^2} \rp
\nonumber \\
 & &
  - \lp   -\frac{\alpha}{\c_2^2}\mE_{{\mathcal U}_3} \log\lp  \hat{f}_2^{(2)}
 \rp
 +
\frac{\alpha}{\c_2}\mE_{{\mathcal U}_3} \lp \frac{1}{\hat{f}_2^{(2)}}  \frac{d\hat{f}_2^{(2)}} {d\c_2}
 \rp  \rp.
     \end{eqnarray}

\subsection{$\p_2$ derivative}

We trivially observe
\begin{eqnarray}\label{eq:papp1eq2}
\frac{dI_R^{(2)} }{d\p_2}& = &
 0,
     \end{eqnarray}
and recalling on (\ref{eq:negprac24a0a0})  write
 \begin{eqnarray}\label{eq:papp1eq3}
  \frac{d\bar{a}_{0}^{(2)} } {d\p_2} &  = & -\u_1^{(2,3)}/2/\sqrt{\p_2}/\sqrt{1-\p_2}
  - \u_1^{(2,3)}\sqrt{\p_2}/2/\sqrt{1-\p_2}^3
\nonumber \\
\frac{d\bar{a}_{2}^{(2)} } {d\p_2} &  = & 1/2\sqrt{2\nu}/\sqrt{1-\p_2}^3
-\u_1^{(2,3)}/2/\sqrt{\p_2}/\sqrt{1-\p_2}
  - \u_1^{(2,3)}\sqrt{\p_2}/2/\sqrt{1-\p_2}^3
\nonumber \\
  \frac{d\hat{f}_{21}\lp \bar{a}_0^{(2)}\rp } {d\p_2} &  = &  e^{\frac{\c_2}{4\gamma_{sq}^{(p)}}\nu}
\frac{1}{\sqrt{\pi}} e^{-\lp \frac{ \bar{a}_0^{(2)}}{\sqrt{2}}   \rp^2}  \frac{1}{\sqrt{2}}  \frac{d\bar{a}_0^{(2)} }{d\p_2}
\nonumber \\
 \frac{d\hat{f}_{22}\lp \bar{a}_2^{(2)}\rp } {d\p_2} & = & - e^{-\frac{\c_2}{4\gamma_{sq}^{(p)}}\nu}
\frac{1}{\sqrt{\pi}} e^{-\lp \frac{ \bar{a}_2^{(2)}}{\sqrt{2}}   \rp^2}  \frac{1}{\sqrt{2}}  \frac{d\bar{a}_2^{(2)} }{d\p_2}   \nonumber \\
 \frac{dC^{(2)}} {d\p_2} & = &  0
 \nonumber \\
 \frac{dA^{(2)}} {d\p_2} & = &  -1/2/\sqrt{1-\p_2}
 \nonumber \\
 \frac{dB^{(2)}} {d\p_2} & = &  \u_1^{(2,3)}/2/\sqrt{\p_2}
 \nonumber \\
 \frac{dD^{(2)}} {d\p_2} & = &    \frac{d\bar{a}_{0}^{(2)} } {d\p_2}
 \nonumber \\
 \frac{dF^{(2)}} {d\p_2} & = &    \frac{d\bar{a}_{2}^{(2)} } {d\p_2}.
  \end{eqnarray}
Recalling on (\ref{eq:app1eq4a0}) and (\ref{eq:app1eq4}) and  assuming that $\cA$ depends on $\p_2$ only via $A,B,D$ and $F$, we also find
 \begin{eqnarray}\label{eq:papp1eq5}
\frac{dL_4(\cA)}{d\p_2} & = & \frac{4AC}{\sqrt{2 A^2C + 1}}  \frac{dA} {d\p_2}
\nonumber\\
\frac{dL_1(\cA)}{d\p_2} & = & \lp-\frac{8BC}{L_4(\cA)^2}\frac{dB} {d\p_2} + \frac{8B^2C}{L_4(\cA)^3} \frac{dL_4(\cA)}{d\p_2} \rp L_1(\cA)
\nonumber\\
\frac{dL_5(\cA)}{d\p_2} & = & \sqrt{2}\lp  \lp 2  \frac{dA} {d\p_2} C(AF + B)  + 2AC\lp \frac{dA} {d\p_2} F +A \frac{dF} {d\p_2} +  \frac{dB} {d\p_2}  \rp +  \frac{dF} {d\p_2} \rp  \rp
\nonumber\\
\frac{dL_6(\cA)}{d\p_2} & = & \sqrt{2}\lp  \lp 2  \frac{dA} {d\p_2} C(AD + B)  + 2AC\lp \frac{dA} {d\p_2} D +A \frac{dD} {d\p_2} +  \frac{dB} {d\p_2}  \rp +  \frac{dD} {d\p_2} \rp  \rp
\nonumber\\
\frac{dL_2(\cA)}{d\p_2} & = & \frac{2}{\sqrt{\pi}} e^{-\lp \frac{L_5(\cA)}{L_4(\cA)}\rp^2}  \lp\frac{dL_5(\cA)}{d\p_2}\frac{1}{L_4(\cA)}- \frac{L_5(\cA)}{L_4(\cA)^2} \frac{dL_4(\cA)}{d\p_2} \rp
\nonumber\\
\frac{dL_3(\cA)}{d\p_2} & = & \frac{2}{\sqrt{\pi}} e^{-\lp \frac{L_6(\cA)}{L_4(\cA)}\rp^2}  \lp\frac{dL_6(\cA)}{d\p_2}\frac{1}{L_4(\cA)}- \frac{L_6(\cA)}{L_4(\cA)^2} \frac{dL_4(\cA)}{d\p_2} \rp
\nonumber \\
 \frac{dI_1(\cA )}{d\p_2} & = &
\frac{dL_1(\cA)}{d\p_2}\frac{(L_2(\cA)-L_3(\cA))}{L_4(\cA)} + \frac{L_1(\cA)}{L_4(\cA)}\lp\frac{dL_2(\cA)}{d\p_2}-\frac{dL_3(\cA)}{d\p_2}\rp
\nonumber \\
& & - \frac{L_1(\cA)(L_2(\cA)-L_3(\cA))}{L_4(\cA)^2}\frac{dL_4(\cA)}{d\p_2}
\nonumber \\
  \frac{d\hat{f}_{23}\lp I_1(\cA)\rp } {d\p_2}
 & = &
  e^{\frac{\c_2}{4\gamma_{sq}^{(p)}}\nu}  \frac{dI_1(\cA )}{d\p_2}.
\end{eqnarray}
Solving the remaining integrals and combining (\ref{eq:negprac24a0a0}), (\ref{eq:papp1eq3}), and (\ref{eq:papp1eq5}) then gives
 \begin{eqnarray}\label{eq:papp1eq7}
  \frac{d\hat{f}_2^{(2)}} {d\p_2}  =
  \frac{d\hat{f}_{21}\lp \bar{a}_0^{(2)}\rp } {d\p_2}
  +   \frac{d\hat{f}_{22}\lp \bar{a}_0^{(2)}\rp } {d\p_2}
   +    \frac{d\hat{f}_{23}\lp I_1(\cA^{(2)})\rp } {d\p_2}.
\end{eqnarray}
From (\ref{eq:app1eq1}), we then easily obtain
\begin{eqnarray}\label{eq:papp1eq8}
 \frac{dI_L^{(2)}}{d\p_2} & = &
\frac{\alpha}{\c_2}\mE_{{\mathcal U}_3} \lp \frac{1}{\hat{f}_2^{(2)}}  \frac{d\hat{f}_2^{(2)}} {d\p_2}
 \rp.
     \end{eqnarray}
A combination of (\ref{eq:app1eq1}), (\ref{eq:app1eq1a0}), (\ref{eq:papp1eq2}), (\ref{eq:papp1eq7}), and (\ref{eq:papp1eq8})  then gives
\begin{eqnarray}\label{eq:papp1eq9}
  \frac{d  \bar{\psi}_{rd}^{(2)}}{d\p_2}
 & = &
 - \frac{1}{2}
\c_2\q_2
   -
\frac{\alpha}{\c_2}\mE_{{\mathcal U}_3} \lp \frac{1}{\hat{f}_2^{(2)}}  \frac{d\hat{f}_2^{(2)}} {d\p_2}
  \rp.
     \end{eqnarray}

\subsection{$\nu$ derivative}

We trivially observe
\begin{eqnarray}\label{eq:nuapp1eq2}
\frac{dI_R^{(2)} }{d\nu}& = &
 0,
     \end{eqnarray}
and recalling on (\ref{eq:negprac24a0a0})  write
 \begin{eqnarray}\label{eq:nuapp1eq3}
  \frac{d\bar{a}_{0}^{(2)} } {d\nu} &  = & 0
\nonumber \\
 \frac{d\bar{a}_{2}^{(2)} } {d\nu} &  = & 1/\sqrt{2\nu}/\sqrt{1-\p_2}
 \nonumber \\
  \frac{d\hat{f}_{21}\lp \bar{a}_0^{(2)}\rp } {d\nu} &  = & \frac{\c_2}{4\gamma_{sq}^{(p)}} e^{\frac{\c_2}{4\gamma_{sq}^{(p)}}\nu}
\lp \frac{1}{2} - \frac{1}{2}\erfc\lp  \frac{ \bar{a}_0^{(2)}}{\sqrt{2}} \rp \rp
\nonumber \\
 \frac{d\hat{f}_{22}\lp \bar{a}_2^{(2)}\rp } {d\nu} & = &  -\frac{\c_2}{4\gamma_{sq}^{(p)}}  e^{-\frac{\c_2}{4\gamma_{sq}^{(p)}}\nu} \frac{1}{2}\erfc\lp  \frac{\bar{a}_2^{(2)}}{\sqrt{2}} \rp
 - e^{-\frac{\c_2}{4\gamma_{sq}^{(p)}}\nu}
\frac{1}{\sqrt{\pi}} e^{-\lp \frac{ \bar{a}_2^{(2)}}{\sqrt{2}}   \rp^2}  \frac{1}{\sqrt{2}}  \frac{d\bar{a}_2^{(2)} }{d\nu}
 \nonumber \\
  \frac{dD^{(2)}} {d\nu} & = &    \frac{d\bar{a}_{0}^{(2)} } {d\nu}
   \nonumber \\
  \frac{dF^{(2)}} {d\nu} & = &    \frac{d\bar{a}_{2}^{(2)} } {d\nu}.
\end{eqnarray}
Recalling further on (\ref{eq:app1eq4a0}) and (\ref{eq:app1eq4}) and  assuming that $\cA$ depends on $\nu$ only via $D$ and $F$, we also find
 \begin{eqnarray}\label{eq:nupapp1eq5}
\frac{dL_4(\cA)}{d\nu} & = & 0
\nonumber\\
\frac{dL_1(\cA)}{d\nu} & = & 0
\nonumber\\
\frac{dL_5(\cA)}{d\nu} & = & \sqrt{2}\lp   2AC\lp A \frac{dF} {d\nu}  \rp +  \frac{dF} {d\nu} \rp
\nonumber\\
\frac{dL_6(\cA)}{d\nu} & = & \sqrt{2}\lp   2AC\lp A \frac{dD} {d\nu}  \rp +  \frac{dD} {d\nu} \rp
\nonumber\\
\frac{dL_2(\cA)}{d\nu} & = & \frac{2}{\sqrt{\pi}} e^{-\lp \frac{L_5(\cA)}{L_4(\cA)}\rp^2}  \lp\frac{dL_5(\cA)}{d\nu}\frac{1}{L_4(\cA)}- \frac{L_5(\cA)}{L_4(\cA)^2} \frac{dL_4(\cA)}{d\nu} \rp
\nonumber\\
\frac{dL_3(\cA)}{d\nu} & = & \frac{2}{\sqrt{\pi}} e^{-\lp \frac{L_6(\cA)}{L_4(\cA)}\rp^2}  \lp\frac{dL_6(\cA)}{d\nu }\frac{1}{L_4(\cA)}- \frac{L_6(\cA)}{L_4(\cA)^2} \frac{dL_4(\cA)}{d\nu} \rp
\nonumber \\
 \frac{dI_1(\cA )}{d\nu} & = &
\frac{dL_1(\cA)}{d\nu}\frac{(L_2(\cA)-L_3(\cA))}{L_4(\cA)} + \frac{L_1(\cA)}{L_4(\cA)}\lp\frac{dL_2(\cA)}{d\nu}-\frac{dL_3(\cA)}{d\nu}\rp
\nonumber \\
& & - \frac{L_1(\cA)(L_2(\cA)-L_3(\cA))}{L_4(\cA)^2}\frac{dL_4(\cA)}{d\nu}
\nonumber \\
  \frac{d\hat{f}_{23}\lp I_1(\cA)\rp } {d\nu}
 & = &
  \frac{\c_2}{4\gamma_{sq}^{(p)}}
  e^{\frac{\c_2}{4\gamma_{sq}^{(p)}}\nu}  I_1(\cA)
+  e^{\frac{\c_2}{4\gamma_{sq}^{(p)}}\nu}  \frac{dI_1(\cA )}{d\nu}.
\end{eqnarray}
After solving the remaining integrals, a combination of (\ref{eq:negprac24a0a0}), (\ref{eq:nuapp1eq3}), and (\ref{eq:nupapp1eq5}) then gives
 \begin{eqnarray}\label{eq:nuapp1eq7}
  \frac{d\hat{f}_2^{(2)}} {d\nu}  =
  \frac{d\hat{f}_{21}\lp \bar{a}_0^{(2)}\rp } {d\nu}
  +   \frac{d\hat{f}_{22}\lp \bar{a}_0^{(2)}\rp } {d\nu}
   +    \frac{d\hat{f}_{23}\lp I_1(\cA^{(2)})\rp } {d\nu}.
\end{eqnarray}
One then from (\ref{eq:app1eq1})  easily finds
\begin{eqnarray}\label{eq:nuapp1eq8}
 \frac{dI_L^{(2)}}{d\nu} & = &
\frac{\alpha}{\c_2}\mE_{{\mathcal U}_3} \lp \frac{1}{\hat{f}_2^{(2)}}  \frac{d\hat{f}_2^{(2)}} {d\nu}
 \rp.
     \end{eqnarray}
Combining (\ref{eq:app1eq1}), (\ref{eq:app1eq1a0}), (\ref{eq:nuapp1eq2}), (\ref{eq:nuapp1eq7}), and (\ref{eq:nuapp1eq8}) we finally obtain
\begin{eqnarray}\label{eq:papp1eq9}
  \frac{d  \bar{\psi}_{rd}^{(2)}}{d\nu}
 & = &
    -\frac{2-\alpha}{4\gamma_{sq}^{(p)}}
-
\frac{\alpha}{\c_2}\mE_{{\mathcal U}_3} \lp \frac{1}{\hat{f}_2^{(2)}}  \frac{d\hat{f}_2^{(2)}} {d\nu}
  \rp.
     \end{eqnarray}

\subsection{$\q_2$ and $\gamma_{sq}^{(q)}$ derivative}

Both $\q_2$ and $\gamma_{sq}^{(q)}$ derivatives have been computed in \cite{Stojnicnegsphflrdt23}. For completeness we restate the obtained results
\begin{eqnarray}\label{eq:2levder1}
   \frac{d\bar{\psi}_{rd}^{(2)}  }{d\q_2}
   & = &  \c_2\lp -\frac{1}{2}
\p_2
+\frac{\q_2}{2(2\gamma_{sq}^{(p)}-\c_2(1-\q_2))^2}\rp
\nonumber \\
   \frac{d\bar{\psi}_{rd}^{(2)} }{d\gamma_{sq}^{(q)}}
    & = &    -1-\lp -\frac{1-\q_2}{2\gamma_{sq}^{(q)}(2\gamma_{sq}^{(q)}-\c_2(1-\q_2))}-\frac{\q_2}{(2\gamma_{sq}^{(q)}-\c_2(1-\q_2))^2}\rp.
     \end{eqnarray}

\subsection{$\gamma_{sq}^{(p)}$ derivative}

It is not that difficult to see that
\begin{eqnarray} \label{eq:gamaapp1eq1}
   \frac{d\hat{f}_2^{(2)}}{d\gamma_{sq}^{(p)}}
   =-\frac{\c_2}{\gamma_{sq}^{(p)}}
   \frac{d\hat{f}_2^{(2)}}{d\c_2}.
 \end{eqnarray}
From (\ref{eq:app1eq1}) and (\ref{eq:app1eq1a0}) one then has
\begin{eqnarray} \label{eq:gamaapp1eq2}
  \frac{d  \bar{\psi}_{rd}^{(2)}}{d\gamma_{sq}^{(p)}  }
 & = &
   1 + \frac{\nu(2-\alpha)}{4\lp\gamma_{sq}^{(p)}\rp^2}
-
\frac{\alpha}{\c_2}\mE_{{\mathcal U}_3} \lp \frac{1}{\hat{f}_2^{(2)}}    \frac{d\hat{f}_2^{(2)}}{d\gamma_{sq}^{(p)}}
  \rp.
 \end{eqnarray}

\section{Proof of Corollary \ref{cor:closedformrel1}}
\label{sec:appB}

\begin{proof}[Proof of Corollary \ref{cor:closedformrel1}]
The first two equalities follow from (\ref{eq:2levder1}) (detailed derivations are presented in equations (84)-(90) in \cite{Stojnicnegsphflrdt23}). We now focus on the third equality. To that end, we start by noting that a combination of (\ref{eq:gamaapp1eq1})
and (\ref{eq:gamaapp1eq2}) gives
\begin{eqnarray} \label{eq:aapp2eq1}
  \frac{d  \bar{\psi}_{rd}^{(2)}}{d\gamma_{sq}^{(p)}  }
 & = &
   1 + \frac{\nu(2-\alpha)}{4\lp\gamma_{sq}^{(p)}\rp^2}
+
\frac{\c_2}{\gamma_{sq}^{(p)}}
\frac{\alpha}{\c_2}\mE_{{\mathcal U}_3} \lp \frac{1}{\hat{f}_2^{(2)}}    \frac{d\hat{f}_2^{(2)}}{d\c_2}
  \rp.
 \end{eqnarray}
After equalling the derivative to zero one further finds
\begin{eqnarray} \label{eq:aapp2eq2}
-\frac{\gamma_{sq}^{(p)}} {\c_2}\lp  1 + \frac{\nu(2-\alpha)}{4\lp\gamma_{sq}^{(p)}\rp^2} \rp
 & = &
\frac{\alpha}{\c_2}\mE_{{\mathcal U}_3} \lp \frac{1}{\hat{f}_2^{(2)}}    \frac{d\hat{f}_2^{(2)}}{d\c_2}
  \rp.
 \end{eqnarray}
From (\ref{eq:app1eq1a0}) one (for $\bar{\psi}_{rd}^{(2)}=0$) also has
\begin{eqnarray}\label{eq:aapp2eq3}
 \frac{\alpha}{\c_2}\mE_{{\mathcal U}_3}\log\lp \hat{f}_2^{(2)}
 \rp & = &
  \frac{1}{2}
(1-\p_2\q_2)\c_2     - I_R^{(2)}
  +\gamma_{sq}^{(p)} -\frac{\nu(2-\alpha)}{4\gamma_{sq}^{(p)}}.
     \end{eqnarray}
Utilizing (\ref{eq:app1eq9}), we further write
\begin{eqnarray}\label{eq:app2eq4}
  \frac{d  \bar{\psi}_{rd}^{(2)}}{d\c_2}
 & = &
  \frac{1}{2}
(1-\p_2\q_2)
\nonumber \\
& &
-
\lp \frac{1}{2\c_2^2} \log \lp \frac{2\gamma_{sq}^{(q)}-\c_2(1-\q_2)}{2\gamma_{sq}^{(q)}} \rp
 + \frac{1-\q_2}{2\c_2(2\gamma_{sq}^{(q)}-\c_2(1-\q_2))}
 + \frac{\q_2(1-\q_2)}{2(2\gamma_{sq}^{(q)}-\c_2(1-\q_2))^2} \rp
\nonumber \\
 & &
  - \lp   -\frac{\alpha}{\c_2^2}\mE_{{\mathcal U}_3} \log\lp  \hat{f}_2^{(2)}
 \rp
 +
\frac{\alpha}{\c_2}\mE_{{\mathcal U}_3} \lp \frac{1}{\hat{f}_2^{(2)}}  \frac{d\hat{f}_2^{(2)}} {d\c_2}
 \rp  \rp
 \nonumber \\
 & = &
  \frac{1}{2}
(1-\p_2\q_2)
\nonumber \\
& &
-
\lp \frac{1}{2\c_2^2} \log \lp \frac{2\gamma_{sq}^{(q)}-\c_2(1-\q_2)}{2\gamma_{sq}^{(q)}} \rp
 + \frac{1-\q_2}{2\c_2(2\gamma_{sq}^{(q)}-\c_2(1-\q_2))}
 + \frac{\q_2(1-\q_2)}{2(2\gamma_{sq}^{(q)}-\c_2(1-\q_2))^2} \rp
\nonumber \\
 & &
  - \lp   -\frac{1}{\c_2}
  \lp        \frac{1}{2}
(1-\p_2\q_2)\c_2     - I_R^{(2)}
  +\gamma_{sq}^{(p)} -\frac{\nu(2-\alpha)}{4\gamma_{sq}^{(p)}}        \rp
 -\frac{\gamma_{sq}^{(p)}} {\c_2}\lp  1 + \frac{\nu(2-\alpha)}{4\lp\gamma_{sq}^{(p)}\rp^2} \rp
 \rp
 \nonumber \\
 & = &
 (1-\p_2\q_2)
\nonumber \\
& &
-
\lp \frac{1}{2\c_2^2} \log \lp \frac{2\gamma_{sq}^{(q)}-\c_2(1-\q_2)}{2\gamma_{sq}^{(q)}} \rp
 + \frac{1-\q_2}{2\c_2(2\gamma_{sq}^{(q)}-\c_2(1-\q_2))}
 + \frac{\q_2(1-\q_2)}{2(2\gamma_{sq}^{(q)}-\c_2(1-\q_2))^2} \rp
\nonumber \\
 & &
  - \lp   -\frac{1}{\c_2}
  \lp       - I_R^{(2)}
  + 2\gamma_{sq}^{(p)}        \rp
 \rp
 \nonumber \\
 & = &
 (1-\p_2\q_2)
\nonumber \\
& &
-
\lp \frac{1}{2\c_2^2} \log \lp \frac{2\gamma_{sq}^{(q)}-\c_2(1-\q_2)}{2\gamma_{sq}^{(q)}} \rp
 + \frac{1-\q_2}{2\c_2(2\gamma_{sq}^{(q)}-\c_2(1-\q_2))}
 + \frac{\q_2(1-\q_2)}{2(2\gamma_{sq}^{(q)}-\c_2(1-\q_2))^2} \rp
\nonumber \\
 & &
  - \lp   -\frac{1}{\c_2}
  \lp       -
  \lp
  \gamma_{sq}^{(q)}  + \Bigg(\Bigg. -\frac{1}{2\c_2} \log \lp \frac{2\gamma_{sq}^{(q)}-\c_2(1-\q_2)}{2\gamma_{sq}^{(q)}} \rp  +  \frac{\q_2}{2(2\gamma_{sq}^{(q)}-\c_2(1-\q_2))}   \Bigg.\Bigg)
  \rp
   + 2\gamma_{sq}^{(p)}        \rp
 \rp
 \nonumber \\
 & = &
 (1-\p_2\q_2)
\nonumber \\
& &
-
\lp
   \frac{1}{2\c_2(2\gamma_{sq}^{(q)}-\c_2(1-\q_2))}
 + \frac{\q_2(1-\q_2)}{2(2\gamma_{sq}^{(q)}-\c_2(1-\q_2))^2} \rp
  - \lp   -\frac{1}{\c_2}
  \lp       -
  \lp
  \gamma_{sq}^{(q)}
  \rp
   + 2\gamma_{sq}^{(p)}        \rp
 \rp.
     \end{eqnarray}
Setting to zero the derivatives in (\ref{eq:2levder1}) gives
\begin{eqnarray}\label{eq:app2eq5}
  \frac{\p_2}{\q_2}  & = &
\frac{1}{(2\gamma_{sq}^{(p)}-\c_2(1-\q_2))^2}
\nonumber \\
\frac{1}{2 (2\gamma_{sq}^{(q)}-\c_2(1-\q_2)  }
& = & \gamma_{sq}^{(q)}
\frac{1-\p_2}{1-\q_2}.
     \end{eqnarray}
One then easily finds
\begin{eqnarray}\label{eq:app2eq6}
\c_2
& = & 2 \gamma_{sq}^{(q)}\frac{1}{1-\q_2}  - \frac{1}{2\gamma_{sq}^{(q)}}
\frac{1}{1-\p_2}.
     \end{eqnarray}
Setting to zero the derivative in (\ref{eq:app2eq4}) gives
\begin{eqnarray}\label{eq:app2eq7}
 - 2\gamma_{sq}^{(p)} & = &
\c_2 (1-\p_2\q_2)
 -
\lp
   \frac{1}{2(2\gamma_{sq}^{(q)}-\c_2(1-\q_2))}
 + \frac{\c_2\q_2(1-\q_2)}{2(2\gamma_{sq}^{(q)}-\c_2(1-\q_2))^2} \rp
  -    \gamma_{sq}^{(q)}.
     \end{eqnarray}
Combining (\ref{eq:app2eq5}) and (\ref{eq:app2eq7}) we obtain
\begin{eqnarray}\label{eq:app2eq8}
 - 2\gamma_{sq}^{(p)} & = &
\c_2 (1-\p_2\q_2)
 -
\lp
\gamma_{sq}^{(q)}
\frac{1-\p_2}{1-\q_2}
 + \frac{\c_2\p_2(1-\q_2)}{2 } \rp
  -    \gamma_{sq}^{(q)}.
     \end{eqnarray}
Plugging $\c_2$ from (\ref{eq:app2eq6}) into (\ref{eq:app2eq8}) gives
\begin{eqnarray}\label{eq:app2eq9}
 - 2\gamma_{sq}^{(p)} & = &
\lp   2 \gamma_{sq}^{(q)}\frac{1}{1-\q_2}  - \frac{1}{2\gamma_{sq}^{(q)}}
\frac{1}{1-\p_2}  \rp (1-\p_2\q_2)
\nonumber \\
& &
 -
\lp
\gamma_{sq}^{(q)}
\frac{1-\p_2}{1-\q_2}
 + \lp 2 \gamma_{sq}^{(q)}\frac{1}{1-\q_2}  - \frac{1}{2\gamma_{sq}^{(q)}}
\frac{1}{1-\p_2}  \rp\frac{\p_2(1-\q_2)}{2 } \rp
  -    \gamma_{sq}^{(q)}
  \nonumber  \\
  & = &
   \gamma_{sq}^{(q)}\frac{2(1-\p_2\q_2)  -(1-\p_2) -\p_2(1-\q_2 ) - (1-\q_2))  }{1-\q_2}
-\frac{1}{4\gamma_{sq}^{(q)}} \frac{2(1-\p_2\q_2)  -\p_2(1-\q_2) }{1-\p_2}
  \nonumber  \\
  & = &
   \gamma_{sq}^{(q)}\frac{(\q_2-\p_2\q_2 )  }{1-\q_2}
-\frac{1}{4\gamma_{sq}^{(q)}} \frac{2-\p_2\q_2  -\p_2 }{1-\p_2}
  \nonumber  \\
  & = &
   \frac{1}{2}\sqrt{\frac{\p_2}{\q_2}}\q_2
-\frac{1}{4\gamma_{sq}^{(q)}} \frac{\p_2-\p_2\q_2  }{1-\p_2}
-\frac{1}{2\gamma_{sq}^{(q)}}
\nonumber \\
  & = &
   \frac{1}{2}\sqrt{\frac{\p_2}{\q_2}}\q_2
-   \frac{1}{2}\sqrt{\frac{\q_2}{\p_2}}\p_2
-\frac{1}{2\gamma_{sq}^{(q)}}
 \nonumber \\
  & = &
 -\frac{1}{2\gamma_{sq}^{(q)}}.
     \end{eqnarray}
 Connecting beginning and end in the last sequence of equalities and keeping in mind that $\hat{\gamma}_{sq}^{(q)}$ and $\hat{\gamma}_{sq}^{(p)}$ are precisely the solutions of zero-derivatives equations completes the proof.
\end{proof}

\section{Derivatives -- Third level of lifting}
\label{sec:appC}

We recall on (\ref{eq:3negprac25})
\begin{eqnarray}\label{eq:Capp1negprac25}
    \bar{\psi}_{rd}^{(3)}
 & = &   \frac{1}{2}
(1-\p_2\q_2)\c_2+ \frac{1}{2}
(\p_2\q_2-\p_3\q_3)\c_3
 -\gamma_{sq}^{(q)}  - \Bigg(\Bigg. -\frac{1}{2\c_2} \log \lp \frac{2\gamma_{sq}^{(q)}-\c_2 (1-\q_2)}{2\gamma_{sq}^{(q)}} \rp
   \nonumber \\
& &
  -\frac{1}{2\c_3} \log \lp \frac{2\gamma_{sq}^{(q)}-\c_2 (1-\q_2)-\c_3 (\q_2-\q_3)}{2\gamma_{sq}^{(q)}-\q_2 (1-\q_2)} \rp  +  \frac{\q_3}{2(2\gamma_{sq}^{(q)}-\c_2 (1-\q_2)-\c_3(\q_2-\q_3))}   \Bigg.\Bigg)
     \nonumber \\
& & + \gamma_{sq}^{(p)} - \frac{\nu(2-\alpha)}{\gamma_{sq}^{(p)}} - \frac{\alpha}{\c_3}\mE_{{\mathcal U}_4}\log  \lp \mE_{{\mathcal U}_3} \lp \hat{f}_2^{(3)}\rp^{\frac{\c_3}{\c_2}}  \rp,
     \end{eqnarray}
After  setting
\begin{eqnarray}\label{eq:Capp1eq1}
I_R^{(2)} & = &
\gamma_{sq}^{(q)}  + \Bigg(\Bigg. -\frac{1}{2\c_2} \log \lp \frac{2\gamma_{sq}^{(q)}-\c_2 (1-\q_2)}{2\gamma_{sq}^{(q)}} \rp
   \nonumber \\
& &
  -\frac{1}{2\c_3} \log \lp \frac{2\gamma_{sq}^{(q)}-\c_2 (1-\q_2)-\c_3 (\q_2-\q_3)}{2\gamma_{sq}^{(q)}-\q_2 (1-\q_2)} \rp  +  \frac{\q_3}{2(2\gamma_{sq}^{(q)}-\c_2 (1-\q_2)-\c_3(\q_2-\q_3))}   \Bigg.\Bigg)
\nonumber \\
I_L^{(2)} & = &
\frac{\alpha}{\c_3}\mE_{{\mathcal U}_4}\log  \lp \mE_{{\mathcal U}_3} \lp \hat{f}_2^{(3)}\rp^{\frac{\c_3}{\c_2}}  \rp,
     \end{eqnarray}
one can trivially rewrite (\ref{eq:Capp1negprac25})
\begin{eqnarray}\label{eq:Capp1eq1a0}
    \bar{\psi}_{rd}^{(3)}
 & = &
\frac{1}{2}
(1-\p_2\q_2)\c_2+ \frac{1}{2}
(\p_2\q_2-\p_3\q_3)\c_3  - I_R^{(3)}
  +\gamma_{sq}^{(p)} -\frac{\nu(2-\alpha)}{4\gamma_{sq}^{(p)}}
 -  I_L^{(3)}.
     \end{eqnarray}

\subsection{$\c$ derivatives}

We separately compute $\c_2$ and $\c_3$ derivatives.

\vspace{.1in}
\noindent \underline{\textbf{\emph{\red{ 1) $\c_2$ derivative:}}}}  We first find
\begin{eqnarray}\label{eq:Capp1eq2}
\frac{dI_R^{(3)} }{d\c_2}& = &
\frac{1}{2\c_2^2}\log\lp \frac{ 2\gamma_{sq}^{(q)} -\c_2(1-\q_2)}   {2\gamma_{sq}^{(q)} } \rp
+ \frac{1-\q_2}{2\c_2(2\gamma_{sq}^{(q)}-\c_2(1-\q_2))} \nonumber \\
& &
+ \frac{1-\q_2}{2\c_3(2\gamma_{sq}^{(q)}-\c_2(1-\q_2) -\c_3(\q_2-\q_3))}
- \frac{1-\q_2}{2\c_3(2\gamma_{sq}^{(q)}-\c_2(1-\q_2))}
\nonumber \\
& &
+
\frac{(1-\q_2)\q_3}{2(2\gamma_{sq}^{(q)}-\c_2(1-\q_2)-\c_3(\q_2-\q_3))^2},
     \end{eqnarray}
and recalling on (\ref{eq:3negprac24a0a0}) further write
 \begin{eqnarray}\label{eq:Capp1eq3}
  \frac{d\hat{f}_{21}\lp \bar{a}_0^{(3)}\rp } {d\c_2} &  = & \frac{\nu}{4\gamma_{sq}^{(p)}} e^{\frac{\c_2}{4\gamma_{sq}^{(p)}}\nu}
\lp \frac{1}{2} - \frac{1}{2}\erfc\lp  \frac{ \bar{a}_0^{(3)}}{\sqrt{2}} \rp \rp
\nonumber \\
 \frac{d\hat{f}_{22}\lp \bar{a}_2^{(3)}\rp } {d\c_2} & = &  -\frac{\nu}{4\gamma_{sq}^{(p)}}  e^{-\frac{\c_2}{4\gamma_{sq}^{(p)}}\nu} \frac{1}{2}\erfc\lp  \frac{\bar{a}_2^{(3)}}{\sqrt{2}} \rp
 \nonumber \\
 \frac{dC^{(3)}} {d\c_2} & = &  \frac{1}{4\gamma_{sq}^{(p)}}.
 \end{eqnarray}
 Taking
 \begin{eqnarray}\label{eq:Capp1eq4a0a0}
\cA^{(i)} = [A^{(i)},B^{(i)},C^{(i)},D^{(i)},F^{(i)}], i=3,
\end{eqnarray}
and solving the remaining integrals, we obtain the following third level analogue to second level (\ref{eq:app1eq7})
 \begin{eqnarray}\label{eq:Capp1eq7}
  \frac{d\hat{f}_2^{(3)}} {d\c_2}  =
  \frac{d\hat{f}_{21}\lp \bar{a}_0^{(3)}\rp } {d\c_2}
  +   \frac{d\hat{f}_{22}\lp \bar{a}_0^{(3)}\rp } {d\c_2}
   +    \frac{d\hat{f}_{23}\lp I_1(\cA^{(3)})\rp } {d\c_2}.
\end{eqnarray}
From (\ref{eq:Capp1eq1}), one then finds
\begin{eqnarray}\label{eq:Capp1eq8}
 \frac{dI_L^{(3)}}{d\c_2} & = &
 \frac{\alpha}{\c_3}\mE_{{\mathcal U}_4} \lp \frac{1}{\mE_{{\mathcal U}_3} \lp \hat{f}_3^{(2)} \rp^{\frac{\c_3}{\c_2}}  } \mE_{{\mathcal U}_3}
 \lp  \lp -\frac{\c_3}{\c_2^2}\log(\hat{f}_2^{(3)}) + \frac{\c_3}{\c_2}\frac{1}{\hat{f}_2^{(3)}}  \frac{d\hat{f}_2^{(3)}} {d\c_2}  \rp
 \lp \hat{f}_2^{(3)}\rp^{\c_3/\c_2}  \rp
 \rp.
     \end{eqnarray}
A combination of (\ref{eq:Capp1eq1}), (\ref{eq:Capp1eq1a0}),,(\ref{eq:Capp1eq2}), (\ref{eq:Capp1eq7}), and (\ref{eq:Capp1eq8})  then gives
\begin{eqnarray}\label{eq:Capp1eq9}
  \frac{d  \bar{\psi}_{rd}^{(3)}}{d\c_2}
 & = &
  \frac{1}{2}
(1-\p_2\q_2)
\nonumber \\
& &
-
\Bigg (\Bigg.
\frac{1}{2\c_2^2}\log\lp \frac{ 2\gamma_{sq}^{(q)} -\c_2(1-\q_2)}   {2\gamma_{sq}^{(q)} } \rp
+ \frac{1-\q_2}{2\c_2(2\gamma_{sq}^{(q)}-\c_2(1-\q_2))} \nonumber \\
& &
+ \frac{1-\q_2}{2\c_3(2\gamma_{sq}^{(q)}-\c_2(1-\q_2) -\c_3(\q_2-\q_3))}
- \frac{1-\q_2}{2\c_3(2\gamma_{sq}^{(q)}-\c_2(1-\q_2))}
\nonumber \\
& &
+
\frac{(1-\q_2)\q_3}{2(2\gamma_{sq}^{(q)}-\c_2(1-\q_2)-\c_3(\q_2-\q_3))^2}
 \Bigg. \Bigg )
\nonumber \\
 & &
  - \lp   \frac{\alpha}{\c_3}\mE_{{\mathcal U}_4} \lp \frac{1}{\mE_{{\mathcal U}_3} \lp \hat{f}_3^{(2)} \rp^{\frac{\c_3}{\c_2}}  } \mE_{{\mathcal U}_3}
 \lp  \lp -\frac{\c_3}{\c_2^2}\log(\hat{f}_2^{(3)}) + \frac{\c_3}{\c_2}\frac{1}{\hat{f}_2^{(3)}}  \frac{d\hat{f}_2^{(3)}} {d\c_2}  \rp
 \lp \hat{f}_2^{(3)}\rp^{\c_3/\c_2}  \rp
 \rp
  \rp.
     \end{eqnarray}

\vspace{.1in}
\noindent \underline{\textbf{\emph{\red{ 2) $\c_3$ derivative:}}}}  We first find
\begin{eqnarray}\label{eq:CXapp1eq2}
\frac{dI_R^{(3)} }{d\c_3}& = &
1/2/\c_3^2\log \lp  \frac{(2\gamma_{sq}^{(q)}-\c_2(1-\q_2) -\c_3(\q_2-\q_3))}{(2\gamma_{sq}^{(q)}-\c_2(1-\q_2))}  \rp
 + \frac{(\q_2-\q_3)}{2\c_3(2\gamma_{sq}^{(q)}-\c_2(1-\q_2) -\c_3(\q_2-\q_3))}
\nonumber \\
& &
+\frac{(\q_2-\q_3)\q_3}{2(2\gamma_{sq}^{(q)}-\c_2(1-\q_2)-\c_3(\q_2-\q_3))^2},
     \end{eqnarray}
and after noting that
 \begin{eqnarray}\label{eq:CXapp1eq7}
  \frac{d\hat{f}_2^{(3)}} {d\c_3}  = 0,
  \end{eqnarray}
obtain from (\ref{eq:Capp1eq1}) the following $\c_3$ analogue to (\ref{eq:Capp1eq8})
\begin{eqnarray}\label{eq:CXapp1eq8}
 \frac{dI_L^{(3)}}{d\c_3} & = &
 -\frac{\alpha}{\c_3^2}\mE_{{\mathcal U}_4}\log  \lp \mE_{{\mathcal U}_3} \lp \hat{f}_2^{(3)}\rp^{\frac{\c_3}{\c_2}}  \rp
 +
 \frac{\alpha}{\c_3}\mE_{{\mathcal U}_4} \lp \frac{1}{\mE_{{\mathcal U}_3} \lp \hat{f}_3^{(2)} \rp^{\frac{\c_3}{\c_2}}  } \mE_{{\mathcal U}_3}
 \lp  \lp \frac{1}{\c_2}\log(\hat{f}_2^{(3)})  \rp
 \lp \hat{f}_2^{(3)}\rp^{\c_3/\c_2}  \rp
 \rp.\nonumber \\
     \end{eqnarray}
A combination of (\ref{eq:Capp1eq1}), (\ref{eq:Capp1eq1a0}),,(\ref{eq:CXapp1eq2}), (\ref{eq:CXapp1eq7}), and (\ref{eq:CXapp1eq8})  then gives
\begin{align}\label{eq:CXapp1eq9}
  \frac{d  \bar{\psi}_{rd}^{(3)}}{d\c_3}
 & =
  \frac{1}{2}
(\p_2\q_2-\p_3\q_3)
\nonumber \\
& \quad
-
\Bigg (\Bigg.
\frac{1}{2\c_3^2}\log \lp  \frac{(2\gamma_{sq}^{(q)}-\c_2(1-\q_2) -\c_3(\q_2-\q_3))}{(2\gamma_{sq}^{(q)}-\c_2(1-\q_2))}  \rp
 + \frac{(\q_2-\q_3)}{2\c_3(2\gamma_{sq}^{(q)}-\c_2(1-\q_2) -\c_3(\q_2-\q_3))}
\nonumber \\
& \quad
+\frac{(\q_2-\q_3)\q_3}{2(2\gamma_{sq}^{(q)}-\c_2(1-\q_2)-\c_3(\q_2-\q_3))^2}
 \Bigg. \Bigg )
\nonumber \\
 & \quad
  - \lp
  -\frac{\alpha}{\c_3^2}\mE_{{\mathcal U}_4}\log  \lp \mE_{{\mathcal U}_3} \lp \hat{f}_2^{(3)}\rp^{\frac{\c_3}{\c_2}}  \rp
 +
 \frac{\alpha}{\c_3}\mE_{{\mathcal U}_4} \lp \frac{1}{\mE_{{\mathcal U}_3} \lp \hat{f}_2^{(3)} \rp^{\frac{\c_3}{\c_2}}  } \mE_{{\mathcal U}_3}
 \lp  \lp \frac{1}{\c_2}\log(\hat{f}_2^{(3)})  \rp
 \lp \hat{f}_2^{(3)}\rp^{\c_3/\c_2}  \rp
 \rp
   \rp.
     \end{align}

\subsection{$\p$ derivatives}

Following the above trend, we below separately compute both $\p_2$ and $\p_3$ derivatives.

\vspace{.1in}
\noindent \underline{\textbf{\emph{\red{ 1) $\p_2$ derivative:}}}} We trivially observe
\begin{eqnarray}\label{eq:Cpapp1eq2}
\frac{dI_R^{(3)} }{d\p_2}& = &
 0,
     \end{eqnarray}
and recalling on (\ref{eq:negprac24a0a0})  write
 \begin{eqnarray}\label{eq:Cpapp1eq3}
  \frac{d\bar{a}_{0}^{(3)} } {d\p_2} &  = &
  -(\u_1^{(2,3)}/2/\sqrt{\p_2-\p_3} )/\sqrt{1-\p_2}  -1/2(\u_1^{(2,3)}\sqrt{\p_2-\p_3} + \u_1^{(2,4)}\sqrt{\p_3})/\sqrt{1-\p_2}^3
  \nonumber \\
\frac{d\bar{a}_{2}^{(3)} } {d\p_2} &  = &
-(\u_1^{(2,3)}/2/\sqrt{\p_2-\p_3} )/\sqrt{1-\p_2}  +1/2(\sqrt{2\nu} -(\u_1^{(2,3)}\sqrt{\p_2-\p_3} + \u_1^{(2,4)}\sqrt{\p_3}))/\sqrt{1-\p_2}^3
\nonumber \\
  \frac{d\hat{f}_{21}\lp \bar{a}_0^{(3)}\rp } {d\p_2} &  = &  e^{\frac{\c_2}{4\gamma_{sq}^{(p)}}\nu}
\frac{1}{\sqrt{\pi}} e^{-\lp \frac{ \bar{a}_0^{(2)}}{\sqrt{2}}   \rp^2}  \frac{1}{\sqrt{2}}  \frac{d\bar{a}_0^{(3)} }{d\p_2}
\nonumber \\
 \frac{d\hat{f}_{22}\lp \bar{a}_2^{(3)}\rp } {d\p_2} & = & - e^{-\frac{\c_2}{4\gamma_{sq}^{(p)}}\nu}
\frac{1}{\sqrt{\pi}} e^{-\lp \frac{ \bar{a}_2^{(2)}}{\sqrt{2}}   \rp^2}  \frac{1}{\sqrt{2}}  \frac{d\bar{a}_2^{(3)} }{d\p_2}   \nonumber \\
 \frac{dC^{(3)}} {d\p_2} & = &  0
 \nonumber \\
 \frac{dA^{(3)}} {d\p_2} & = &  -1/2/\sqrt{1-\p_2}
 \nonumber \\
 \frac{dB^{(3)}} {d\p_2} & = &  \u_1^{(2,3)}/2/\sqrt{\p_2-\p_3}
 \nonumber \\
 \frac{dD^{(3)}} {d\p_2} & = &    \frac{d\bar{a}_{0}^{(3)} } {d\p_2}
 \nonumber \\
 \frac{dF^{(3)}} {d\p_2} & = &    \frac{d\bar{a}_{2}^{(3)} } {d\p_2}.
  \end{eqnarray}
 Solving the remaining integrals and combining (\ref{eq:3negprac24a0a0}) and (\ref{eq:Cpapp1eq3}), we have the following third level analogue to (\ref{eq:papp1eq7})
 \begin{eqnarray}\label{eq:Cpapp1eq7}
  \frac{d\hat{f}_2^{(3)}} {d\p_2}  =
  \frac{d\hat{f}_{21}\lp \bar{a}_0^{(3)}\rp } {d\p_2}
  +   \frac{d\hat{f}_{22}\lp \bar{a}_0^{(3)}\rp } {d\p_2}
   +    \frac{d\hat{f}_{23}\lp I_1(\cA^{(3)})\rp } {d\p_2}.
\end{eqnarray}
From (\ref{eq:Capp1eq1}), we then easily obtain
\begin{eqnarray}\label{eq:Cpapp1eq8}
 \frac{dI_L^{(3)}}{d\p_2} & = &
 \frac{\alpha}{\c_3}\mE_{{\mathcal U}_4} \lp \frac{1}{\mE_{{\mathcal U}_3} \lp \hat{f}_3^{(2)} \rp^{\frac{\c_3}{\c_2}}  } \mE_{{\mathcal U}_3}
 \lp  \lp   \frac{\c_3}{\c_2}\frac{1}{\hat{f}_2^{(3)}}  \frac{d\hat{f}_2^{(3)}} {d\p_2}  \rp
 \lp \hat{f}_2^{(3)}\rp^{\c_3/\c_2}  \rp
 \rp.
     \end{eqnarray}
A combination of (\ref{eq:Capp1eq1}), (\ref{eq:Capp1eq1a0}), (\ref{eq:Cpapp1eq2}), (\ref{eq:Cpapp1eq7}), and (\ref{eq:Cpapp1eq8})  then gives
\begin{eqnarray}\label{eq:Cpapp1eq9}
  \frac{d  \bar{\psi}_{rd}^{(3)}}{d\p_2}
 & = &
 - \frac{1}{2}
\c_2\q_2
 + \frac{1}{2}
\c_3\q_3
   -
 \frac{\alpha}{\c_3}\mE_{{\mathcal U}_4} \lp \frac{1}{\mE_{{\mathcal U}_3} \lp \hat{f}_3^{(2)} \rp^{\frac{\c_3}{\c_2}}  } \mE_{{\mathcal U}_3}
 \lp  \lp   \frac{\c_3}{\c_2}\frac{1}{\hat{f}_2^{(3)}}  \frac{d\hat{f}_2^{(3)}} {d\p_2}  \rp
 \lp \hat{f}_2^{(3)}\rp^{\c_3/\c_2}  \rp
 \rp.
     \end{eqnarray}

\vspace{.1in}
\noindent \underline{\textbf{\emph{\red{ 2) $\p_3$ derivative:}}}} We again trivially observe
\begin{eqnarray}\label{eq:CXpapp1eq2}
\frac{dI_R^{(3)} }{d\p_3}& = &
 0,
     \end{eqnarray}
and recalling on (\ref{eq:negprac24a0a0})  write
 \begin{eqnarray}\label{eq:CXpapp1eq3}
  \frac{d\bar{a}_{0}^{(3)} } {d\p_3} &  = &
  -(-\u_1^{(2,3)}/2/\sqrt{\p_2-\p_3} + \u_1^{(2,4)}/2/\sqrt{\p_3})/\sqrt{1-\p_2}
    \nonumber \\
\frac{d\bar{a}_{2}^{(3)} } {d\p_3} &  = &
  -(-\u_1^{(2,3)}/2/\sqrt{\p_2-\p_3} + \u_1^{(2,4)}/2/\sqrt{\p_3})/\sqrt{1-\p_2}
\nonumber \\
  \frac{d\hat{f}_{21}\lp \bar{a}_0^{(3)}\rp } {d\p_3} &  = &  e^{\frac{\c_2}{4\gamma_{sq}^{(p)}}\nu}
\frac{1}{\sqrt{\pi}} e^{-\lp \frac{ \bar{a}_0^{(2)}}{\sqrt{2}}   \rp^2}  \frac{1}{\sqrt{2}}  \frac{d\bar{a}_0^{(3)} }{d\p_3}
\nonumber \\
 \frac{d\hat{f}_{22}\lp \bar{a}_2^{(3)}\rp } {d\p_3} & = & - e^{-\frac{\c_2}{4\gamma_{sq}^{(p)}}\nu}
\frac{1}{\sqrt{\pi}} e^{-\lp \frac{ \bar{a}_2^{(2)}}{\sqrt{2}}   \rp^2}  \frac{1}{\sqrt{2}}  \frac{d\bar{a}_2^{(3)} }{d\p_3}   \nonumber \\
 \frac{dC^{(3)}} {d\p_3} & = &  0
 \nonumber \\
 \frac{dA^{(3)}} {d\p_3} & = &  0
 \nonumber \\
 \frac{dB^{(3)}} {d\p_3} & = &  (-\u_1^{(2,3)}/2/\sqrt{\p_2-\p_3} + \u_1^{(2,4)}/2/\sqrt{\p_3})
 \nonumber \\
 \frac{dD^{(3)}} {d\p_3} & = &    \frac{d\bar{a}_{0}^{(3)} } {d\p_3}
 \nonumber \\
 \frac{dF^{(3)}} {d\p_3} & = &    \frac{d\bar{a}_{2}^{(3)} } {d\p_3}.
  \end{eqnarray}
After solving the remaining integrals and combining (\ref{eq:3negprac24a0a0}) and (\ref{eq:CXpapp1eq3}), we find the following $\p_3$ analogue to (\ref{eq:Cpapp1eq7})
 \begin{eqnarray}\label{eq:CXpapp1eq7}
  \frac{d\hat{f}_2^{(3)}} {d\p_3}  =
  \frac{d\hat{f}_{21}\lp \bar{a}_0^{(3)}\rp } {d\p_3}
  +   \frac{d\hat{f}_{22}\lp \bar{a}_0^{(3)}\rp } {d\p_3}
   +    \frac{d\hat{f}_{23}\lp I_1(\cA^{(3)})\rp } {d\p_3}.
\end{eqnarray}
From (\ref{eq:Capp1eq1}), we then easily obtain
\begin{eqnarray}\label{eq:CXpapp1eq8}
 \frac{dI_L^{(3)}}{d\p_2} & = &
 \frac{\alpha}{\c_3}\mE_{{\mathcal U}_4} \lp \frac{1}{\mE_{{\mathcal U}_3} \lp \hat{f}_3^{(2)} \rp^{\frac{\c_3}{\c_2}}  } \mE_{{\mathcal U}_3}
 \lp  \lp   \frac{\c_3}{\c_2}\frac{1}{\hat{f}_2^{(3)}}  \frac{d\hat{f}_2^{(3)}} {d\p_3}  \rp
 \lp \hat{f}_2^{(3)}\rp^{\c_3/\c_2}  \rp
 \rp.
     \end{eqnarray}
Combining (\ref{eq:Capp1eq1}), (\ref{eq:Capp1eq1a0}), (\ref{eq:CXpapp1eq2}), (\ref{eq:CXpapp1eq7}), and (\ref{eq:CXpapp1eq8})  we arrive at
\begin{eqnarray}\label{eq:CXpapp1eq9}
  \frac{d  \bar{\psi}_{rd}^{(3)}}{d\p_3}
 & = &
  + \frac{1}{2}
\c_3\q_3
   -
 \frac{\alpha}{\c_3}\mE_{{\mathcal U}_4} \lp \frac{1}{\mE_{{\mathcal U}_3} \lp \hat{f}_3^{(2)} \rp^{\frac{\c_3}{\c_2}}  } \mE_{{\mathcal U}_3}
 \lp  \lp   \frac{\c_3}{\c_2}\frac{1}{\hat{f}_2^{(3)}}  \frac{d\hat{f}_2^{(3)}} {d\p_3}  \rp
 \lp \hat{f}_2^{(3)}\rp^{\c_3/\c_2}  \rp
 \rp.
     \end{eqnarray}

\subsection{$\nu$ derivative}

We trivially have
\begin{eqnarray}\label{eq:Cnuapp1eq2}
\frac{dI_R^{(3)} }{d\nu}& = &
 0,
     \end{eqnarray}
and recalling on (\ref{eq:3negprac24a0a0})  write
 \begin{eqnarray}\label{eq:Cnuapp1eq3}
  \frac{d\bar{a}_{0}^{(3)} } {d\nu} &  = & 0
\nonumber \\
 \frac{d\bar{a}_{2}^{(3)} } {d\nu} &  = & 1/\sqrt{2\nu}/\sqrt{1-\p_2}
 \nonumber \\
  \frac{d\hat{f}_{21}\lp \bar{a}_0^{(3)}\rp } {d\nu} &  = & \frac{\c_2}{4\gamma_{sq}^{(p)}} e^{\frac{\c_2}{4\gamma_{sq}^{(p)}}\nu}
\lp \frac{1}{2} - \frac{1}{2}\erfc\lp  \frac{ \bar{a}_0^{(3)}}{\sqrt{2}} \rp \rp
\nonumber \\
 \frac{d\hat{f}_{22}\lp \bar{a}_2^{(3)}\rp } {d\nu} & = &  -\frac{\c_2}{4\gamma_{sq}^{(p)}}  e^{-\frac{\c_2}{4\gamma_{sq}^{(p)}}\nu} \frac{1}{2}\erfc\lp  \frac{\bar{a}_2^{(2)}}{\sqrt{2}} \rp
 - e^{-\frac{\c_2}{4\gamma_{sq}^{(p)}}\nu}
\frac{1}{\sqrt{\pi}} e^{-\lp \frac{ \bar{a}_2^{(3)}}{\sqrt{2}}   \rp^2}  \frac{1}{\sqrt{2}}  \frac{d\bar{a}_2^{(3)} }{d\nu}
 \nonumber \\
  \frac{dD^{(3)}} {d\nu} & = &    \frac{d\bar{a}_{0}^{(3)} } {d\nu}
   \nonumber \\
  \frac{dF^{(3)}} {d\nu} & = &    \frac{d\bar{a}_{2}^{(3)} } {d\nu}.
\end{eqnarray}
 After solving the remaining integrals, a combination of (\ref{eq:3negprac24a0a0}) and (\ref{eq:Cnuapp1eq3}) then gives
 \begin{eqnarray}\label{eq:Cnuapp1eq7}
  \frac{d\hat{f}_2^{(3)}} {d\nu}  =
  \frac{d\hat{f}_{21}\lp \bar{a}_0^{(3)}\rp } {d\nu}
  +   \frac{d\hat{f}_{22}\lp \bar{a}_0^{(3)}\rp } {d\nu}
   +    \frac{d\hat{f}_{23}\lp I_1(\cA^{(3)})\rp } {d\nu}.
\end{eqnarray}
One then from (\ref{eq:Capp1eq1})  easily finds
\begin{eqnarray}\label{eq:Cnuapp1eq8}
 \frac{dI_L^{(2)}}{d\nu} & = &
 \frac{\alpha}{\c_3}\mE_{{\mathcal U}_4} \lp \frac{1}{\mE_{{\mathcal U}_3} \lp \hat{f}_3^{(2)} \rp^{\frac{\c_3}{\c_2}}  } \mE_{{\mathcal U}_3}
 \lp  \lp   \frac{\c_3}{\c_2}\frac{1}{\hat{f}_2^{(3)}}  \frac{d\hat{f}_2^{(3)}} {d\nu}  \rp
 \lp \hat{f}_2^{(3)}\rp^{\c_3/\c_2}  \rp
 \rp.
     \end{eqnarray}
Combining (\ref{eq:Capp1eq1}), (\ref{eq:Capp1eq1a0}), (\ref{eq:Cnuapp1eq2}), (\ref{eq:Cnuapp1eq7}), and (\ref{eq:Cnuapp1eq8}) we finally have
\begin{eqnarray}\label{eq:Cpapp1eq9}
  \frac{d  \bar{\psi}_{rd}^{(3)}}{d\nu}
 & = &
    -\frac{2-\alpha}{4\gamma_{sq}^{(p)}}
-
 \frac{\alpha}{\c_3}\mE_{{\mathcal U}_4} \lp \frac{1}{\mE_{{\mathcal U}_3} \lp \hat{f}_3^{(2)} \rp^{\frac{\c_3}{\c_2}}  } \mE_{{\mathcal U}_3}
 \lp  \lp   \frac{\c_3}{\c_2}\frac{1}{\hat{f}_2^{(3)}}  \frac{d\hat{f}_2^{(3)}} {d\nu}  \rp
 \lp \hat{f}_2^{(3)}\rp^{\c_3/\c_2}  \rp
 \rp.
     \end{eqnarray}

\subsection{$\q$ and $\gamma_{sq}^{(q)}$ derivative}

 All three $\q_2$, $\q_3$, and $\gamma_{sq}^{(q)}$ derivatives have been computed in \cite{Stojnicnegsphflrdt23}. For completeness we restate the obtained results
\begin{eqnarray}\label{eq:C2levder1}
   \frac{d\bar{\psi}_{rd}^{(3)}  }{d\q_2}
   & = &  (\c_2-\c_3)\Bigg(\Bigg. -\frac{\p_2}{2}
  +\frac{\q_2-\q_3}{2(2\gamma_{sq}^{(q)}-\c_2(1-\q_2))(2\gamma_{sq}^{(q)}-\c_2(1-\q_2)-\c_3(\q_2-\q_3))}
  \nonumber \\
& & +  \frac{\q_3}{2(2\gamma_{sq}^{(q)}-\c_2(1-\q_2)-\c_3(\q_2-\q_3))^2}   \Bigg.\Bigg)
\nonumber \\
   \frac{d\bar{\psi}_{rd}^{(3)}  }{d\q_3}
   & = &
    -\frac{1}{2}
\p_3\c_3
+  \frac{\c_3\q_3}{2(2\gamma_{sq}^{(q)}-\c_2(1-\q_2)-\c_3(\q_2-\q_3))^2}
\nonumber \\
   \frac{d\bar{\psi}_{rd}^{(3)} }{d\gamma_{sq}^{(q)}}
    & = &
    -1
-\Bigg(\Bigg. -\frac{1-\q_2}{2\gamma_{sq}^{(q)}(2\gamma_{sq}^{(q)}-\c_2(1-\q_2))}
 -\frac{\q_2-\q_3}{(2\gamma_{sq}^{(q)}-\c_2(1-\q_2))(2\gamma_{sq}^{(q)}-\c_2(1-\q_2)-\c_3(\q_2-\q_3))}
   \nonumber \\
& & -  \frac{\q_3}{(2\gamma_{sq}^{(q)}-\c_2(1-\q_2)-\c_3(\q_2-\q_3))^2}   \Bigg.\Bigg).
     \end{eqnarray}

\subsection{$\gamma_{sq}^{(p)}$ derivative}

Analogously to what happened on the second level, we now again observe
\begin{eqnarray} \label{eq:Cgamaapp1eq1}
   \frac{d\hat{f}_2^{(3)}}{d\gamma_{sq}^{(p)}}
   =-\frac{\c_2}{\gamma_{sq}^{(p)}}
   \frac{d\hat{f}_2^{(2)}}{d\c_2}.
 \end{eqnarray}
From (\ref{eq:Capp1eq1}) and (\ref{eq:Capp1eq1a0}) one then has
\begin{eqnarray} \label{eq:Cgamaapp1eq2}
  \frac{d  \bar{\psi}_{rd}^{(2)}}{\gamma_{sq}^{(p)}  }
 & = &
   1 + \frac{\nu(2-\alpha)}{4\lp\gamma_{sq}^{(p)}\rp^2}
-
 \frac{\alpha}{\c_3}\mE_{{\mathcal U}_4} \lp \frac{1}{\mE_{{\mathcal U}_3} \lp \hat{f}_3^{(2)} \rp^{\frac{\c_3}{\c_2}}  } \mE_{{\mathcal U}_3}
 \lp  \lp   \frac{\c_3}{\c_2}\frac{1}{\hat{f}_2^{(3)}}  \frac{d\hat{f}_2^{(3)}} {d\gamma_{sq}^{(p)}}  \rp
 \lp \hat{f}_2^{(3)}\rp^{\c_3/\c_2}  \rp
 \rp.
 \end{eqnarray}

\section{Proof of Corollary \ref{cor:3closedformrel1}}
\label{sec:appD}

\begin{proof}[Proof of Corollary \ref{cor:3closedformrel1}]
The first two equalities follow from (\ref{eq:C2levder1}) (detailed derivations are presented in equations (137)-(147) in \cite{Stojnicnegsphflrdt23}). We focus on the fourth equality. To facilitate the exposition we try to parallel the corresponding proof of Corollary \ref{cor:closedformrel1}. We first note that a combination of (\ref{eq:Cgamaapp1eq1})
and (\ref{eq:Cgamaapp1eq2}) gives
\begin{eqnarray} \label{eq:Daapp2eq1}
  \frac{d  \bar{\psi}_{rd}^{(2)}}{d\gamma_{sq}^{(p)}  }
 & = &
   1 + \frac{\nu(2-\alpha)}{4\lp\gamma_{sq}^{(p)}\rp^2}
+
\frac{\c_2}{\gamma_{sq}^{(p)}}
 \frac{\alpha}{\c_3}\mE_{{\mathcal U}_4} \lp \frac{1}{\mE_{{\mathcal U}_3} \lp \hat{f}_3^{(2)} \rp^{\frac{\c_3}{\c_2}}  } \mE_{{\mathcal U}_3}
 \lp  \lp   \frac{\c_3}{\c_2}\frac{1}{\hat{f}_2^{(3)}}  \frac{d\hat{f}_2^{(3)}} {d\c_2}  \rp
 \lp \hat{f}_2^{(3)}\rp^{\c_3/\c_2}  \rp
 \rp.
 \end{eqnarray}
Equalling the above derivative to zero implies
\begin{eqnarray} \label{eq:Daapp2eq2}
-\frac{\gamma_{sq}^{(p)}} {\c_2}\lp  1 + \frac{\nu(2-\alpha)}{4\lp\gamma_{sq}^{(p)}\rp^2} \rp
 & = &
 \frac{\alpha}{\c_3}\mE_{{\mathcal U}_4} \lp \frac{1}{\mE_{{\mathcal U}_3} \lp \hat{f}_3^{(2)} \rp^{\frac{\c_3}{\c_2}}  } \mE_{{\mathcal U}_3}
 \lp  \lp   \frac{\c_3}{\c_2}\frac{1}{\hat{f}_2^{(3)}}  \frac{d\hat{f}_2^{(3)}} {d\c_2}  \rp
 \lp \hat{f}_2^{(3)}\rp^{\c_3/\c_2}  \rp
 \rp.
 \end{eqnarray}
For $\bar{\psi}_{rd}^{(3)}=0$ (which is the critical condition for the injectivity capacity), one from (\ref{eq:Capp1eq1a0})  has
\begin{eqnarray}\label{eq:Daapp2eq3}
 \frac{\alpha}{\c_3}\mE_{{\mathcal U}_4}\log  \lp \mE_{{\mathcal U}_3} \lp \hat{f}_2^{(3)}\rp^{\frac{\c_3}{\c_2}}  \rp
  & = &
  \frac{1}{2}
(1-\p_2\q_2)\c_2   +  \frac{1}{2}
(\p_2\q_2-\p_3\q_3)\c_3     - I_R^{(3)}
  +\gamma_{sq}^{(p)} -\frac{\nu(2-\alpha)}{4\gamma_{sq}^{(p)}}.
     \end{eqnarray}
Equaling to zero the derivative in (\ref{eq:CXapp1eq9}) together with (\ref{eq:Daapp2eq3}) gives
\begin{align}\label{eq:Daapp2eq3a0}
 \frac{\alpha}{\c_2}\mE_{{\mathcal U}_4} \lp \frac{\mE_{{\mathcal U}_3}
 \lp  \lp  \log(\hat{f}_2^{(3)})  \rp
 \lp \hat{f}_2^{(3)}\rp^{\c_3/\c_2}  \rp}    {\mE_{{\mathcal U}_3} \lp \hat{f}_2^{(3)} \rp^{\frac{\c_3}{\c_2}}  }
 \rp
 & =
  \frac{\c_3}{2}
(\p_2\q_2-\p_3\q_3)
\nonumber \\
& \quad
-\c_3
\Bigg (\Bigg.
\frac{1}{2\c_3^2}\log \lp  \frac{(2\gamma_{sq}^{(q)}-\c_2(1-\q_2) -\c_3(\q_2-\q_3))}{(2\gamma_{sq}^{(q)}-\c_2(1-\q_2))}  \rp
\nonumber \\
& \quad
 + \frac{(\q_2-\q_3)}{2\c_3(2\gamma_{sq}^{(q)}-\c_2(1-\q_2) -\c_3(\q_2-\q_3))}
\nonumber \\
& \quad
+\frac{(\q_2-\q_3)\q_3}{2(2\gamma_{sq}^{(q)}-\c_2(1-\q_2)-\c_3(\q_2-\q_3))^2}
 \Bigg. \Bigg )
\nonumber \\
 & \quad
  \frac{1}{2}
(1-\p_2\q_2)\c_2   +  \frac{1}{2}
(\p_2\q_2-\p_3\q_3)\c_3   - I_R^{(3)}
  +\gamma_{sq}^{(p)} -\frac{\nu(2-\alpha)}{4\gamma_{sq}^{(p)}}.
     \end{align}
Equalling to zero the derivative in  (\ref{eq:Capp1eq9}) and utilizing ( \ref{eq:Daapp2eq2}) and (\ref{eq:Daapp2eq3a0}) , we further write
\begin{eqnarray}\label{eq:Daapp2eq3a1}
0
& = &
  \frac{\c_2}{2}
(1-\p_2\q_2)
 -\c_2
\Bigg (\Bigg.
\frac{1}{2\c_2^2}\log\lp \frac{ 2\gamma_{sq}^{(q)} -\c_2(1-\q_2)}   {2\gamma_{sq}^{(q)} } \rp
+ \frac{1-\q_2}{2\c_2(2\gamma_{sq}^{(q)}-\c_2(1-\q_2))} \nonumber \\
& &
+ \frac{1-\q_2}{2\c_3(2\gamma_{sq}^{(q)}-\c_2(1-\q_2) -\c_3(\q_2-\q_3))}
- \frac{1-\q_2}{2\c_3(2\gamma_{sq}^{(q)}-\c_2(1-\q_2))}
\nonumber \\
& &
+
\frac{(1-\q_2)\q_3}{2(2\gamma_{sq}^{(q)}-\c_2(1-\q_2)-\c_3(\q_2-\q_3))^2}
 \Bigg. \Bigg )
\nonumber \\
 & &
+  \frac{\c_3}{2}
(\p_2\q_2-\p_3\q_3)
 -\c_3
\Bigg (\Bigg.
\frac{1}{2\c_3^2}\log \lp  \frac{(2\gamma_{sq}^{(q)}-\c_2(1-\q_2) -\c_3(\q_2-\q_3))}{(2\gamma_{sq}^{(q)}-\c_2(1-\q_2))}  \rp
\nonumber \\
& &
 + \frac{(\q_2-\q_3)}{2\c_3(2\gamma_{sq}^{(q)}-\c_2(1-\q_2) -\c_3(\q_2-\q_3))}
+\frac{(\q_2-\q_3)\q_3}{2(2\gamma_{sq}^{(q)}-\c_2(1-\q_2)-\c_3(\q_2-\q_3))^2}
 \Bigg. \Bigg )
\nonumber \\
 & &
  \frac{1}{2}
(1-\p_2\q_2)\c_2   +  \frac{1}{2}
(\p_2\q_2-\p_3\q_3)\c_3   - I_R^{(3)}
  +\gamma_{sq}^{(p)} -\frac{\nu(2-\alpha)}{4\gamma_{sq}^{(p)}}
  \nonumber \\
& &
+
\gamma_{sq}^{(p)}\lp  1 + \frac{\nu(2-\alpha)}{4\lp\gamma_{sq}^{(p)}\rp^2} \rp
\nonumber \\
& = &
  \c_2
(1-\p_2\q_2)
 -\c_2
\Bigg (\Bigg.
  \frac{1-\q_2}{2\c_2(2\gamma_{sq}^{(q)}-\c_2(1-\q_2))}
+ \frac{1-\q_2}{2\c_3(2\gamma_{sq}^{(q)}-\c_2(1-\q_2) -\c_3(\q_2-\q_3))}
  \nonumber \\
& &
- \frac{1-\q_2}{2\c_3(2\gamma_{sq}^{(q)}-\c_2(1-\q_2))}
+
\frac{(1-\q_2)\q_3}{2(2\gamma_{sq}^{(q)}-\c_2(1-\q_2)-\c_3(\q_2-\q_3))^2}
 \Bigg. \Bigg )
\nonumber \\
 & &
+  \c_3
(\p_2\q_2-\p_3\q_3)
\nonumber \\
& &
-\c_3
\Bigg (\Bigg.
   \frac{(\q_2-\q_3)}{2\c_3(2\gamma_{sq}^{(q)}-\c_2(1-\q_2) -\c_3(\q_2-\q_3))}
+ \frac{(\q_2-\q_3)\q_3}{2(2\gamma_{sq}^{(q)}-\c_2(1-\q_2)-\c_3(\q_2-\q_3))^2}
 \Bigg. \Bigg )
\nonumber \\
 & &
  -\gamma_{sq}^{)q)}  +2\gamma_{sq}^{(p)}
-\frac{\q_3}{2(2\gamma_{sq}^{(q)}-\c_2(1-\q_2)-\c_3(\q_2-\q_3))}
  \nonumber \\
  & = &
  \c_2
(1-\p_2\q_2)
 -
   \frac{\c_2(1-\q_2)}{2\c_2(2\gamma_{sq}^{(q)}-\c_2(1-\q_2))}
- \frac{\c_2(1-\q_2)+\c_3(\q_2-\q_3)}{2\c_3(2\gamma_{sq}^{(q)}-\c_2(1-\q_2) -\c_3(\q_2-\q_3))}
  \nonumber \\
& &
+ \frac{\c_2(1-\q_2)}{2\c_3(2\gamma_{sq}^{(q)}-\c_2(1-\q_2))}
-
\frac{\c_2(1-\q_2)\q_3+\c_3(\q_2-\q_3)\q_3}{2(2\gamma_{sq}^{(q)}-\c_2(1-\q_2)-\c_3(\q_2-\q_3))^2}
 \nonumber \\
 & &
+  \c_3
(\p_2\q_2-\p_3\q_3)
 \nonumber \\
 & &
  -\gamma_{sq}^{)q)}  +2\gamma_{sq}^{(p)}
  -\frac{\q_3}{2(2\gamma_{sq}^{(q)}-\c_2(1-\q_2)-\c_3(\q_2-\q_3))}.
     \end{eqnarray}
Equaling to zero  the derivatives in (\ref{eq:C2levder1}), we find
\begin{eqnarray}\label{eq:Daapp2eq3a2}
   \frac{\p_2-\p_3} {\q_2-\q_3}
   & = &  \frac{1}{(2\gamma_{sq}^{(q)}-\c_2(1-\q_2))(2\gamma_{sq}^{(q)}-\c_2(1-\q_2)-\c_3(\q_2-\q_3))}
 \nonumber \\
\frac{\p_3}{\q_3}
   & = &
   \frac{1}{(2\gamma_{sq}^{(q)}-\c_2(1-\q_2)-\c_3(\q_2-\q_3))^2}
\nonumber \\
   \frac{1-\p_2} {1-\q_2}
    & = &
  \frac{1}{2\gamma_{sq}^{(q)}(2\gamma_{sq}^{(q)}-\c_2(1-\q_2))}.
     \end{eqnarray}
Combining (\ref{eq:Daapp2eq3a1}) and (\ref{eq:Daapp2eq3a2}) we then have
\begin{eqnarray}\label{eq:Daapp2eq3a3}
0
    & = &
  \c_2
(1-\p_2\q_2)
 - \gamma_{sq}^{(q)}(1-\p_2)
- \frac{\c_2(1-\q_2)+\c_3(\q_2-\q_3)}{2\c_3(2\gamma_{sq}^{(q)}-\c_2(1-\q_2) -\c_3(\q_2-\q_3))}
  \nonumber \\
& &
+ \frac{\c_2(1-\q_2)}{2\c_3(2\gamma_{sq}^{(q)}-\c_2(1-\q_2))}
-
\frac{\c_2(1-\q_2)\q_3+\c_3(\q_2-\q_3)\q_3}{2(2\gamma_{sq}^{(q)}-\c_2(1-\q_2)-\c_3(\q_2-\q_3))^2}
 \nonumber \\
 & &
+  \c_3
(\p_2\q_2-\p_3\q_3)
 \nonumber \\
 & &
  -\gamma_{sq}^{)q)}  +2\gamma_{sq}^{(p)}
  -\frac{\q_3}{2(2\gamma_{sq}^{(q)}-\c_2(1-\q_2)-\c_3(\q_2-\q_3))}
  \nonumber \\
      & = &
  \c_2
(1-\p_2\q_2)
 - \gamma_{sq}^{(q)}(1-\p_2)
- \frac{\gamma_{sq}^{(q)}(\q_2-\q_3)}{(2\gamma_{sq}^{(q)}-\c_2(1-\q_2))(2\gamma_{sq}^{(q)}-\c_2(1-\q_2) -\c_3(\q_2-\q_3))}
  \nonumber \\
& &
 -
\gamma_{sq}^{(q)}\p_3 + \frac{1}{2}\sqrt{\p_3\q_3}
+  \c_3
(\p_2\q_2-\p_3\q_3)
   -\gamma_{sq}^{)q)}  +2\gamma_{sq}^{(p)}  -\frac{1}{2}\sqrt{\p_3\q_3}
   \nonumber \\
      & = &
  \c_2
(1-\p_2\q_2)
 - \gamma_{sq}^{(q)}(1-\p_2)
- \gamma_{sq}^{(q)}(\p_2-\p_3)
  \nonumber \\
& &
 -
\gamma_{sq}^{(q)}\p_3
+  \c_3
(\p_2\q_2-\p_3\q_3)
   -\gamma_{sq}^{)q)}  +2\gamma_{sq}^{(p)}
   \nonumber \\
      & = &
  \c_2
(1-\p_2\q_2)
 +  \c_3
(\p_2\q_2-\p_3\q_3)
   -2\gamma_{sq}^{)q)}  +2\gamma_{sq}^{(p)}.
     \end{eqnarray}
From (\ref{eq:Daapp2eq3a2}) we also find
\begin{eqnarray}\label{eq:Daapp2eq3a4}
\gamma_{sq}^{(q)}
& = &
\frac{1}{2}\frac{1-\q_2}{1-\p_2} \frac{\p_2-\p_3}{\q_2-\q_3}\sqrt{\frac{\q_3}{\p_3}}
\nonumber \\
 \c_2    & = &
  \frac{2\gamma_{sq}^{(q)}}{ (1-\q_2) }-  \frac{1}{2\gamma_{sq}^{(q)} (1-\p_2) }
=
 \frac{1}{1-\p_2} \frac{\p_2-\p_3}{\q_2-\q_3}\sqrt{\frac{\q_3}{\p_3}}
-
 \frac{1}{1-\q_2} \frac{\q_2-\q_3}{\p_2-\p_3}\sqrt{\frac{\p_3}{\q_3}}
  \nonumber \\
\c_3    & = &
-  \frac{2\gamma_{sq}^{(q)}(1-\p_2)  } { (\p_2-\p_3) (1-\q_2)}+  \frac{1-\q_2}{ (2\gamma_{sq}^{(q)} (1-\p_2 ))(\q_2-\q_3) }
=
\frac{1}{\p_2-\p_3}\sqrt{\frac{\p_3}{\q_3}}
-\frac{1}{\q_2-\q_3}\sqrt{\frac{\q_3}{\p_3}}.
     \end{eqnarray}
Combining (\ref{eq:Daapp2eq3a3}) and  (\ref{eq:Daapp2eq3a4}) one further has
\begin{eqnarray}\label{eq:Daapp2eq3a5}
0
        & = &
  \c_2
(1-\p_2) +\c_2(\p_2-\p_2\q_2)
+  \c_3
(\p_2\q_2-\p_3\q_3)
   -2\gamma_{sq}^{)q)}  +2\gamma_{sq}^{(p)}
\nonumber \\
        & = &
  \frac{\p_2-\p_3}{\q_2-\q_3}\sqrt{\frac{\q_3}{\p_3}}
 -\frac{1}{2\gamma_{sq}^{(q)}}
 + 2\gamma_{sq}^{(q)}\p_2
- \frac{\q_2-\q_3}{\p_2-\p_3}\sqrt{\frac{\p_3}{\q_3}}\p_2
 +  \c_3
(\p_2\q_2-\p_3\q_3)
   -2\gamma_{sq}^{)q)}  +2\gamma_{sq}^{(p)}
\nonumber \\
        & = &
  \frac{\p_2-\p_3}{\q_2-\q_3}\sqrt{\frac{\q_3}{\p_3}}
 -\frac{1}{2\gamma_{sq}^{(q)}}
 + 2\gamma_{sq}^{(q)}\p_2
- \frac{\q_2-\q_3}{\p_2-\p_3}\sqrt{\frac{\p_3}{\q_3}}\p_2
  \nonumber \\
& & +  \c_3\q_2(\p_2-\p_3) +\c_3\p_3(\q_2-\q_3)
   -2\gamma_{sq}^{)q)}  +2\gamma_{sq}^{(p)}
\nonumber \\
        & = &
  \frac{\p_2-\p_3}{\q_2-\q_3}\sqrt{\frac{\q_3}{\p_3}}
 -\frac{1}{2\gamma_{sq}^{(q)}}
 + 2\gamma_{sq}^{(q)}\p_2
- \frac{\q_2-\q_3}{\p_2-\p_3}\sqrt{\frac{\p_3}{\q_3}}\p_2
 \nonumber \\
& & +  \lp \frac{1}{\p_2-\p_3}\sqrt{\frac{\p_3}{\q_3}}
-\frac{1}{\q_2-\q_3}\sqrt{\frac{\q_3}{\p_3}}  \rp \q_2(\p_2-\p_3)
\nonumber \\
& & +
\lp  \frac{1}{\p_2-\p_3}\sqrt{\frac{\p_3}{\q_3}}
-\frac{1}{\q_2-\q_3}\sqrt{\frac{\q_3}{\p_3}}   \rp   \p_3(\q_2-\q_3)
   -2\gamma_{sq}^{)q)}  +2\gamma_{sq}^{(p)}
\nonumber \\
        & = &
  -\frac{1}{2\gamma_{sq}^{(q)}}
- (\q_2-\q_3)\sqrt{\frac{\p_3}{\q_3}}
    +   \sqrt{\frac{\p_3}{\q_3}} \q_2
 +\lp
-\frac{1}{\q_2-\q_3}\sqrt{\frac{\q_3}{\p_3}}   \rp   \p_3(\q_2-\q_3)
     +2\gamma_{sq}^{(p)}
\nonumber \\
        & = &
  -\frac{1}{2\gamma_{sq}^{(q)}}
- (-\q_3)\sqrt{\frac{\p_3}{\q_3}}
   +\lp
-\frac{1}{\q_2-\q_3}\sqrt{\frac{\q_3}{\p_3}}   \rp   \p_3(\q_2-\q_3)
     +2\gamma_{sq}^{(p)}
\nonumber \\
        & = &
  -\frac{1}{2\gamma_{sq}^{(q)}}
        +2\gamma_{sq}^{(p)}.
             \end{eqnarray}
Keeping in mind that $\hat{\gamma}_{sq}^{(q)}$ and $\hat{\gamma}_{sq}^{(p)}$ are precisely the solutions of zero-derivatives equations, one after connecting beginning and end in the last sequence of equalities completes the proof.
\end{proof}

\end{document}